\theoremstyle{plain}
\titleformat{\section}[block]{\color{black}\Large\bfseries}{\thesection}{1em}{}
\titleformat{\subsection}[hang]{\color{black}\large\bfseries}{\thesubsection}{1em}{}
\titleformat{\subsubsection}[hang]{\color{black}\large\bfseries}{\thesubsubsection}{1em}{}
\theoremstyle{plain}
\newtheorem{thm}{Theorem}
\newtheorem{lemma}[thm]{Lemma}
\newtheorem{cor}[thm]{Corollary}
\newtheorem{assumption}{Assumption}
\newtheorem{rmk}{Remark}
\newtheorem*{assumption*}{Assumption}
\newtheorem{defn}[thm]{Definition}
\newtheorem{fact}[thm]{Fact}
\newtheorem{goal}{Goal}
\newtheorem*{conj*}{Conjecture}
\newtheorem*{defn*}{Definition}
\newtheorem*{note*}{Notation}
\newtheorem*{fact*}{Fact}
\newtheorem*{ques*}{Question}
\newtheorem*{exer*}{Exercise}
\newtheorem*{prob*}{Problem}
\newtheorem*{algo*}{Algorithm}
\Crefname{defn}{Definition}{Definitions}
\Crefname{definition}{Definition}{Definitions}
\Crefname{rmk}{Remark}{Remarks}
\Crefname{prop}{Proposition}{Propositions}
\Crefname{thm}{Theorem}{Theorems}
\Crefname{theorem}{Theorem}{Theorems}
\Crefname{cor}{Corollary}{Corollaries}
\Crefname{lemma}{Lemma}{Lemmas}
\Crefname{algo}{Algorithm}{Algorithms}
\Crefname{ex}{Example}{Examples}
\Crefname{answer}{Answer}{Answers}
\Crefname{ques}{Question}{Questions}
\Crefname{prob}{Problem}{Problems}
\Crefname{assumption}{Assumption}{Assumptions}
\Crefname{note}{Notation}{Notations}
\Crefname{fact}{Fact}{Facts}
\Crefname{exer}{Exercise}{Exercises}
\Crefname{conj}{Conjecture}{Conjectures}
\Crefname{claim}{Claim}{Claims}
\Crefname{figure}{Figure}{Figures}
\Crefname{subsection}{Subsection}{Subsections}
\Crefname{section}{Section}{Sections}
\Crefname{appendix}{Appendix}{Appendices}
\Crefname{table}{Table}{Tables}
\Crefname{algocf}{Algorithm}{Algorithms}
\crefname{algocfproc}{Algorithm}{Algorithms}
\patchcmd{\algorithmic}{\addtolength{\ALC@tlm}{\leftmargin} }{\addtolength{\ALC@tlm}{\leftmargin}}{}{}
\newcommand{\nonl}{\renewcommand{\nl}{\let\nl}}
\newcommand\numberthis{\addtocounter{equation}{1}\tag{\theequation}}
\let\cref@old@stepcounter\stepcounter
\def\stepcounter#1{%
  \cref@old@stepcounter{#1}%
  \cref@constructprefix{#1}{\cref@result}%
  \@ifundefined{cref@#1@alias}%
    {\def\@tempa{#1}}%
    {\def\@tempa{\csname cref@#1@alias\endcsname}}%
  \protected@edef\cref@currentlabel{%
    [\@tempa][\arabic{#1}][\cref@result]%
    \csname p@#1\endcsname\csname the#1\endcsname}}
\newcommand{\mytag}[2]{%
  \text{#1}%
  \@bsphack
  \begingroup
    \@onelevel@sanitize\@currentlabelname
    \edef\@currentlabelname{%
      \expandafter\strip@period\@currentlabelname\relax.\relax\@@@%
    }%
    \protected@write\@auxout{}{%
      \string\newlabel{#2}{%
        {#1}%
        {\thepage}%
        {\@currentlabelname}%
        {\@currentHref}{}%
      }%
    }%
  \endgroup
  \@esphack
}
\definecolor{aqua}{rgb}{0.0, 1.0, 1.0}
\definecolor{caribbeangreen}{rgb}{0.0, 0.8, 0.6}
\definecolor{azure}{rgb}{0.0, 0.5, 1.0}
\definecolor{charcoal}{rgb}{0.21, 0.27, 0.31}
\def\clearwf{\par{\count@\c@WF@wrappedlines\zz}\par}
\def\zz{{%
\ifnum\count@>\@ne
\noindent\mbox{zz}\\%
\advance\count@\m@ne
\expandafter\zz
\else
\ifhmode\unskip\unpenalty\fi
\fi}}
\title{
Tracking Most Significant Shifts in Infinite-Armed Bandits
}
\author{
Joe Suk \\
Columbia University\\
\href{mailto:joe.suk@columbia.edu}{{ \texttt{joe.suk@columbia.edu}}}%
\and
Jung-hun Kim\\
Seoul National University\\
\href{mailto:junghunkim@snu.ac.kr}{{ \texttt{junghunkim@snu.ac.kr}}}%
}
\date{}
\begin{document}
\maketitle

\begin{abstract}
We study an infinite-armed bandit problem where actions' mean rewards are initially sampled from a {\em reservoir distribution}.
%\citep{berry97,wang08,bonald13,carpentier15}
%and then change with rested non-stationarity.
%In this setting, achievable regret rates and arms' initial rewards depend critically on the shape of the reservoir distribution, often taken to be $\Unif\{[0,1]\}$.
Most prior works in this setting focused on stationary rewards \citep{berry97,wang08,bonald13,carpentier15} with the more challenging adversarial/non-stationary variant only recently studied in the context of rotting/decreasing rewards \citep{kim22,kim24}.
Furthermore, optimal regret upper bounds were only achieved using parameter knowledge of non-stationarity and only known for certain regimes of regularity of the reservoir.
This work shows the first parameter-free optimal regret bounds while also relaxing these distributional assumptions.

We first introduce a blackbox scheme to convert a finite-armed MAB algorithm designed for near-stationary environments into a parameter-free algorithm for the infinite-armed non-stationary problem with optimal regret guarantees.

We next study a natural notion of {\em significant shift} for this problem inspired by recent developments in finite-armed MAB \citep{suk22}.
We show that tighter regret bounds in terms of significant shifts can be adaptively attained by employing a randomized variant of elimination within our blackbox scheme.
Our enhanced rates only depend on the rotting non-stationarity and thus exhibit an interesting phenomenon for this problem where rising non-stationarity does not factor into the difficulty of non-stationarity.
%Along the way, we develop the first high-probability regret bounds for this setting, which were unknown even in the stationary setup and may be of independent interest.
\end{abstract}

\section{Introduction}

We study the multi-armed bandit (MAB) problem, where an agent sequentially plays arms from a set $\mc{A}$, based on partial and random feedback for previously played arms called rewards.
The agent's goal is to maximize earned rewards.

Much of the classical literature \citep[see][for surveys]{bubeck2012a,slivkinsbook,lattimore} focuses on finite armed bandits where $\mc{A} = [K]$ for some fixed $K \in \mb{N}$.
The theory here then typically assumes a large time horizon of play $T$ relative to $K$.
However, in practice, the number of arms can be prohibitively large as is the case in recommendation engines or adaptive drug design motivating the so-called {\em many-armed}, or {\em infinite-armed}, model.

At the same time, another practical reality is that of changing reward distributions or {\em non-stationarity}.
While there has been a surge of works here \citep{kocsis2006,yu2009,garivier2011,mellor13,liu2018,auer2019,chen2019,cao2019,manegueu2021,wei2021,suk22,jia2023,abbasi22,suk24}, most works here again focus on the finite-armed problem.

This work studies infinite-armed bandits where mean rewards of actions are initially drawn from a {\em reservoir distribution} and evolve over time under rested non-stationarity.  While much of the existing literature on this topic focuses on stationary rewards \citep{berry97,wang08,bonald13,carpentier15}, the more challenging adversarial or non-stationary scenario has only recently been explored in the context of rotting (i.e., decreasing) rewards \citep{kim22,kim24}.
\citet{kim24}'s state-of-the-art algorithm for rotting bandits relies on prior knowledge of non-stationarity parameters and further regularity assumptions on the reservoir distribution to attain optimal regret bounds.
However, these assumptions can be impractical in real-world applications.

This work studies a broader non-stationary model where rewards are decided by an adaptive adversary and aims to derive regret bounds without requiring algorithmic knowledge of non-stationarity.
%In this paper, we address the more general case of non-stationarity with arbitrary reward changes, without requiring knowledge of non-stationarity parameters.
%This approach aligns with nonstationarity explored in multi-armed and structured bandits \citep{garivier2011, zhao2020, russac2019weighted, auer2019,chen2019}.
Additionally, we go beyond the task of attaining optimal regret bounds as posed by \citet{kim24}, and show enhanced regret bounds which can be tighter (i.e., possibly near-stationary rates) despite large non-stationarity, thus more properly capturing the theoretical limits of learnability in changing environments.
%study more optimistic measures of non-stationarity which more properly capture the theoretical limits of learnability in changing environments.
Such insights are inspired by similar developments in the finite-armed analogue \citep{suk22}, where substantial changes in best arm, called {\em significant shifts}, can be detected leading to more conservative procedures which don't overestimate the severity of non-stationarity.

\subsection{More on Related Works}

The most relevant works are \citet{kim22,kim24}.
The first of these works studies a simpler model where the reservoir distribution is uniform on $[0,1]$ and there's a fixed upper bound $\rho$ on the magnitude of round-to-round non-stationarity.
\citet{kim22} show the minimax regret rate is $\rho^{1/3}\cdot T + \sqrt{T}$ and derive a matching upper bound, up to $\log$ terms, but using algorithmic knowledge of $\rho$ as well as a suboptimal regret upper bound without knowledge of $\rho$.
%Thus, it's left open whether  attain the optimal regret bound for all values of $\rho$.

\citet{kim24} study a general setting where the reservoir distribution for initial mean reward $\mu_0(a)$ of arm $a$ satisfies $\mb{P}(\mu_0(a) > 1-x) = \Theta(x^{\beta})$ for all $x\in (0,1)$.
In this case, the minimax regret rate is order $\min\{ V^{\frac{1}{\beta+2}}\cdot T^{\frac{\beta+1}{\beta+2}}, L^{\frac{1}{\beta+1}}\cdot T^{\frac{\beta}{\beta+1}}\}$ in terms of number of changes $L$ in rewards or total variation $V$ (quantifying magnitude of changes over time).
With knowledge of $L$ and $V$, they show a matching regret upper bound for $\beta \geq 1$ and suboptimal bounds for $\beta<1$ or without knowledge of $L$ and $V$.

We also note that non-stationarity is well-studied in more structured many or infinite-armed bandit settings such as (generalized) linear \citep{cheung2019learning,russac2020algorithms,zhao2020,kim20,wei2021}, kernel \citep{hong23,iwazaki24}, or convex bandits \citep{wang22}.
To contrast, our infinite-armed setting does not assume any metric structure on the arm space and so these works are not easily to comparable to this paper.
We also note there are no known results on more nuanced measures of non-stationarity, like the significant shifts, even for such structured settings.

%We consider the non-stationary variant of the problem, where rewards are set by an adaptive adversary.
%\citet{kim24} study rotting infinite-armed bandits, where rewards can only decrease, and show regret lower bounds of order $V^{1/3} T^{2/3} + \sqrt{T}$ and $\sqrt{LT} \vee V$ and nearly matching upper bounds with knowledge of $V,L$.
%Without knowledge, they get a bound of $ ( V^{1/3} T^{2/3} + T^{3/4} ) \land ( ( \sqrt{LT} + T^{3/4} ) \vee V)$ using a bandit-over-bandit procedure and assuming the per-round variations are in some constrained interval $[0,\rho_t]$ where $\rho_t$ is obliviously determined.

\subsection{Contributions}

\begin{itemize}
	\item We show the first optimal and adaptive (a.k.a. parameter-free) regret upper bounds for non-stationary infinite-armed bandits (\Cref{cor:elim-bounds}). % using a simple blackbox scheme to convert a finite-armed stochastic MAB algorithm into an adaptive algorithm for the infinite-armed setting.
	In fact, our bounds are expressed in terms of tighter and more optimistic measures of non-stationarity (\Cref{subsec:nonstat} and \Cref{defn:sig-shift}) new to this work.
	This resolves open questions of \citet{kim22,kim24}.

	We note our procedures are substantially different from \citet{kim22,kim24} who rely on explore-then-exploit strategies whereas we revisit the subsampling approach of \citet{bayati20} which partially reduces the problem to analyzing finite-armed bandits.
	\item Along the way, we develop the first high-probability regret bounds for the infinite-armed setting. %which were unknown even in the stationary setup \citep{bayati20}.
	Our regret upper bound in \Cref{cor:elim-bounds} also relaxes distributional assumptions on the reservoir distribution, not requiring an upper bound on the masses of randomly sampled rewards.
	Both such generalizations were unknown in prior works even in the stationary setting.
	%\item We further express our regret bounds in terms of a notion of {\em significant shift}, following analogous notions in finite-armed MAB \citep{suk22}.
	%In particular, we show an optimal regret upper bound of $V^{1/3} T^{2/3} \land \sqrt{\Lsig T}$ in terms of $\Lsig$ significant shifts.
	%%\item We further show this same elimination-style algorithm can actually attain the optimal regret upper bound $V^{1/3} T^{2/3} \land \sqrt{L T}$ in terms of total variation $V$ or number of shifts $L$.
	%%This involves doing a sharper version of the analysis of \citet{suk22} where we bound the growth of candidate arm set sizes in periods of large variation.
	%%\item Finally, we show the same elimination-style algorithm can attain a tighter and optimal $\sqrt{ ST}$ regret in terms of $S$ significant shifts under the subclass of {\em rotting infinite-armed bandits}.
	%This involves doing a gap-dependent regret analysis similar to \citet{suk24}.
	\item To our knowledge, our work is the first to develop adaptive dynamic regret bounds of the style $V^{1/3} T^{2/3} \land \sqrt{LT}$ with bandit feedback and adaptively adversarial changes.
	Notably, such results are yet unknown even in the finite-armed setting.
	\item We validate our findings via experiments on synthetic data, showing our procedures perform better than the state-of-art algorithms for rotting infinite-armed bandits.
	%\item We also achieve generalized versions of all our regret upper bounds for the setting of $\beta$-regular reservoir distributions.
	%e note that, even in the stationary setting, optimal regret upper bounds were not previously known for $\beta<1$ and even suboptimal bounds could only be achieved using stronger assumptions on the density of the reservoir distribution.
	%hus, our analysis technique departs significantly (not relying on bounding inverse gap quantities appearing in logarithmic regret bounds) and of independent interest.
	%\item One of our main technical novelties is to use self-bounding tricks and tighter variance-based conncentration bounds for this infinite-armed setting.
\end{itemize}

\section{Setup}

\subsection{Non-Stationary Infinite-Armed Bandits}

We consider a multi-armed bandit with infinite armset $\mc{A}$.
At each round $t$, the agent plays an arm $a_t$, choosing either to newly sample $a_t$ from $\mc{A}$ or to play an already sampled arm among the previously chosen arms $\{a_1,\ldots,a_{t-1}\}$.

When the agent samples arm $a_t \in \mc{A}$ at round $t$, it observes a random reward $Y_t(a_t) \in [0,1]$ with mean $\mu_t(a_t) \in [0,1]$ whose value is randomly drawn from a {\em mean reservoir distribution}.
The reward of this chosen arm in the subsequent round is then decided according to an adaptive adversary with access to prior decisions $\{a_s\}_{s\leq t}$ and observations $\{ Y_t(a_s)\}_{s\leq t}$.
%\footnote{The case of sub-Gaussian rewards with known sub-Gaussian parameter is a natural extension. \jkc{In my opinion, sub-gaussian assumption makes a difference when $\beta<1$. For instance, when we think of a gaussian, we cannot avoid $\sqrt{T}$ even for $\beta<1$.}.} $Y_t(a) \in [0,1]$ with mean $\mu_t(a)$.
%As in prior works \citep{carpentier15,wang08},
%A {\em mean reservoir distribution} determines the initial mean reward $\mu_0(a)$ of an arm.
%In the body of this paper, for simplicity, we'll take the reservoir distribution to be $\Unifu$ and the extension for more general $\beta$-regular distributions is deferred to \Cref{app:beta}
As in prior works \citep{berry97,wang08,carpentier15,bayati20,kim24}, we assume the reservoir distribution is $\beta$-regular, parametrized by a shape parameter $\beta > 0$.

\begin{assumption}[$\beta$-Regular Reservoir Distribution]\label{ass:reservoir}
	We assume a {\bf $\beta$-regular} reservoir distribution for some $\beta > 0$: there exists constants $\kappa_1,\kappa_2 > 0$ such that for all $x \in [0,1]$:
\[
	\kappa_1 \cdot x^{\beta} \leq \mb{P}(\mu_0(a) > 1 - x) \leq \kappa_2 \cdot x^{\beta}.
\]
%In particular, each time an arm $a$ is first sampled by the agent, its initial mean reward $\mu_0(a)$ is drawn i.i.d. from the reservoir distribution. %(in the sequel, from a distribution with $\mb{P}(\mu_0(a) > 1 - x) = \Theta(x^{\beta})$).
\end{assumption}

\begin{rmk}
	%Our setup differs only inslightly from that of finite-armed non-stationary bandits \citep{besbes2014} where the rewards of unplayed arms can change at a given round.
	Prior works on non-stationary bandits \citep{besbes2014,kim22,kim24} allow for unplayed arms' rewards to change each round, which is equivalent to our setting with a different accounting of changes, as changes in yet unplayed arms do not affect performance.
	%This simply leads to a different accounting of changes in the measures of non-stationarity used (see \Cref{subsec:nonstat} for a full comparison).
	%Our setup is the same as that of the prior works on non-stationary infinite-armed bandits \citet{kim22,kim24}.
\end{rmk}

Let $\delta_t(a) := 1 - \mu_t(a)$ be the gap of arm $a$ at round $t$.
Then, the {\em cumulative regret} is $\bld{R}_T := \sum_{t=1}^T \delta_t(a_t)$.
Let $\delta_t(a,a') := \mu_t(a) - \mu_t(a')$ be the {\em relative regret} of arm $a'$ to $a$ at round $t$.

\subsection{Non-Stationarity Measures}\label{subsec:nonstat}

Let $V := \sum_{t=2}^T |\mu_{t-1}(a_{t-1}) - \mu_t(a_{t-1})|$ denote the {\em realized total variation} which measures the total variation of changes in mean rewards through the sequence of played arms.
In \Cref{sec:sig-shifts}, we also consider the {\em total realized rotting variation} $V_R := \sum_{t=2}^T (\mu_{t-1}(a_{t-1}) - \mu_t(a_{t-1}))_+$ which sums the magnitudes of rotting in rewards over time.

Let $L := \sum_{t=2}^T \pmb{1}\{ \mu_t(a_{t-1}) \neq \mu_{t-1}(a_{t-1})\}$ denote the {\em realized count of changes}.
We also consider the {\em realized number of rotting changes} $L_R := \sum_{t=2}^T \pmb{1}\{ \mu_t(a_{t-1}) < \mu_{t-1}(a_{t-1})\}$.

To contrast, the prior works \citet{kim22,kim24} consider a priori upper bounds on $V, L$ (i.e., the adversary is constrained to incur non-stationarity at most size $V$ or count $L$), and so only show expected regret bounds in terms of such bounds.
Our work establishes stronger high-probability regret bounds in terms of our tighter realized values of $V, L$.

\section{Regret Lower Bounds}\label{sec:lower}

\citet[Theorem 4.1 and 4.2]{kim24} show regret lower bounds of order $(L_R+1)^{\frac{1}{\beta+1}} \cdot T^{\frac{\beta}{\beta+1}} \land ( V_R^{\frac{1}{\beta+2}} \cdot T^{\frac{\beta+1}{\beta+2}} + T^{\frac{\beta}{\beta+1}} )$ for the rotting infinite-armed bandit problem, which is a sub-case of our non-stationary setup.
Our regret upper bound in \Cref{cor:elim-bounds} matches this lower bound up to $\log$ factors, without algorithmic knowledge of $L_R,V_R$.

\section{A Blackbox for Optimally Tracking Unknown Non-Stationarity}\label{sec:blackbox}

\subsection{Intuition for Subsampling}\label{subsec:subsampling}

A key idea, used for the stationary problem in \citet{wang08,bayati20}, is that of {\em subsampling} a fixed set of arms from the reservoir.
The main algorithmic design principle is to run a finite-armed MAB algorithm over this subsample.
The choice of subsample size is key here and exhibits its own exploration-exploitation tradeoff, appearing through a natural regret decomposition with respect to the subsample, which we'll denote by $\mc{A}_0 \subseteq \mc{A}$:
%For minimizing regret over $T$ rounds, the minimax optimal regret of $\sqrt{T}$ can be achieved by subsampling $\sqrt{T}$ arms.
%In particular, within a sample of $\sqrt{T}$ arms, with high probability at least one arm will have reward at least $1-1/\sqrt{T}$.
%Then, it turns out one can reduce the problem to finite-armed bandits by subsampling $\sqrt{T}$ arms and then bounding regret between the subsampled arms:
%Thentural regret decomposition induced by a subsampled set of arms $\mc{A}_0$:
\begin{equation}\label{eq:regret-decomp}
	\bld{R}_T = \underbrace{ \sum_{t=1}^T  \min_{a \in \mc{A}_0} \delta_t(a) }_{\text{Regret of best subsampled arm}}  + \underbrace{ \sum_{t=1}^T  \max_{a \in \mc{A}_0} \delta_t(a,a_t)}_{\text{Regret to best subsampled arm}}.
\end{equation}
Suppose there are $K := |\mc{A}_0|$ subsampled arms.
%In what follows, we'll provide intuition for the $\beta=1$ case.
%In fact, let's simplify matters by assuming the reservoir distribution is $\Unifu$.
One can show (e.g., \Cref{lem:best-initial-arm}) a size $K$ subsample of a $\beta$-regular reservoir contains, with high probability, an arm with gap $O(K^{- 1/\beta})$.
Thus, the first sum in \Cref{eq:regret-decomp} is at most $T\cdot K^{-1/\beta}$.

Then, plugging in the classical gap-dependent regret bounds for finite-armed MAB, the second sum in \Cref{eq:regret-decomp} is $\tilde{O}\left( \sum_{i=2}^K \Delta_{(i)}^{-1} \right)$ where $\Delta_{(i)}$ is the (random) $i$-th smallest gap to the best subsampled arm.
Then, it is further shown \citep[Section A.2]{bayati20} that this gap-dependent quantity scales like $\tilde{O}(K)$ in expectation, by carefully integrating $\Delta_{(i)}^{-1}$ over the randomness of the reservoir.

Then, choosing $K$ to balance the bounds $T \cdot K^{-1/\beta}$ and $K$ on \Cref{eq:regret-decomp} yields an optimal choice of $K \propto T^{\frac{\beta}{\beta+1}}$ giving a regret bound of $T^{\frac{\beta}{\beta+1}}$ which is in fact minimax.

\bld{Key Challenges:}
Extending this strategy to the non-stationary problem, it's natural to ask if we can follow an analogous strategy by reducing to $K$-armed non-stationary bandits.
However, this poses fundamental difficulties:
\vspace{-1em}
\begin{enumerate}[(a)]
	%\item Firstly, it's quite crucial in the above argument that the logarithmic gap-dependent regret bound was plugged in and not the gap-independent worst-case bound of $\sqrt{K T}$ (which would only lead to a suboptimal $T^{2/3}$ bound when optimizing $K$).
	\item As our goal in the non-stationary problem is to achieve {\em adaptive} regret bounds, without parameter knowledge, a naive approach is to reduce to adaptive $K$-armed non-stationary MAB guarantees \citep{auer2019,wei2021,suk22,abbasi22}.
	However, these guarantees only hold for an oblivious adversary, and so are inapplicable to our problem.
	Furthermore, these algorithms only give worst-case rates of the form $\sqrt{LKT}$ in terms of $L$ changes.
	In fact, it's known in this literature that no algorithm can adaptively secure tighter gap-dependent rates over unknown changepoints \citep[Theorem 13]{garivier2011}.
	As the gap-dependent regret bound is crucial to achieving optimally balancing exploration and exploitation in our subsampling strategy, we see this approach can only hope to achieve suboptimal rates.
	%This suggests we cannot do a full blackbox reduction to finite-armed non-stationary bandits.

	\item Secondly, we observe that upon experiencing changes, one may have to {\em re-sample} arms from the reservoir distribution as the regret of the best subsampled arm $\min_{a \in \mc{A}_0} \delta_t(a)$ can itself become large over time. % if the rewards of all arms in $\mc{A}_0$ degrade over time.
	Thus, we require a more refined subsampling strategy which works in tandem with non-stationarity detection.
\end{enumerate}

\subsection{Our New Approach: Regret Tracking}\label{subsec:our-approach}

We handle both of the above issues with the new idea of tracking the empirical regret $\hat{\delta}_t(a_t) := 1 - Y_t(a_t)$ as a proxy for tracking non-stationarity.
The key observation is that the empirical cumulative regret of played actions up to round $t$, $\sum_{s=1}^t \hat{\delta}_s(a_s)$, concentrates around $\sum_{s=1}^t \delta_s(a_s)$ at fast logarithmic rates by Freedman's inequality and a self-bounding argument for $[0,1]$-valued random variables.

This means, so long as $\sum_{s=1}^t 1 - Y_s(a_s) \lesssim t^{\frac{\beta}{\beta+1}}$, our regret will be safe up to the minimax stationary regret rate.
On the other hand, if $\sum_{s=1}^t 1 - Y_s(a_s) \gg t^{\frac{\beta}{\beta+1}}$, then the agent must be experiencing large regret which means some non-stationarity has occurred if the agent otherwise plays optimally for stationary environments. %is employs the subsampling strategy.

%On the other side, if the agent is playing according to an optimal algorithm for stationary infinite-armed bandits, then the only way large empirical regret is observed is if large non-stationarity has occurred.
%Hence, checking empirical regret is a valid changepoint detection test.

Thus, at a high level, our procedure (\Cref{alg:blackbox})
%, similar to the adaptive algorithms for the finite-armed setting \citep{auer2019,wei2021,suk22,abbasi22},
restarts the subsampling strategy outlined in \Cref{subsec:subsampling} upon detecting large empirical regret.

%A restart, or new episode, is triggered when the empirical regret over the current episode becomes too large.
Setting up relevant terminology, an {\em episode} is the set of rounds between consecutive restarts and, within each episode, we further employ doubling epochs, termed {\em blocks}, to account for unknown changepoints and durations of play.

Within each block, we run the subsampling strategy for a fixed time horizon as a blackbox.
The blackbox takes as input a finite-armed MAB base algorithm, parametrized by $\base(t,\mc{A}_0)$ for inputs horizon $t$ and subsampled set of arms $\mc{A}_0 \subset \mc{A}$.

Our only requirement of the base algorithm is that it attains a gap-dependent regret bound in so-called {\em mildly corrupt} environments, defined below.
It's straightforward to show this is satisfied by classical stochastic MAB algorithms such as UCB \citep{lai} (proof in \Cref{app:base}). %and Successive Elimination \citep{Even-Dar+02}.
%We give proofs of these facts in \Cref{app:base}.

\begin{defn}\label{defn:mild}
	We say a finite-armed non-stationary bandit environment $\{ \mu_t(a)\}_{t \in [T], a \in \mc{A}_0}$ over horizon $T$ with armset $\mc{A}_0$ is {\bf $\alpha$-mildly corrupt} for $\alpha>0$ if there exists a {\bf reference} reward profile $\{ \mu(a)\}_{a \in \mc{A}_0}$ such that
\[
	\forall t \in [T] , a \in \mc{A}_0: |\mu_t(a) - \mu(a) | \leq \alpha . %T^{- \frac{\alpha}{\alpha + 1}}.
\]
\end{defn}

Next, in stating the requirement of our base algorithm, we use $\delta_t(a,a') := \mu_t(a) - \mu_t(a')$ to denote the gap of arm $a'$ to $a$ in the context of a finite-armed bandit $\{\mu_t(a)\}_{t\in [T], a \in \mc{A}_0}$.

\begin{assumption}\label{assumption:base}
	Let $\{\mu_t(a)\}_{t\in [T],a\in \mc{A}_0}$ be an $\alpha$-mildly corrupt $T$-round finite-armed bandit with reference reward profile $\{\mu(a)\}_{a \in \mc{A}_0}$.
 	Let $\Delta_{(2)} \leq \Delta_{(3)} \leq \cdots \Delta_{(|\mc{A}_0|)}$ be the ordered gaps induced by the reference reward profile.
	Then, $\base(T,\mc{A}_0)$ attains, with probability at least $1-1/T$, for all $t \in [T]$, a $t$-round static regret bound of (for $C_0$ free of $t,T,\mc{A}_0$):
	\vspace{-0.5em}
	\begin{align*}
		\max_{a\in \mc{A}_0} \sum_{s=1}^t \delta_s(a,a_s) \leq 6t \alpha + C_0\sum_{i=2}^{|\mc{A}_0|} \frac{\log(T)}{\Delta_{(i)}} \pmb{1}\{ \Delta_{(i)} \geq 4\alpha \} ,
								       %&+ \left. t \cdot \alpha \right) . %\min\left\{ \frac{\log(T)}{\Delta_{(2)}}, T \cdot \Delta_{(2)} \right\} \right).
	\end{align*}
	%in any $\alpha$-mildly corrupt $T$-round environment, for any $\alpha > 0$, with stationary reward profile $\{\mu(a)\}_{a \in \mc{A}_0}$ with ordered gaps $\Delta_{(2)} \leq \Delta_{(3)} \leq \cdots \Delta_{(|\mc{A}_0|)}$.
	%minimal gap
	%\[
		%\Delta_{\min} := \min_{a \in \mc{A}_0 \bs \{ \argmax_{a' \in \mc{A}_0} \mu(a') \} } \max_{a' \in \mc{A}_0} \mu(a') - \mu(a).
	%\]
\end{assumption}

\begin{rmk}
	Our \Cref{assumption:base} is curiously similar to Assumption 1 of \citet{wei2021}, but it in fact stronger in requiring gap-dependent bounds in environments with small variation as opposed to their requirement of $O(\sqrt{T})$ regret in such environments \citep[Lemma 3]{wang22}.
\end{rmk}

\begin{rmk}
	Bandit algorithms attaining state-of-the-art regret bounds in {\bf stochastic regimes with adversarial corruption} \citep{lykouris18,gupta19,zimmert19,ito21,ito23,dann23} satisfy \Cref{assumption:base}. %as falling under these works' setting with corruption parameter of $T^{-1/2}$.
\end{rmk}
%Next, our blackbox is surprisingly simple.
%In doubling epochs, we'll run $\base$ with a doubling set of subsampled arms and then restart the entire procedure based on checking the cumulative empirical regret $\sum_{s=1}^t 1 - Y_s(a_s) \gg \sqrt{t}$.
%This is captured in \Cref{alg:blackbox}.

\IncMargin{1em}
\begin{algorithm2e}
 	\caption{{Blackbox Non-Stationary Algorithm}}
 \label{alg:blackbox}
 \bld{Input}: Finite-armed MAB algorithm $\base$ satisfying \Cref{ass:reservoir}. Subsampling rate $S_m$.\\
 \bld{Initialize}: Episode count $\ell \leftarrow 1$, Starting time $t_1^1 \leftarrow 1$.\\
 \For{$m=1,2,\ldots,\ceil{\log(T)}$}{ \label{line:doubling}
	 Subsample $S_m \land 2^m$  arms $\mc{A}_m \subset \mc{A}$.\\ %from the reservoir distribution.\\
		 Initiate a new instance of $\base(2^m,\mc{A}_m)$.\\ % with horizon $2^m$ and arm-set $\mc{A}_m$.\\
		 \For{$t = t_{\ell}^m,\ldots,(t_{\ell}^m + 2^m-1) \land T$}{
			 Play arm $a_t$ (receiving reward $Y_t(a_t)$) as chosen by $\base(2^m,\mc{A}_m)$.\\
			 %Obtain reward $Y_t(a_t)$ from $\base(2^m,\mc{A}_m)$.\\

			 \bld{Changepoint Test:} \If{$\ds\sum_{s = t_{\ell}^m}^t \hat{\delta}_s(a_s) \geq C_1 \cdot ( |\mc{A}_m| \vee 2^{m/2}) \cdot \log^3(T) $}{ \label{line:cpd-test}
				 Restart: $t_{\ell+1}^1 \leftarrow t+1$, $\ell \leftarrow \ell+1$. \label{line:restart}\\
				 Return to \Cref{line:doubling} (Restart from $m=1$).
                 % \textcolor{red}{**JK: This may mean starting from $m=1$?**} \joe{yes, one must restart from $m=1$}
		}
        \ElseIf{ $t=t_\ell^m+2^m-1$}{$t_{\ell}^{m+1}\leftarrow t+1$ (Start of the $(m+1)$-th {\em block} in the $\ell$-th episode).}
	}
}
		 %\Indp $\mc{A}_{t} \leftarrow \mc{A}_{t} \bs \left\{a\in [K]:\text{$\exists $ round $t_0 \leq t$ s.t. $\underset{a' \in \mc{A}_t}{\max} \sum_{s=t_0}^{t} \hat{\delta}_s(a',a) > C_5\sqrt{\sum_{s=t_0}^t \frac{\log(T)}{|\mc{A}_s|} }$ holds}\right\}$. \label{line:evict-At}\\
 		%\Indm
\end{algorithm2e}
\DecMargin{1em}

\subsection{Blackbox Regret Upper Bound}\label{subsec:blackbox-regret}

The main result of this section is that \Cref{alg:blackbox} attains the optimal regret in terms of number of changes $L$ and total-variation $V$ when $\beta \geq 1$ and matches the state-of-art regret bounds with known $L,V$ for $\beta < 1$ \citep{kim24}.
%The proof is surprisingly simple with much of the difficulty of non-stationarity handled by our \Cref{assumption:base} on the base algorithm.

\begin{thm}\label{thm:blackbox}
	Under \Cref{ass:reservoir} with $\beta \geq 1$, \Cref{alg:blackbox} with $S_m := \ceil{ 2^{m \cdot \frac{\beta}{\beta+1}} }$ satisfies, w.p. $1-O(1/T)$:
	\[
		{\normalfont \bld{R}}_T \leq \tilde{O} \left( (L+1)^{\frac{1}{\beta+1}}  T^{\frac{\beta}{\beta+1}} \land (V^{\frac{1}{\beta+2}} T^{\frac{\beta+1}{\beta+2}} + T^{\frac{\beta}{\beta+1}}) \right) .
	\]
	If $\beta < 1$, \Cref{alg:blackbox} with $S_m := \ceil{  2^{m \cdot \beta/2}  }$ satisfies w.p. $1 - O(1/T)$:
	\[
		{\normalfont \bld{R}}_T \leq \tilde{O} \left( \sqrt{(L+1) \cdot T} \land ( V^{1/3} T^{2/3} + \sqrt{T} ) \right).
	\]
\end{thm}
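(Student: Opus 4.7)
The plan is to decompose the regret episode-by-episode (where episodes are delimited by restarts in \Cref{line:restart}) and block-by-block within each episode, bound the per-block regret via the subsampling decomposition from \Cref{subsec:subsampling} combined with the base-algorithm guarantee (\Cref{assumption:base}) in a mildly corrupt environment, and finally bound the number of episodes in terms of the non-stationarity measures $L$ or $V$. Within a block of length $2^m$ using $K_m = S_m \wedge 2^m$ subsampled arms, the decomposition in \Cref{eq:regret-decomp} splits the block regret into: (i) the regret of the best subsampled arm, bounded by $\tilde{O}(2^m \cdot S_m^{-1/\beta})$ via \Cref{ass:reservoir} and a high-probability lower-tail bound on the minimum gap among $K_m$ independent samples; and (ii) the regret to this best arm, controlled by \Cref{assumption:base} with reference profile taken to be the arms' mean rewards at the start of the block, so that the corruption parameter $\alpha$ is at most the maximum within-block drift $v_m$. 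Using that the expected sum $\sum_{i\geq 2}\Delta_{(i)}^{-1}$ over a $\beta$-regular subsample is $\tilde{O}(K_m)$ (as sketched in \Cref{subsec:subsampling}), upgraded to a high-probability statement, yields per-block regret $\tilde{O}(2^m v_m + 2^{m\beta/(\beta+1)})$ for $\beta \geq 1$ and $\tilde{O}(2^m v_m + 2^{m/2})$ for $\beta < 1$, which precisely matches the threshold used in \Cref{line:cpd-test}.

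To justify that the changepoint test neither fires spuriously nor misses a true surge in regret, I would apply Freedman's inequality to the martingale differences $\hat{\delta}_s(a_s) - \delta_s(a_s)$, using the self-bounding variance bound available for $[0,1]$-valued rewards. Applied uniformly across blocks, this gives $|\sum_s(\hat{\delta}_s(a_s) - \delta_s(a_s))| \leq \tilde{O}(\sqrt{\log(T)\sum_s \delta_s(a_s)} + \log T)$ with probability $1-O(1/T)$. Consequently, a block whose true cumulative regret sits well below the threshold never triggers a restart, and whenever a restart does occur the true cumulative regret within that block matches the threshold up to polylog factors. Summing the block-level thresholds inside an episode via the geometric doubling in $m$ yields a per-episode regret of $\tilde{O}(T_\ell^{\beta/(\beta+1)})$ for $\beta \geq 1$ and $\tilde{O}(\sqrt{T_\ell})$ for $\beta < 1$, where $T_\ell$ denotes the length of episode $\ell$.

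For the $L$-bound, each non-final episode must witness at least one true change in the played arm's mean (otherwise no false restart fires, whp), so the number of episodes is at most $L+1$, and H\"older's inequality on $\sum_\ell T_\ell = T$ gives $\sum_\ell T_\ell^{\beta/(\beta+1)} \leq (L+1)^{1/(\beta+1)} T^{\beta/(\beta+1)}$ (and analogously $\sqrt{(L+1)T}$ for $\beta < 1$ via Cauchy--Schwarz). For the $V$-bound, I would compare the realized sequence of block regrets against a virtual oracle partition balancing the drift term $\sum_m 2^m v_m$ against the per-block stationary cost, recovering $\tilde{O}(V^{1/(\beta+2)} T^{(\beta+1)/(\beta+2)} + T^{\beta/(\beta+1)})$ (resp.\ $\tilde{O}(V^{1/3} T^{2/3} + \sqrt{T})$) by a concavity/rearrangement argument in the spirit of \citet{besbes2014}. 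The main obstacle is precisely this last step: since the algorithm restarts adaptively rather than along any oracle partition, one must leverage the Freedman-based two-sided control of the test statistic to ``pin'' each block's true regret to the threshold up to polylog factors, so that the sum of block thresholds upper-bounds even the oracle partition's regret and the concavity argument goes through.
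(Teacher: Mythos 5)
Your architecture (episodes/blocks, Freedman on $\hat{\delta}_s(a_s)-\delta_s(a_s)$, the decomposition \eqref{eq:regret-decomp} plus \Cref{assumption:base} in mildly corrupt environments, H\"older/Jensen at the end) matches the paper's, but you have inverted the roles of the changepoint test and the base-algorithm guarantee, and this leaves a genuine gap. In the paper, the \emph{upper} bound on per-block regret is $\tilde O(S_m)$ unconditionally: the test in \Cref{line:cpd-test} caps the empirical regret of every block at its threshold, and Freedman plus a self-bounding AM--GM step transfers this to the true regret (\Cref{lem:concentration-bounds}); no drift term appears. The decomposition \eqref{eq:regret-decomp} and \Cref{assumption:base} are used only \emph{contrapositively}: when a restart fires, either the best subsampled arm's dynamic regret is $\Omega(S_{m_\ell})$ (so its mean must have drifted by $\gtrsim 2^{-m_\ell/(\beta+1)}$), or the base's relative regret is $\Omega(S_{m_\ell})$, which contradicts \Cref{assumption:base} unless the episode's variation exceeds $(t_{\ell+1}-t_\ell)^{-1/(\beta+1)}$. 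This quantitative per-episode variation lower bound \eqref{eq:large-variation-ep} is exactly what feeds H\"older to get $V^{1/(\beta+2)}T^{(\beta+1)/(\beta+2)}$. Your proposal instead bounds each block's regret by $\tilde O(2^m v_m + S_m)$ and then must control $\sum_m 2^m v_m$ via a Besbes-style oracle-partition/concavity argument; under adaptive restarts and a realized, adversary-dependent $V$ that argument does not go through as stated, and the sentence in which you acknowledge this as ``the main obstacle'' is precisely the step that is missing. Establishing only ``at least one change per non-final episode'' suffices for the $(L+1)^{1/(\beta+1)}T^{\beta/(\beta+1)}$ bound but not for the $V$ bound.

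Two further points. First, the claim that $\sum_{i\ge 2}\Delta_{(i)}^{-1}$ ``upgraded to a high-probability statement'' is $\tilde O(K_m)$ cannot be taken off the shelf: without the truncation $\pmb{1}\{\Delta_{(i)}\ge 4\alpha\}$ the sum has infinite expectation (e.g.\ $\beta=1$), and even with it the high-probability version is one of the paper's technical contributions, proved via a dyadic binning of the gaps and Freedman-type control of the bin counts (\Cref{lem:bin-count-bound,lem:bigger}); the in-expectation route of \citet{bayati20} needs extra density assumptions when $\beta\neq 1$. Second, for $\beta<1$ the test threshold is $|\mc{A}_m|\vee 2^{m/2}$ with $|\mc{A}_m|\propto 2^{m\beta/2}\ll 2^{m/2}$, so the per-block cap is $2^{m/2}$, not $|\mc{A}_m|$; your per-block figure happens to be right, but the reason is the $\vee\,2^{m/2}$ in the test, which your write-up does not use.
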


\begin{proof}{(Outline)}
	We given an outline of the proof with full details deferred to \Cref{app:blackbox}.
	We also focus on the setting of $\beta \geq 1$ with (minor) modifications of the argument for $\beta < 1$ discussed in \Cref{subsec:beta-small}.
	Let $t_{\ell} := t_{\ell}^1$ be the start of the $\ell$-th episode $\ho{t_{\ell}}{t_{\ell+1}}$.
	Let $\hat{L}$ be the (random) number of restarts triggered over $T$ rounds. Let $m_\ell$ be the index of the last block in $\ell$-th episode.

	\paragraph*{$\bullet$ Converting Empirical Regret Bound to Per-Episode Regret Bound.}

	Following the discussion of \Cref{subsec:our-approach}, we first use concentration to upper bound the per-block regret $\sum_{s=t_{\ell}^m}^{t_{\ell}^{m+1}-1} \delta_s(a_s)$ on each block and to also lower bound it on blocks concluding with a restart.
	We first have by Freedman's inequality (\Cref{lem:freedman}) and AM-GM, with high probability, for all subintervals $[s_1,s_2]$ of rounds:
	\begin{align}
		\left| \sum_{s=s_1}^{s_2} \delta_s(a_s) - \hat{\delta}_s(a_s) \right| &\lesssim \sqrt{\log(T) \sum_{s=s_1}^{s_2} \delta_s(a_s)} + \log(T) %\nonumber \\
										      \lesssim \half \sum_{s=s_1}^{s_2} \delta_s(a_s) + \log(T) \numberthis \label{eq:concentration-body}
	\end{align}
	This allows us to upper bound the regret on each block $\ho{t_{\ell}^m}{t_{\ell}^{m+1}}$ by $\tilde{O}(S_m)$ using the bound on empirical regret $\sum_{s=t_{\ell}^m}^{t_{\ell}^{m+1}-1} \hat{\delta}_s(a_s)$ from \Cref{line:cpd-test} of \Cref{alg:blackbox}.

	%Next, we establish a high-probability event over which our empirical regret concentrates around its mean and so the changepoint test (\Cref{line:cpd-test} of \Cref{alg:blackbox}) can be related to the regret over each block.

		%Now, on event $\mc{E}_1$, we obtain a total regret bound of
	Then, summing $S_m \propto 2^{m \cdot \frac{\beta}{\beta+1}}$ over blocks and episodes yields a total regret bound of $\tilde{O} \left( \sum_{\ell=1}^{\hat{L}} (t_{\ell+1} - t_{\ell})^{\frac{\beta}{\beta+1}} \right)$. %on $\mc{E}_1$, by summing \Cref{eq:block-bound} over blocks and episodes .
    % \textcolor{red}{**JK: Could you explain this for me?**
		% Recall $t_{\ell}^m$ denote the start round of the $m$-th block in the $\ell$-th episode and let $t_{\ell}^m := t_{\ell+1}$ if the $(m-1)$-th block resulted in a restart.
		% Then,

\paragraph*{$\bullet$ Bounding the Variation Over Each Episode.}
%Given \Cref{lem:regret-bd-episode-length},
It now remains to relate
	$
		\sum_{\ell=1}^{\hat{L}} (t_{\ell+1} - t_{\ell})^{\frac{\beta}{\beta+1}}
	$
	to the total count $L$ of changes and total-variation $V$.
	To this end, we show there is a minimal amount of variation in each episode $\ho{t_{\ell}}{t_{\ell+1}}$ which will allow us to conclude the total regret bound using arguments similar to prior works \citep[Corollary 2]{suk22} \citep[Lemma 5]{chen2019}.

	We first introduce a regret decomposition, alluded to earlier in \Cref{eq:regret-decomp}, based on the ``best initially subsampled arm'', $\widehat{a}_{\ell,m_\ell} := \argmax_{a\in \mc{A}_{m_{\ell}}} \mu_0(a)$, where we use $\mu_0(a)$ to denote the initial reward of arm $a$ when it's first sampled.
	The regret in the last block $\ho{t_{\ell}^{m_{\ell}},t_{\ell+1}}$ of the $\ell$-th episode is:
\begin{align*}%\label{eq:regret-decompose-block}
	\sum_{t=t_{\ell}^{m_\ell}}^{t_{\ell+1}-1} \delta_t(a_t) = \underbrace{ \sum_{t=t_\ell^{m_\ell}}^{t_{\ell+1}-1} \delta_{t}(\hat{a}_{\ell,m_\ell}) }_{\mytag{(A)}{case1}} + \underbrace{ \sum_{t=t_{\ell}^{m_\ell}}^{t_{\ell+1}-1} \delta_t(\hat{a}_{\ell,m_\ell}, a_t)}_{\mytag{(B)}{case2}}
\end{align*}
	Then from the above and concentration, one of two possible cases must hold if a restart occurs on round $t_{\ell+1}$: either
    %\begin{align*}
	    %\underbrace{ \sum_{t=t_\ell^{m_\ell}}^{t_{\ell+1}-1} \delta_{t}(\hat{a}_{\ell,m_\ell}) \geq \frac{C_1}{4} |\mathcal{A}_{m_\ell}| \log^3(T) }_{ \mytag{(A)}{case1}} \text{ or } \\
	    %\underbrace{
	    %%\sum_{t=t_\ell^{m_\ell}}^{t_{\ell+1}-1} \delta_{t}(\hat{a}_{\ell,m_\ell}) < \frac{C_1}{4} |\mathcal{A}_{m_\ell}| \log^3(T) \text{ and }
	    %\sum_{t=t_{\ell}^{m_\ell}}^{t_{\ell+1}-1} \mu_t(\hat{a}_{\ell,m_\ell}) - \mu_t(a_t) \geq \frac{C_1}{4} |\mathcal{A}_{m_\ell}| \log^3(T) }_{ \mytag{(B)}{case2}}.
	%\end{align*}
	    \ref{case1} or \ref{case2} is $\Omega(S_{m_{\ell}})$.
	In either case, we claim large variation occurs over the episode:
	\begin{equation}\label{eq:large-variation-ep}
		\sum_{t=t_{\ell}}^{t_{\ell+1}-1} |\mu_{t}(a_{t-1}) - \mu_{t-1}(a_{t-1})| \geq (t_{\ell+1} - t_{\ell})^{-\frac{1}{\beta+1}}.
	\end{equation}
%\[
	%\sum_{t=t_{\ell}^{m_\ell}}^{t_{\ell+1}-1} 1 - \mu_t(a_t) = \sum_{t=t_\ell^{m_\ell}}^{t_{\ell+1}-1} 1 - \mu_{t}(\hat{a}_{\ell,m_\ell}) + \sum_{t=t_{\ell}^{m_\ell}}^{t_{\ell+1}-1} \mu_t(\hat{a}_{\ell,m_\ell}) - \mu_t(a_t).
%\]
\paragraph*{$\bullet$ Regret of Best Subsampled Arm is Large.}
In the case where \ref{case1} is $\Omega( S_{m_{\ell}})$, we know due to our subsampling rate $S_{m_{\ell}}$ that $\hat{a}_{\ell, m_{\ell}}$ will w.h.p. have an initial gap of $\tilde{O}( 2^{- m_{\ell} \cdot \frac{1}{\beta+1}})$ (\Cref{lem:best-initial-arm}).
On the other hand, \ref{case1} being large means there's a round $t' \in \ho{t_{\ell}^m}{t_{\ell+1}}$ such that
\[
	\delta_{t'}(\hat{a}_{\ell,m_{\ell}}) \gtrsim S_{m_{\ell}} / (t_{\ell+1} - t_{\ell}^{m_{\ell}}) \gtrsim 2^{- m_{\ell} \frac{1}{\beta+1}}.
\]
Thus, from $2^{- m_{\ell} \frac{1}{\beta+1}} \geq (t_{\ell+1} - t_{\ell})^{-\frac{1}{\beta+1}}$, \Cref{eq:large-variation-ep} holds.

\paragraph*{$\bullet$ Regret of Base is Large.}
Now, if \ref{case2} is $\Omega(S_{m_{\ell}})$ but \ref{case1} is $o(S_{m_{\ell}})$, suppose for contradiction that \Cref{eq:large-variation-ep} is reversed.
Then, this means the finite MAB environment experienced by the base algorithm is $(t_{\ell+1} - t_{\ell})^{-\frac{1}{\beta+1}}$-mildly corrupt (\Cref{defn:mild}).
Thus, \Cref{assumption:base} bounds the regret of the base, which in turn bounds the per-block regret above: %by \Cref{eq:regret-decompose-block}:
\begin{align}
\sum_{s=t_{\ell}^{m_{\ell}}}^{t_{\ell+1}-1} &\delta_t(\hat{a}_{\ell,m_{\ell}}, a_t) \lesssim t^{\frac{1}{\beta+1}} +  \sum_{i=2}^{S_{m_{\ell}}} \frac{\log(T)}{\Delta_{(i)}} \pmb{1}\left\{ \frac{\Delta_{(i)}}{4} \geq t^{- \frac{\beta}{\beta+1}} \right\}  \label{eq:subsample-bound-block}.
	%&+ \left. t^{\frac{1}{\beta + 1}}  \log^{\frac{\beta}{\beta+1}}(T) \right) \label{eq:subsample-bound-block},
\end{align}
where $\{\Delta_{(i)}\}_{i = 1}^{S_{m_{\ell}}}$ are the ordered initial gaps to $\hat{a}_{\ell,m_{\ell}}$ of the arms in $\mc{A}_{m_{\ell}}$.

We then bound the RHS of \Cref{eq:subsample-bound-block} by $O(S_{m_{\ell}})$ to contradict our premise that \ref{case2} is $\Omega(S_{m_{\ell}})$.
In \citet[Lemma D.4]{bayati20}, bounding \Cref{eq:subsample-bound-block} by $O(S_{m_{\ell}})$ was done in expectation by carefully integrating the densities of each random variable $\Delta_{(i)}^{-1}$.
But, this requires additional regularity conditions on an assumed density for the reservoir distribution when $\beta \neq 1$ (op. cit. Section 6.1).
To contrast, we bound \Cref{eq:subsample-bound-block} in high probability using a novel binning argument on the values of the $\Delta_{(i)}$'s and then using Freedman-type concentration on the number of subsampled arms within each bin.
Details of this can be found in \Cref{subsec:proof-variation-bound}.
\end{proof}

\section{Tracking Significant Shifts}\label{sec:sig-shifts}

While \Cref{alg:blackbox} is a flexible blackbox which can make use of any reasonable finite-armed MAB algorithm, the regret bound of \Cref{thm:blackbox} is suboptimal for $\beta < 1$.
Futhermore, we aim to show regret upper bounds in terms of the tighter rotting measures of non-stationarity $L_R$ and $V_R$ (cf. \Cref{subsec:nonstat}).

In fact, we go beyond this aim and define a new measure of non-stationarity which precisely tracks the rotting changes which are most severe to performance.

\subsection{Defining a Significant Shift}

Recent works on non-stationary finite-armed MAB achieve regret bounds in terms of more nuanced non-stationarity measures, which only track the most {\em significant} switches in best arm, or those truly necessitating re-exploration \citep{suk22,buening22,sukagarwal23,suk23,suk24}.

% \com{
% Define a {\em proper} algorithm as that which samples $\Omega(\sqrt{t})$ arms by round $t$, for every $t \in [T]$.
%Call it improper otherwise.
%Then, it's evident that an improper algorithm cannot secure anytime high-probability $O(\sqrt{T})$ regret on all $T$-round stationary environments.
%Therefore, we can restrict our discussion to proper algorithms, moving forward, because those are the only ones which will be able to have adaptive and optimal dynamic regret guarantees.
%In particular, our blackbox (\Cref{alg:blackbox}) and SS-UCB \citep{bayati20} are proper.
%}
Here, we develop and study an analogous notion for the infinite-armed bandit.
First, we note in the infinite-armed bandit problem, there's no single ``best arm'', as the arm-space is infinite and, almost surely, no sampled arm will have the optimal reward value of $1$.
Yet, the core inution behind the notion of significant shift for finite MAB \citep[Definition 1]{suk22} remains by definining a significant shift in terms of the regret experienced by each arm.
%Roughly, we say a significant shift occurs when every arm $a$ in some subsampled set of arms $\hat{\mc{A}}$ incurs significant regret in the sense of
First, we say an arm $a$ is {\em safe} on an interval $[s_1,s_2]$ of rounds if:
\begin{equation}\label{eq:sig-regret}
\sum_{s=s_1}^{s_2} 1 - \mu_s(a) \leq \kappa_1^{-1} \cdot (s_2 - s_1 + 1)^{\frac{\beta}{\beta+1}} .
\end{equation}

%\begin{rmk}
	%We include a $\kappa_1^{-1}$ factor in \Cref{eq:sig-regret} as a subsample of $t^{\frac{\beta}{\beta+1}}$ arms can only secure a minimal gap of at most $\kappa_1^{-1} \cdot t^{-\frac{1}{\beta+1}}$ using standard concentration arguments (e.g., \Cref{lem:best-initial-arm}).
	%This suggests the minimax regret rate for this problem scales with $\kappa_1^{-1}$ as seen in the upper bounds of previous works \citep{wang08,carpentier15,bayati20,kim24}.
	%However, it remains open to show a scaling of $\kappa_1^{-1}$ in the lower bound \citep[the lower bound Theorem 3.1 of][scales like $\kappa_1$]{bayati20}.
	%In any case, as the focus of our work is not on optimal scaling w.r.t. $\kappa_1$ we assume $\kappa_1^{-1}$ is dominated by $\log(T)$ terms throughout the regret analyses.
%\end{rmk}

Then, a significant shift is roughly defined as occurring when every arm in some set of arms $\hat{\mc{A}}$ is unsafe and violates \Cref{eq:sig-regret}.
If we let $\hat{\mc{A}} = \mc{A}$ be the full reservoir set of arms, then this proposed definition may not by meaningful as there could always exist an unsampled arm which is safe in terms of regret, but unknown to the agent.

For the subsampling strategy discussed in \Cref{subsec:subsampling}, we argue it is sensible to let $\hat{\mc{A}} = \mc{A}_0$, the set of sub-sampled arms. %giving a notion of {\em experienced significant shift} in terms of the agent's subsampling strategy.
Based on the discussion of \Cref{sec:blackbox}, we see that a subsample of size $\Omega( t^{\frac{\beta}{\beta+1}} )$ contains w.h.p. a safe arm with gap $O(t^{-\frac{1}{\beta+1}})$.
This motivates the following definition of significant shift, which is defined for any agent (regardless of whether a subsampling strategy is used).

\begin{defn}[Significant Shift]\label{defn:sig-shift}
	%Let $\tau_0 := 1$.
	%Define $\tau_{i+1}$ given $\tau_i$ in the following recursive way:
	%  let $\tau_{i+1}$ be the smallest round $t > \tau_i + \sqrt{t - \tau_i}$ such that there does not exist an arm $a$ sampled thus far such that $\sum_{s=s_1}^{s_2} \delta_s(a) < \sqrt{s_2-s_1+1}$ for all $[s_1,s_2] \subseteq [\tau_i,t]$.
    % either
    %\begin{itemize}
        %\item there is a set $\hat{\mc{A}}$ of arms of size at least $\sqrt{t - \tau_i}$ sampled by the agent since round $\tau_i$ \jkc{Let $\mathcal{A}_t$ be the sampled arm up to $t$ by the agent. Can we say at least $\min\{|\mathcal{A}_t|,\sqrt{t-\tau_i}\}$ instead of $\sqrt{t-\tau_i}$ for the case when $|\mathcal{A}_t|< \sqrt{t-\tau_i}$?} such that: each $a\in \hat{\mc{A}}$ incurs significant regret in the sense of $\exists [s_1,s_2] \subseteq [\tau_i,t]$ such that:
	%\begin{equation}\label{eq:sig-regret}
		%\sqrt{ s_2 - s_1 + 1} \leq \sum_{s=s_1}^{s_2} \delta_s(a).
	%\end{equation}
        %\item Or $\sum_{s=\tau_i}^t \delta_s(a_s) \gtrsim \sqrt{t - \tau_i + 1}$.\footnote{But this second condition will never be triggered }
    %\end{itemize}
	We say a bandit environment over $t$ rounds is {\bf safe} if there exists an arm $a$ among the first $t^{\frac{\beta}{\beta+1}}$ arms sampled by the agent such that \Cref{eq:sig-regret} holds for all intervals of rounds $[s_1,s_2] \subseteq [t]$. %, i.e. if no significant shift occurs over $T$ rounds.
    Let $\tau_0 := 1$.
    Define the $(i+1)$-th {\bf significant shift} $\tau_{i+1}$ given $\tau_i$ as the first round $t > \tau_i$ such that the bandit environment over rounds $[\tau_i,t]$ is not safe.
	Let $\Lsig$ be the largest significant shift over $T$ rounds and by convention let $\tau_{\Lsig+1} := T+1$.
\end{defn}

\subsection{Comparing Significant Shifts with $L_R,V_R$}
%Next, we highlight some nuances of \Cref{defn:sig-shift} and compare it with the non-stationarity measures $L_R$ and $V_R$ of \Cref{subsec:nonstat}.
We first note that the significant shift, like $L_R$ and $V_R$, is an agent-based measure of non-stationarity and so depends on the agent's past decisions.

Next, we caution that a significant shift does not always measure non-stationarity and can in fact be triggered in stationary environments for a ``bad'' algorithm.
%Looking at \Cref{defn:sig-shift}, a first point is that the agent may not even sample $t^{\frac{\beta}{\beta+1}}$ arms.
For example, consider an agent sampling exactly one arm from the reservoir and committing to it.
Then, due to the large variance of a single sample, such an algorithm incurs constant regret with constant probability.
In this case, a significant shift is triggered even in a stationary environment as none of the subsample is safe.
Thus, in this example, the significant shift tracks the agent's suboptimality.

In spite of this example, we argue that \Cref{defn:sig-shift} is still a meaningful measure of non-stationarity for all algorithms of theoretical interest, and indeed learning significant shifts via \Cref{alg:elim} will allow us to attain optimal rates w.r.t. $L_R,V_R$ (\Cref{cor:elim-bounds}).
Note that any algorithm attaining the optimal high-probability regret of $O(T^{\frac{\beta}{\beta+1}})$ in stationary environments must sample at least $T^{\frac{\beta}{\beta+1}}$ arms.
Intuitively, this is because securing an arm with gap at most $\delta := T^{-\frac{1}{\beta+1}}$ (which occurs for a given arm with probability $\mb{P}(\mu_0(a) > 1 - \delta) = \Theta(\delta^{\beta})$) requires $\Omega(\delta^{-\beta})$ trials.
Indeed, this claim is also seen in the proofs of the stationary regret lower bounds \citep[e.g.,][Theorem 3]{wang08}.

At the same time, any sample of $\Omega(T^{\frac{\beta}{\beta+1}})$ arms from the reservoir will contain an arm with gap $O(\delta)$ w.h.p.. (\Cref{lem:best-initial-arm}), which will maintain \Cref{eq:sig-regret} over all subintervals.
%Thus, any algorithm sampling arms at this rate will w.h.p. not trigger
Then, a significant shift will not be triggered unless this best subsampled arm' gap increases.
Thus, any algorithm attaining optimal regret in stationary environments will satisfy w.h.p. $\Lsig \leq L$.

Additionally, the only way an arm with initial gap $O(\delta)$ becomes unsafe and violates \Cref{eq:sig-regret} is if there's large rotting total variation (i.e., $V_R$ is bounded below).
This means $\Lsig \leq L_R$ and we'll see in the next subsection that learning significant shifts in fact allows us to recover the optimal rate $V_R^{\frac{1}{\beta+2}} \cdot T^{\frac{\beta+1}{\beta+2}}$ in terms of $V_R$ (\Cref{cor:elim-bounds}).

%\bld{Intuition behind \Cref{defn:sig-shift}:}
%Intuitively, with no changes, for subsampling size $(\tau_{i+1} - \tau_i)^{\frac{\beta}{\beta+1}}$, with high probability there's an arm $a$ with initial gap $\delta_0(a) \lesssim (\tau_{i+1} - \tau_i)^{-\frac{1}{\beta+1}}$.
%This means a significant shift can only occur if no such arm exists, hence there is no longer any safe subsampled arm.
%This generalizes the notion of significant shift for finite MAB \citep{suk22} which tracks when there is no longer any safe arm among the finite armset.

%\begin{rmk}
	%Note that, following the above discussion, our \Cref{defn:sig-shift} is only a valid measure of non-stationarity in high probability and not a.s..
	%For example, in the low-probability event where every arm sampled by the agent incurs linear regret, \Cref{defn:sig-shift} could be triggered in stationary environments.
%\end{rmk}

\subsection{Regret Upper Bounds}\label{subsec:regret-upper-bound-elim}

Here, we derive a tight and adaptive regret bound in terms of the significant shifts.
%an analogous regret upper bound as in \citet{suk22} in terms of significant shifts.
In particular, we show a regret bound of $(\Lsig + 1)^{\frac{1}{\beta+1}}  T^{\frac{\beta}{\beta+1}} \land ( V_R^{\frac{1}{\beta+2}}  T^{\frac{\beta+1}{\beta+2}} + T^{\frac{\beta}{\beta+1}} )$ in terms of $\Lsig$ significant shifts and $V_R$ total rotting variation.

A preliminary task here is as follows:
\vspace{-0.25em}
\begin{goal}\label{goal}
	Show a regret bound of $\tilde{O}(t^{\frac{\beta}{\beta+1}})$ in $t$-round safe environments.
\end{goal}
\vspace{-1em}
%This is achieved by learning the safe arm prescribed by \Cref{defn:sig-shift} whose regret is small in the sense of \Cref{eq:sig-regret}.
Given a base procedure achieving said goal, we can then use a restart strategy similar to \Cref{alg:blackbox} where we restart the base upon detecting a significant shift.

%we establish a cumulative regret bound of $\sum_{i=0}^{\Lsig} (\tau_{i+1} - \tau_i)^{\frac{\beta}{\beta+1}}$ over the $\Lsig$ unknown significant shifts.
%By Jensen's inequality, this will imply a tighter regret bound of $(\Lsig + 1)^{\frac{1}{\beta+1}}  T^{\frac{\beta}{\beta+1}}$.
%At the same time, we'll also be able to derive a regret bound in terms of the {\em total rotting variation} $V_R := \sum_{t=2}^T (\mu_{t-1}(a_{t-1}) - \mu_t(a_{t-1}))_+$

For the finite $K$-armed setting, the analogous preliminary claim \citep[Theorem 11]{suk24} is to show a regret bound of $O(\sqrt{KT})$ in safe environments.
This is achieved using a randomized variant of successive elimination \citep{even-dar}.

However, once again the infinite-armed setting is more nuanced and so this claim cannot directly be applied.
Indeed, setting $K := T^{\frac{\beta}{\beta+1}}$ results in a suboptimal rate of $\sqrt{KT} = T^{ \frac{ \beta + 1/2}{\beta+1}}$.
The fundamental issue here is that the $\sqrt{KT}$ rate captures a worst-case variance of estimating bounded rewards.
To get around this, we do a more refined variance-aware regret analysis relying on self-bounding techniques similar to those used to show \Cref{thm:blackbox} in \Cref{sec:blackbox}.
%which gets a tighter bound of $T^{\frac{\beta}{\beta+1}}$.

To further emphasize the more challenging nature of showing \Cref{goal}, we notice that the regret analysis of our blackbox procedure \Cref{alg:blackbox} in \Cref{subsec:blackbox-regret} crucially relies on bounding the logarithmic gap-dependent regret rate of \Cref{assumption:base} over periods of small total-variation (used to show \Cref{eq:large-variation-ep}).
However, such a gap-dependent regret rate is ill-defined when the gaps $\Delta_i$ can be changing substantially over time (as to violate \Cref{assumption:base}) which can happen in safe environments while we expect stationary regret rates.

Nevertheless, we show that a different per-arm regret analysis gets around this difficulty.
Our procedure (\Cref{alg:elim}) is a restarting randomized elimination  using the same doubling block scheme of \Cref{alg:blackbox}.
%can achieve the optimal regret rate $\Lsig^{\frac{1}{\beta+1}} T^{\frac{\beta}{\beta+1}}$ in terms of significant shifts.
%Our main novelty is to derive tighter variance-based concentration bounds which crucially use the self-bounding property of the estimators $1-Y_t(a)$ of the gap $\delta_t(a)$.
%A key fact here is that the variance of the estimator $1-Y_t(a)$ is small when the gap $\delta_t(a)$ is also small.

%Indeed, we achieve this using a randomized elimination procedure (\Cref{alg:elim}), whose key ideas are as follows.

Going into more detail, we note, by uniformly exploring a {\em candidate armset} $\mc{G}_t$ at round $t$, we can maintain importance-weighted estimates of the gaps of each arm $a \in \mc{G}_t$:
\[
	\IW_t(a) := \frac{(1 - Y_t(a_t)) \cdot \pmb{1}\{ a_t = a\}}{\mb{P}(a_t = a \mid \mc{H}_{t-1})},
\]
where $\mc{H}_{t-1}$ is the $\sigma$-algebra generated by decisions and observations up to round $t-1$.

Next, we note, by Freedman's inequality that the estimation error of the cumulative estimate $\sum_{t \in I} \IW_t(a)$ over an interval $I$ of rounds scales like $\sqrt{\sum_{t \in I} \delta_t(a) \cdot |\mc{G}_t|}$.
The inclusion of the $\delta_t(a)$ term inside of the square root is crucial here and, using self-bounding arguments, yield tighter concentration bounds of order $\tilde{O}(\max_{t\in I} |\mc{G}_t|)$, which we can use as a threshold for fast variance-based elimination.
%This will allow us to certify eliminating arms at thresholds scaling like $\max_{t \in I} |\mc{G}_t|$.

% \IncMargin{1em}
% \begin{algorithm2e}[h]%\small
%  	\caption{{Elimination for Tracking Significant Shifts}}
%  \label{alg:elim}
%  \bld{Initialize}: episode start round $t_0 \leftarrow 1$.\\
%  \For{$m=1,2,\ldots,\ceil{\log(T)}$}{ \label{line:restart}
% 	 Subsample $\ceil{2^{(m+1)/2}}$ arms $\mc{A}_m$ and let $\mc{G}_{t_0 + 2^{m-1} - 1} \leftarrow \mc{A}_m$. \label{line:subsample} \\
% 		 \For{$t = t_0 + 2^{m-1} - 1,\ldots,t_0 + 2^m$}{
% 			 Play arm $a_t$ as $\Unif\{\mc{G}_t\}$ and observe reward $Y_t(a_t)$.\\
% 			 Eliminate arms: $\mc{G}_{t+1} \leftarrow \mc{G}_t \bs \left\{ a \in \mc{A}_m  : \ds\sum_{s=t_0 + 2^{m-1} - 1}^t \hat{\delta}_s(a) \geq C \cdot |\mc{A}_m| \right\}$. \label{line:elim} \\
% 			 \bld{Restart Test:} \uIf{$\mc{G}_{t+1} = \emptyset$}{
% 		 Restart: $t_0 \leftarrow t+1$.\\
% 		 Return to \Cref{line:restart}.
% 		}
% 	}
% }
% 		 %\Indp $\mc{A}_{t} \leftarrow \mc{A}_{t} \bs \left\{a\in [K]:\text{$\exists $ round $t_0 \leq t$ s.t. $\underset{a' \in \mc{G}_t}{\max} \sum_{s=t_0}^{t} \hat{\delta}_s(a',a) > C_5\sqrt{\sum_{s=t_0}^t \frac{\log(T)}{|\mc{A}_s|} }$ holds}\right\}$. \label{line:evict-At}\\
%  		%\Indm
% \end{algorithm2e}
% \DecMargin{1em}

\IncMargin{1em}
\begin{algorithm2e}[h]%\small
	\caption{{Restarting Subsampling Elimination}}
 \label{alg:elim}
 \bld{Initialize}:  Episode count $\ell \leftarrow 1$, start $t_1^1 \leftarrow 1$.\\
 \For{$m=1,2,\ldots,\ceil{\log(T)}$}{ \label{line:restart-elim}
	 Subsample $\ceil{2^{(m+1) \cdot \frac{\beta}{\beta+1}} \log(T) } \land 2^m$  %\jkc{could you tell me the reason why we require $\log(T)$? could we remove this? because it affects to the experiment result}
	 arms $\mc{A}_m$ and let $\mc{G}_{t_\ell^m} \leftarrow \mc{A}_m$. \label{line:subsample} \\
		 \For{$t = t_\ell^m,\ldots,(t_\ell^m + 2^m-1)\land T$}{
			 Play arm $a_t$ as $\Unif\{\mc{G}_t\}$ and observe reward $Y_t(a_t)$.\\
			 Eliminate arms: $\mc{G}_{t+1} \leftarrow \mc{G}_t \bs \left\{ a : \ds\sum_{s=t_\ell^m}^t \IW_s(a) \geq C_2 \cdot |\mc{A}_m| \log(T) \right\}$. \label{line:elim} \\
			 \bld{Restart Test:} \uIf{$\mc{G}_{t+1} = \emptyset$}{
		 Restart: $t_{\ell+1}^1 \leftarrow t+1$, $\ell\leftarrow \ell+1$.\\
		 Return to \Cref{line:restart-elim} (Restart from $m=1$).
		}
        \ElseIf{ $t=t_\ell^m+2^m-1$}{$t_{\ell}^{m+1}\leftarrow t+1$ (Start of the $m+1$-th {\em block} in the $\ell$-th episode).}
	}
}
		 %\Indp $\mc{A}_{t} \leftarrow \mc{A}_{t} \bs \left\{a\in [K]:\text{$\exists $ round $t_0 \leq t$ s.t. $\underset{a' \in \mc{G}_t}{\max} \sum_{s=t_0}^{t} \hat{\delta}_s(a',a) > C_5\sqrt{\sum_{s=t_0}^t \frac{\log(T)}{|\mc{A}_s|} }$ holds}\right\}$. \label{line:evict-At}\\
 		%\Indm
\end{algorithm2e}
\DecMargin{1em}

\begin{thm}\label{thm:regret-episodes}
	Let $\hat{L}$ be the number of episodes $\ho{t_{\ell}}{t_{\ell+1}}$ elapsed in \Cref{alg:elim} over $T$ rounds.
	\Cref{alg:elim} satisfies, w. p. at least $1-1/T$:
	\[
		{\normalfont \bld{R}}_T =\tilde{O} \left(\sum_{\ell=1}^{\hat{L}} ( t_{\ell+1} -t_{\ell} )^{\frac{\beta}{\beta+1}} \right).
	\]
\end{thm}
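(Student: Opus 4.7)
The plan is to prove a per-block regret bound of $\tilde{O}(S_m)$, from which the episode-wise bound $\tilde{O}((t_{\ell+1}-t_{\ell})^{\beta/(\beta+1)})$ follows by geometric summation over the doubling block scheme, since $S_m \asymp 2^{m\beta/(\beta+1)}\log(T)$ and $2^{m_{\ell}} \asymp t_{\ell+1}-t_{\ell}$. The first ingredient is concentration of the importance-weighted estimates within a single block. Since $\IW_s(a) \in [0, |\mc{G}_s|] \subseteq [0, |\mc{A}_m|]$ with $\mb{E}[\IW_s(a)\mid\mc{H}_{s-1}] = \pmb{1}\{a \in \mc{G}_s\}\delta_s(a)$, a self-bounded conditional variance estimate $\mb{E}[\IW_s(a)^2\mid\mc{H}_{s-1}] \leq |\mc{A}_m|\cdot \mb{E}[\IW_s(a)\mid\mc{H}_{s-1}]$ holds. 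Applying Freedman's inequality with a union bound over arms and stopping times, then AM-GM, yields with high probability, for all $a \in \mc{A}_m$ and all $\tau$ within a block,
\begin{equation*}
\sum_{s=t_{\ell}^m}^{\tau} \pmb{1}\{a \in \mc{G}_s\}\delta_s(a) \leq 2\sum_{s=t_{\ell}^m}^{\tau}\IW_s(a) + O(|\mc{A}_m|\log T).
\end{equation*}
Combined with the elimination rule (\Cref{line:elim}), which ensures $\sum_s \IW_s(a) \leq C_2|\mc{A}_m|\log T + |\mc{A}_m|$ at each arm's elimination time (or at block end for never-eliminated arms), every arm $a \in \mc{A}_m$ satisfies $\sum_s\pmb{1}\{a \in \mc{G}_s\}\delta_s(a) = \tilde{O}(|\mc{A}_m|)$.

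The main combinatorial step bounds the block regret. Applying Freedman once more with self-bounding to the martingale $\delta_t(a_t) - \mb{E}[\delta_t(a_t)\mid\mc{H}_{t-1}]$, using $\delta_t(a_t)\in[0,1]$, the realized block regret is bounded by $2\sum_t \mb{E}[\delta_t(a_t)\mid\mc{H}_{t-1}] + O(\log T)$, and since $a_t \sim \Unif(\mc{G}_t)$ this equals $2\sum_a\sum_t \pmb{1}\{a \in \mc{G}_t\}\delta_t(a)/|\mc{G}_t| + O(\log T)$. Order the arms of $\mc{A}_m$ by elimination time $\tau_{(1)} \leq \tau_{(2)} \leq \cdots$, breaking ties arbitrarily and assigning a fictitious elimination at block's end for never-eliminated arms. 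The key observation is that for each $t \leq \tau_{(i)}$, not only is $a_{(i)}$ still in $\mc{G}_t$, but so are all later-eliminated arms, so $|\mc{G}_t| \geq |\mc{A}_m|-i+1$. Combined with the per-arm bound from the first step,
\begin{equation*}
\sum_t \frac{\pmb{1}\{a_{(i)} \in \mc{G}_t\}\delta_t(a_{(i)})}{|\mc{G}_t|} \leq \frac{\tilde{O}(|\mc{A}_m|)}{|\mc{A}_m|-i+1}.
\end{equation*}
Summing this harmonic series over $i = 1, \ldots, |\mc{A}_m|$ yields a total of $\tilde{O}(|\mc{A}_m|) = \tilde{O}(S_m)$.

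Summing the per-block bound $\tilde{O}(S_m)$ over $m = 1, \ldots, m_\ell$ yields a geometric series dominated by $\tilde{O}(S_{m_\ell}) = \tilde{O}((t_{\ell+1}-t_{\ell})^{\beta/(\beta+1)})$, and summing over the $\hat{L}$ episodes concludes the proof. The main obstacle is the ordering-and-telescoping argument in the second step, which converts each arm's elimination threshold of $\tilde{O}(|\mc{A}_m|)$ into a total block regret of $\tilde{O}(|\mc{A}_m|)$, rather than the naive $\tilde{O}(|\mc{A}_m|^2)$ one gets by summing per-arm thresholds without accounting for the decreasing size of $\mc{G}_t$. The self-bounding form of Freedman is equally essential: without it, worst-case variance would introduce $\sqrt{|\mc{A}_m|\cdot T}$-style losses, producing suboptimal rates for the infinite-armed setting as noted following \Cref{goal}.
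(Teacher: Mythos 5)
Your proposal is correct and follows essentially the same route as the paper: the same conditional-expectation decomposition of the block regret via Freedman with self-bounding, the same variance-aware Freedman bound relating $\sum_t \delta_t(a)$ to $\sum_t \IW_t(a)$ combined with the elimination threshold to get a per-arm bound of $\tilde{O}(|\mc{A}_m|)$, and the same elimination-order/harmonic-sum accounting (your $|\mc{G}_t| \geq |\mc{A}_m|-i+1$ for $t \leq \tau_{(i)}$ is exactly the paper's $\sum_a 1/|\mc{G}_{t^a-1}| \leq 1+\log|\mc{A}_m|$ step). The only detail worth making explicit is that $2^{m_\ell} \lesssim t_{\ell+1}-t_\ell$ because the penultimate block ran to completion, which the paper handles via $|\mc{A}_{m_\ell}| \leq 2|\mc{A}_{m_\ell-1}|$.
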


\begin{proof}{(Outline)}
    %
	% We call an interval of $2^m$ rounds as used in \Cref{line:restart} of \Cref{alg:elim} for a fixed doubling horizon $2^m$ a {\em block}.
	% Let $t_{\ell}^m := t_{\ell} + 2^{m-1} - 1$ be the starting round of the $m$-th block within the $\ell$-th episode
	We give a proof outline here and full details are found in \Cref{app:sigshift-details}.
	It suffices to bound the regret on block $\ho{t_{\ell}^m}{t_{\ell}^{m+1}}$ by $\tilde{O}(S_m)$.
	%Define a good concentration event $\mc{E}$ on which we have for all intervals of rounds $I$ and arms $a \in \cap_{t \in I} \mc{G}_t$:
	%\[
		%\left| \sum_{t \in I} \delta_t(a) - \hat{\delta}_t(a) \right| \leq C \left( \log(T) + \sqrt{ \log(T) \sum_{t \in I} |\mc{G}_t|} \right),
	%\]
	%Freedman's inequality (\Cref{lem:freedman}) implies that the event $\mc{E}$ holds with probability at least $1-1/T$.
	%We first parametrize the subsampled arms $\mc{G}_{t_\ell^m}$ by $[K]$ for $K = \ceil{2^{(m+1) \cdot \frac{\beta}{\beta+1}}}$.
	To show this, we do a variance-aware version of the per-arm regret analysis of Section B.1 in \citet{suk22}. %, bounding the regret of each arm $a \in [S_m]$.

	We first transform the regret according to its conditional expectation.
	Note that, from the uniform sampling strategy,
	\[
		\mb{E}[ \delta_t(a_t) \mid \mc{H}_{t-1}] = \sum_{a \in \mc{G}_t} \frac{\delta_t(a)}{|\mc{G}_t|}.
	\]
	Next, note that
	\[
		\Var[ \delta_t(a_t) \mid \mc{H}_{t-1} ] \leq \mb{E}[ \delta_t^2(a_t) \mid \mc{H}_{t-1} ] \leq \mb{E}[ \delta_t(a_t) \mid \mc{H}_{t-1} ].
	\]
	Then, using Freedman's inequality (\Cref{lem:freedman}) with the above, we have for all subintervals $I \subseteq [T]$, with probability at least $1-1/T$: %there exists a positive constant $C>0$ such that
	\begin{align*}
		\sum_{t\in I} \delta_t(a_t) - \sum_{a \in \mc{G}_t} \frac{\delta_t(a)}{|\mc{G}_t|} &\lesssim  \sqrt{ \log(T) \sum_{t\in I} \mb{E}[ \delta_t(a_t) \mid \mc{H}_{t-1}] }  %\\
												  %&\qquad
		+ \log(T) %\\
												  %&
		\lesssim \sum_{t\in I} \sum_{a \in \mc{G}_t} \frac{\delta_t(a)}{|\mc{G}_t|} + \log(T),
	\end{align*}
	where the second inequality is from AM-GM.
	In light of the above, it remains to bound $\sum_{a=1}^K \sum_{t=t_{\ell}^m}^{t^a} \frac{\delta_t(a)}{|\mc{G}_t|}$
	where $t^a$ is the last round in block $\ho{t_{\ell}^m}{t_{\ell}^{m+1}}$ that $a$ is retained.

	We next again use Freedman's inequality (\Cref{lem:freedman}) and self-bounding to relate $\sum_{t\in I} \delta_t(a)$ to $\sum_{t\in I} \IW_t(a)$.
	We have w.p. at least $1-1/T$:
    \begin{align}
	    %&
	    \left|\sum_{t=t_{\ell}^m}^{t^a} \delta_t(a) - \IW_t(a) \right| %\cr
	     &\lesssim  \max_{t\in [t_{\ell}^m,t_a]} |\mc{G}_t| \log(T) + \sqrt{ \log(T) \sum_{t=t_{\ell}^m}^{t^a} \delta_t(a) \cdot |\mc{G}_t|} \nonumber \\
	    &\le  \half \sum_{t=t_{\ell}^m}^{t^a} \delta_t(a) + C' \max_{t \in [t_{\ell}^m,t^a]} |\mc{G}_t| \log(T),\label{eq:delta_error_bd_body}
    \end{align}
	where again we use AM-GM in the second inequality.
	Next,
	\begin{align*}
		\sum_{t=t_{\ell}^m}^{t^a} \frac{\delta_t(a)}{|\mc{G}_t|} &\leq \left( \max_{s \in [t_{\ell}^m,t^a]} \frac{1}{|\mc{G}_s|} \right) \sum_{t=t_{\ell}^m}^{t^a} \delta_t(a).
									   %&\leq \left( \max_{s \in [t_{\ell}^m,t^a-1]} \frac{1}{|\mc{G}_s|} \right) \sum_{t=t_{\ell}^m}^{t^a-1} \hat{\delta}_t(a) %+  C \left( \log(T) + \sqrt{ \log(T) \sum_{t = t_{\ell}^m}^{t_{\ell}^{m+1} - 1} |\mc{G}_t| } \right).
	\end{align*}
    % \jk{:Could you explain why the right hand side includes $\sqrt{|\mathcal{G}_t|}$ in the above?}
	% \joe{
	% 	The $\sqrt{|\mc{G}_t|}$ comes from the variance bound in Freedman.
	% 	In particular, note the IW estimator $\hat{\delta}_t(a)$ has variance
	% 	\begin{align*}
	% 		\Var(\hat{\delta}_t(a) \mid \mc{H}_{t-1}] &\leq \mb{E}[ \hat{\delta}_t^2(a) \mid \mc{H}_{t-1} ] \\
	% 							  &= \mb{E}[ |\mc{G}_t|^2 \cdot (1 - Y_t(a_t))^2 \cdot \pmb{1}\{a_t=a\} \mid \mc{H}_{t-1}] \\
	% 							  &\leq |\mc{G}_t|^2 \cdot \mb{E}[ (1-Y_t(a)) \mid \mc{H}_{t-1}] \cdot \mb{E}[ \pmb{1}\{a_t=a\} \mid \mc{H}_{t-1}]\\
	% 							  &= |\mc{G}_t|^2 \cdot \delta_t(a) \cdot \frac{1}{|\mc{G}_t|} \\
	% 							  &= \delta_t(a) \cdot |\mc{G}_t|,
	% 	\end{align*}
	% 	where in the third line we use the fact that $1-Y_t(a)$ and $\pmb{1}\{a_t=a\}$ are independent conditional on $\mc{H}_{t-1}$.
	% }
	Moving the $\half \sum_{t\in I} \delta_t(a)$ to the other side in \Cref{eq:delta_error_bd_body}, we get
	\[
		\sum_{t=t_{\ell}^m}^{t^a} \delta_t(a) \lesssim \sum_{t=t_{\ell}^m}^{t^a} \IW_t(a) + \log(T) \max_{t\in [t_{\ell}^m,t^a]} |\mc{G}_t|.
	\]
	%Plugging this into our previous regret bound for arm $a$, we have
	Combining the above two displays with \Cref{line:elim} of \Cref{alg:elim}, we have:
	\begin{align*}
	    		\sum_{t=t_{\ell}^m}^{t^a-1} \frac{\delta_t(a)}{|\mc{G}_t|} \lesssim \max_{s\in [t_{\ell}^m,t^a-1]} \frac{S_m}{|\mc{G}_s|}  \log(T),
	\end{align*}
%where the second inequality is obtained from the elimination guarantee (\Cref{line:elim} of \Cref{alg:elim})  and  $\max_{t \in [t_{\ell}^m,t^a-1]} |\mc{G}_t| = |\mc{A}_m|$.
 %    \jk{ could you explain why the below inequality holds?}
	% \rev{
	% 	since we can only evict at most one arm per round, the elimination rounds $t^a$ for which $t^a < t_{\ell}^{m+1}$ can be ordered and the candidate arm set $\mc{G}_t$ will decrease in cardinality by exactly one on each such elimination round.
	% 	Thus, we can break up the following sum depending on which arms were eliminated before the end of the block:
	%\[
		%|\mc{A}_{m_\ell}|\le 2|\mc{A}_{m_\ell-1}|= 2\ceil{ (2 (t_{\ell}^{m_\ell+1} - t_{\ell}^{m_\ell}) )^{\frac{\beta}{\beta+1}}  }.
	%\]
	Now, summing $\max_{s\in [t_{\ell}^m,t^a-1]} |\mc{G}_s|^{-1}$ over arms $a \in \mc{A}_m$ yields another $\log(S_m)$ factor, while summing over blocks $m\in[m_\ell]$ and episodes $\ell\in [\hat{L}]$ finishes the proof. %gives us the final result. % by Jensen's inequality.
	% Then, plugging in $|\mc{A}_m| \leq \ceil{\sqrt{2 (t_{\ell}^{m+1} - t_{\ell}) } }$ \jk{<-Where does this come from?} and
\end{proof}

   We next show the regret bound of \Cref{thm:regret-episodes} in fact recovers the minimax regret rates in terms of $\Lsig$ and $V_T$. % all of our non-stationarity measures.
\begin{cor}\label{cor:elim-bounds}
	\Cref{alg:elim} satisfies, w.p. at least $1-1/T$:
    \begin{align*}
	    {\normalfont \bld{R}}_T \leq \tilde{O}\left( (\Lsig + 1)^{\frac{1}{\beta+1}}  T^{\frac{\beta}{\beta+1}} \land ( V_R^{\frac{1}{\beta+2}}  T^{\frac{\beta+1}{\beta+2}} + T^{\frac{\beta}{\beta+1}} ) \right).
    \end{align*}
\end{cor}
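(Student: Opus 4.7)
The plan is to combine \Cref{thm:regret-episodes}, which reduces cumulative regret to $\tilde O(\sum_{\ell=1}^{\hat L} n_\ell^{\beta/(\beta+1)})$ where $n_\ell := t_{\ell+1} - t_\ell$, with an upper bound $\hat L \leq \Lsig + 1$ and a per-episode lower bound on rotting variation.

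\textbf{Step 1 (Restarts trigger only after significant shifts).} I would first show that $\hat L \leq \Lsig + 1$ with high probability by arguing \Cref{alg:elim} does not restart while the environment is safe. Fix any episode $[t_\ell, t)$ within which no significant shift has occurred; by \Cref{defn:sig-shift}, there exists a safe arm $a^*$ among the first $(t - t_\ell)^{\beta/(\beta+1)}$ arms sampled during the episode. Since the block subsample sizes $|\mc{A}_m| \sim 2^{m\beta/(\beta+1)}\log T$ grow rapidly and exceed the safety budget, a Chernoff-type argument using $\beta$-regularity of the reservoir (in the spirit of \Cref{lem:best-initial-arm}) shows $a^* \in \mc{A}_m$ for each block $m$ of the episode with high probability. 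I would then show the safe arm never triggers elimination: by Freedman's inequality and AM-GM (mirroring the argument for \Cref{eq:delta_error_bd_body}), $\sum_{s=t_\ell^m}^t \IW_s(a^*) \lesssim \sum_{s=t_\ell^m}^t \delta_s(a^*) + \max_{s}|\mc{G}_s|\log T$, and the safety condition \Cref{eq:sig-regret} bounds the first term by $\kappa_1^{-1}(2^m)^{\beta/(\beta+1)}$, which is dominated by the elimination threshold $C_2 |\mc{A}_m|\log T \sim 2^{m\beta/(\beta+1)}\log^2 T$. Thus $a^* \in \mc{G}_t$ throughout the episode, $\mc{G}_{t+1}$ stays non-empty, and no restart fires.

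\textbf{Step 2 ($\Lsig$-bound).} Given $\hat L \leq \Lsig + 1$ and $\sum_\ell n_\ell \leq T$, Jensen's inequality applied to the concave map $x \mapsto x^{\beta/(\beta+1)}$ yields $\sum_\ell n_\ell^{\beta/(\beta+1)} \leq \hat L^{1/(\beta+1)} T^{\beta/(\beta+1)} \leq (\Lsig+1)^{1/(\beta+1)} T^{\beta/(\beta+1)}$.

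\textbf{Step 3 ($V_R$-bound).} I would next show each non-terminal episode $\ell$ forces $\Omega(n_\ell^{-1/(\beta+1)})$ realized rotting variation, so that $\sum_{\ell < \hat L} n_\ell^{-1/(\beta+1)} \lesssim V_R$. The contrapositive of Step~1 implies the candidate safe arm $a^*$ in the last block of episode $\ell$ must have been eliminated, hence accumulated regret $\gtrsim n_\ell^{\beta/(\beta+1)}$. Its initial gap of order $n_\ell^{-1/(\beta+1)}$ (by \Cref{lem:best-initial-arm}) would yield at most comparable regret under stationarity, so the excess must come from rotting of $a^*$'s mean reward on rounds when $a^*$ is played. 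Under uniform exploration, $a^*$ is played often enough that this translates into $\Omega(n_\ell^{-1/(\beta+1)})$ contribution to realized $V_R$. Finally, using the identity $n^{\beta/(\beta+1)} = (n^{-1/(\beta+1)})^{1/(\beta+2)} \cdot n^{(\beta+1)/(\beta+2)}$ and Hölder's inequality with conjugate exponents $\beta+2$ and $(\beta+2)/(\beta+1)$:
\[
\sum_\ell n_\ell^{\beta/(\beta+1)} \leq \Bigl(\sum_\ell n_\ell^{-1/(\beta+1)}\Bigr)^{1/(\beta+2)} \cdot \Bigl(\sum_\ell n_\ell\Bigr)^{(\beta+1)/(\beta+2)} \lesssim V_R^{1/(\beta+2)} T^{(\beta+1)/(\beta+2)},
\]
plus an additive $T^{\beta/(\beta+1)}$ accounting for the last (possibly unterminated) episode.

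\textbf{Main obstacle.} The most delicate step is the translation in Step~3 from ``safe arm eliminated'' to realized $V_R$: since $V_R$ is measured only at played arms (not at $a^*$ specifically), one must carefully leverage the uniform exploration rate $1/|\mc{G}_t|$ to argue the rotting of $a^*$'s mean reward indeed manifests as realized rotting increments on rounds when $a^*$ is sampled. A secondary subtle point is coordinating the ``first $n^{\beta/(\beta+1)}$ sampled arms'' of \Cref{defn:sig-shift} with the multi-block fresh-resampling in \Cref{alg:elim}, requiring bookkeeping on when the safe arm first enters the algorithm's subsample across the doubling blocks.
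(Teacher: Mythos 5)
Your three-step skeleton (bound the number of episodes $\hat L$ by $\Lsig+1$, Jensen for the $\Lsig$ rate, a per-episode variation lower bound plus H\"older for the $V_R$ rate) is exactly the paper's, and your Step 2 and the final H\"older computation match \Cref{subsec:proof-sum}. However, both load-bearing steps have genuine gaps.

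In Step 1 you argue ``safe environment $\Rightarrow$ the safe arm $a^*$ survives elimination $\Rightarrow$ no restart,'' which requires $a^*\in\mc{A}_m$ for every block of the episode. No Chernoff argument can give this: $a^*$ is a fixed, previously drawn arm, and \Cref{alg:elim} draws a \emph{fresh} subsample from the infinite reservoir at the start of each block (\Cref{line:subsample}), so $a^*$ simply will not reappear in later blocks. The paper runs the implication in the opposite direction: a restart means every arm of $\mc{A}_{m_\ell}$ was eliminated, and the elimination rule (\Cref{line:elim}) together with the Freedman/AM--GM bound \eqref{eq:delta_error_bd} forces $\sum_{s=t_\ell^{m_\ell}}^{t^a}\delta_s(a)\gtrsim |\mc{A}_{m_\ell}|\log(T)\ge \kappa_1^{-1}(t^a-t_\ell^{m_\ell})^{\beta/(\beta+1)}$ for every such $a$; thus every subsampled arm violates \eqref{eq:sig-regret}, the environment over the block is not safe, and a significant shift must have occurred. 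This sidesteps any need to track a particular safe arm across blocks.

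Step 3 is aimed at the wrong arm and left unresolved. The variation lower bound should be run on the \emph{best initially subsampled arm} $\hat a_{\ell,m_\ell}=\argmax_{a\in\mc{A}_{m_\ell}}\mu_0(a)$ of the last block, whose initial gap is $\tilde O(2^{-m_\ell/(\beta+1)})$ by \Cref{lem:best-initial-arm}; that lemma says nothing about your $a^*$, which need not have a small initial gap. Since $\hat a_{\ell,m_\ell}$ is also eliminated, there is a round $t'$ with $\delta_{t'}(\hat a_{\ell,m_\ell})\gtrsim (t_{\ell+1}-t_\ell^{m_\ell})^{-1/(\beta+1)}\log^3(T)$, so its mean must have dropped by at least $\Omega\bigl((t_{\ell+1}-t_\ell)^{-1/(\beta+1)}\log^3(T)\bigr)$. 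The ``main obstacle'' you flag---converting this drop into realized $V_R$---is not an obstacle in this model: the adversary is rested, so an arm's mean can only change on rounds on which that arm is played, and hence the entire drop is automatically a sum of realized rotting increments $(\mu_{t-1}(a_{t-1})-\mu_t(a_{t-1}))_+$. No argument about the uniform exploration rate $1/|\mc{G}_t|$ is needed.
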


\begin{proof}{(Sketch)}
	Using similar arguments to the proof of \Cref{thm:blackbox}, within each block the best initial arm $\hat{a}_{\ell,m}$ has gap $\tilde{O}(2^{-m \cdot \frac{1}{\beta+1}})$.
	On the other hand, within a block $\ho{t_{\ell}^{m_{\ell}}}{t_{\ell+1}}$ ending in a restart, this best initial arm is eliminated meaning there exists a round $t \in \ho{t_{\ell}^{m_{\ell}}}{t_{\ell+1}}$ such that $\delta_t(\hat{a}_{\ell,m_{\ell}}) \gtrsim (t_{\ell+1} - t_{\ell}^{m_{\ell}})^{- \frac{1}{\beta+1}}$.
	Thus, the gap of $\hat{a}_{\ell,m_{\ell}}$ must have increased by at least this amount which gives a lower bound on the per-episode total variation of $\Omega( (t_{\ell+1} - t_{\ell})^{-\frac{1}{\beta+1}})$.
	Then, by similar arguments to the proof of \Cref{thm:blackbox},
	%Corollary 2 of \citet{suk22} (i.e., using Jensen and H\"{o}lder's inequalities on the rate of \Cref{thm:regret-episodes}),
	we deduce the total regret bound.
\end{proof}

\section{Comparing Blackbox vs. Elimination}\label{sec:discussion}

\Cref{cor:elim-bounds} and the nearly matching lower bounds of \Cref{sec:lower} show that only rotting non-stationarity ($\Lsig \leq L_R$ and $V_R$) factor into the difficulty of non-stationarity. %, even under arbitrary non-stationarity.
%In particular, the non-stationarity measures $\Lsig \leq L_R$ and $V_R$ only increase as the amount of rotting increases.
In other words, rising non-stationarity is benign for non-stationary infinite-armed bandits.
Intuitively, this is because our problem assumes knowledge of both the top reward value and upper bound on rewards, which coincide and are equal to $1$.
Hence, arms with rising rewards require less exploration.

Interestingly, unlike the case with \Cref{alg:blackbox}, the elimination algorithm's bound in \Cref{cor:elim-bounds} does {\em not} require an upper bound on masses of the reservoir (\Cref{ass:reservoir}), % is {\em not} required to show \Cref{cor:elim-bounds}.
or there's no dependence on $\kappa_2$ in the regret upper bounds of \Cref{cor:elim-bounds}.
This is new to this work and was not known even in the previous stationary regret bounds \citep{wang08,bayati20,kim24}.
This suggests our regret analysis is simpler and, indeed, we only require that the initial best arm in the subsample has small gap (which only requires lower bounded masses of the reservoir as we see in \Cref{lem:best-initial-arm}).
To contrast, the regret analysis of \Cref{thm:blackbox} (more similar to those of the aforementioned works) uses the upper bound on tail probabilities scaling with $\kappa_2$ to bound the regret of the finite-armed MAB base algorithm.
Such a step is avoided in the regret analysis of \Cref{thm:regret-episodes}  by estimating the regret of each arm separately.

On the other hand, the blackbox can be seen as more extensible as it allows for a wide range of finite-armed MAB algorithms to be used as a base.
We finally note \Cref{alg:elim} can essentially be reformulated as an instantiation of the blackbox with a successive elimination base algorithm.

\section{Experiments}
\begin{figure}[h]
\centering     %%% not \center
%\subfigure[$\beta=0.8$]{\label{fig:a}\includegraphics[width=4cm]{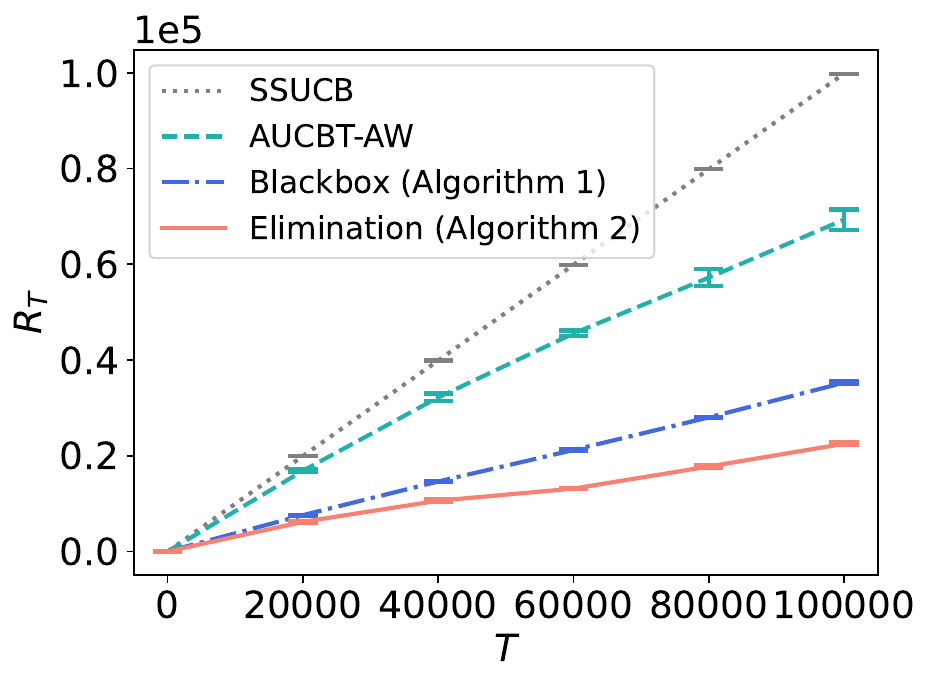}}
\subfigure[$\beta=0.8$]{\label{fig:a}\includegraphics[width=0.3\linewidth]{image/T100000num6beta0.8repeat5.pdf}}
%\hspace{0.3cm}
\subfigure[$\beta=1$]
% Final matching is not yet confirmed.
 %{\label{fig:b}\includegraphics[width=4cm]{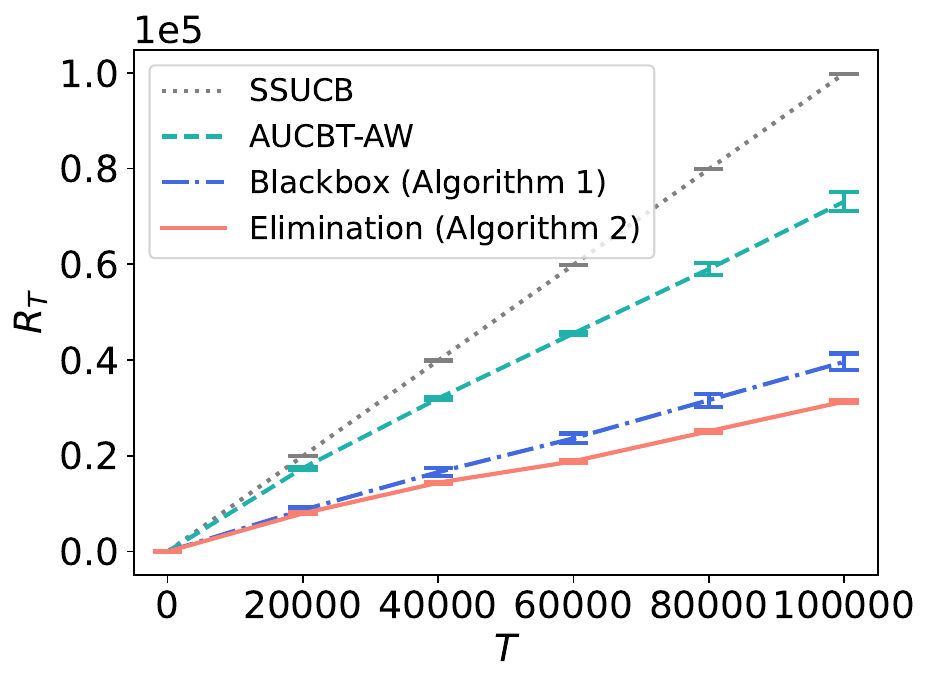}}
 {\label{fig:b}\includegraphics[width=0.3\linewidth]{image/T100000num6beta1repeat5.pdf}}
%\hspace{0.3cm}
%\subfigure[$\beta=1.2$]{\label{fig:c}\includegraphics[width=4cm]{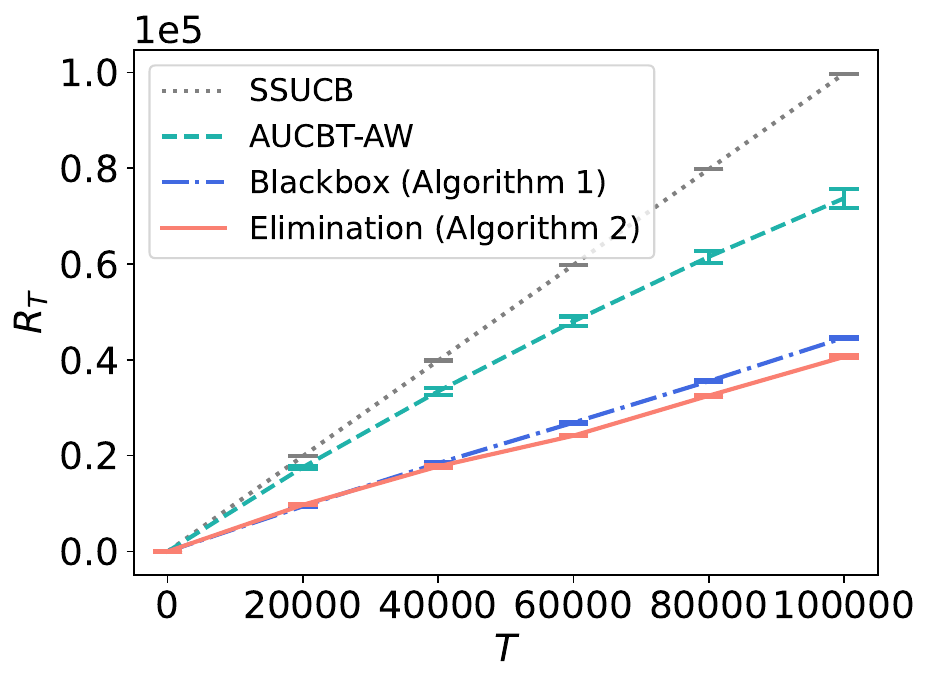}}
\subfigure[$\beta=1.2$]{\label{fig:c}\includegraphics[width=0.3\linewidth]{image/T100000num6beta1.2repeat5.pdf}}
\vspace{-1mm}
\caption{Experimental results showing regret of algorithms}\label{fig:exp}\vspace{-1mm}
\end{figure}

Here, we demonstrate the performance of \Cref{alg:blackbox,alg:elim} on synthetic datasets.
%our algorithms (\texttt{Blackbox} (\Cref{alg:blackbox}) and \texttt{Elimination} (\Cref{alg:elim})) using synthetic datasets.
For the $\base$ in \Cref{alg:blackbox} we use \texttt{UCB} \citep{auer2002finite}.
As benchmarks, we also implement \texttt{SSUCB} \citep{bayati20}, the optimal algorithm for stationary environments, and \texttt{AUCBT-ASW} \citep{kim24}, which achieves a suboptimal regret bound of  $
\tilde{O}(\min\{V^{\frac{1}{\beta+2}}T^{\frac{\beta+1}{\beta+2}},(L+1)^{\frac{1}{\beta+1}}T^{\frac{\beta}{\beta+1}}\} + \min\{T^{\frac{2\beta+1}{2\beta+2}},T^\frac{3}{4}\})$
in rested rotting scenarios.

To ensure a fair comparison between algorithms, we consider a rotting scenario where the mean reward of each selected arm decreases at a rate of \(1/t\) at time \(t\).
In this environment, for all our algorithms, it can be shown that \(L = \Omega(T) \), \(V = \tilde{O}(1)\).
For the case of \(\beta=1\) such that initial mean rewards follow a uniform distribution on $[0,1]$ (\Cref{fig:b}), both our algorithms outperform the benchmarks, with the elimination one achieving the best performance.
These results validate the insights of \Cref{sec:discussion}.
Specifically, our algorithms have a regret bound of $\tilde{O}(T^{2/3})$ (\Cref{thm:blackbox} and \Cref{cor:elim-bounds}) vs. \texttt{AUCBT-ASW}'s bound of $\tilde{O}(T^{3/4})$ \citep{kim24}.
%per \Cref{thm:blackbox} and \Cref{cor:elim-bounds}, \Cref{alg:blackbox} has a regret guarantee of \(\tilde{O}(T^{2/3})\), while \Cref{alg:elim} achieves a tighter regret bound from \(\tilde{L}\le L\).
%Additionally, as shown in \citet{kim24}, \texttt{AUCBT-ASW} has a worse guarantee of \(\tilde{O}(T^{3/4})\), again matching with our experimental findings above.

In \Cref{fig:exp}, we observe that our algorithms outperform the benchmarks across various \(\beta\) values, aligning with theoretical results.
Furthermore, the performance gap between the elimination and blackbox algorithms increases as \(\beta\) decreases, which observation is also consistent with our theoretical results.

% This observation is also consistent with our theoretical results: the blackbox achieves a regret bound of \(\tilde{O}(\max\{T^{2/3}, T^{\frac{\beta+1}{\beta+2}}\})\), while our elimination algorithm achieves \(\tilde{O}((\tilde{L}+1)^{\frac{1}{\beta+1}}T^{\frac{\beta}{\beta+1}})\) for our experimental environment.

\bld{Note:} our implementation of \Cref{alg:elim} does not include the $\log(T)$ factor in the subsampling rate of \Cref{line:subsample}, as this lead to more stable experimental results.
	%In fact, this $\log(T)$ factor is only included in the subsampling rate to properly show the concentration event $\mc{E}_6$ holds with high probability in the proof of \Cref{cor:elim-bounds}.
	One can show this only changes the bound of \Cref{cor:elim-bounds} up to a $\log^{2/\beta}(T)$ factor, %which could be large for $\beta$ very small.
	which is not large for $\beta \in \{ 0.8, 1, 1.2\}$.
	%the analysis and regret bound is identical, up to slightly different $\log$ factors, when considering \Cref{alg:elim} without the $\log(T)$ factor in the subsampling rate.

% Then For simplicity, we consider $\beta=1$ following uniform distribution for the initial mean rewards.

% This implies that our algorithm achieves a regret bound of  $\tilde{O}(T^{2/3})$ while \texttt{AUCBT-ASW} achieves a regret bound of $\tilde{O}(T^{3/4})$. In \Cref{fig:exp1}, we can observe that our algorithm outperforms other benchmarks, which match the theoretical results.

%\input{impact}

% somehow need to first use bib style test and then plainnat-link
\bibliographystyle{plainnat}
\bibliography{bibs/bandit_general,
bibs/nonstat,
bibs/online,
bibs/duel,
bibs/contextual,
bibs/nonpar,
bibs/drift,
bibs/slow,
bibs/suk,
bibs/rot,
bibs/rl
}

%%%%%%%%%%%%%%%%%%%%%%%%%%%%%%%%%%%%%%%%%%%%%%%%%%%%%%%%%%%%%%%%%%%%%%%%%%%%%%%
% APPENDIX
%%%%%%%%%%%%%%%%%%%%%%%%%%%%%%%%%%%%%%%%%%%%%%%%%%%%%%%%%%%%%%%%%%%%%%%%%%%%%%%
\newpage
\appendix

\section{Blackbox Algorithm Regret Analysis (Details for the Proof of \Cref{thm:blackbox})}\label{app:blackbox}

Here, we present the details of the proof of \Cref{thm:blackbox} following the outline of \Cref{sec:blackbox}.
Again, we focus on the case of $\beta \geq 1$ and discuss modifications of the argument required for $\beta < 1$ in \Cref{subsec:beta-small}.

\subsection{Preliminaries}\label{subsec:prelim}

Let $c_0,c_1,c_2,\ldots$ denote constants not depending on $T$, $\kappa_1$, or $\kappa_2$ (\Cref{ass:reservoir}).
In what follows, all logarithms will be base $2$.
We'll also assume WLOG that $\log(T) \geq \kappa_1^{-\frac{1}{\beta}} \vee \kappa_2$, as otherwise we can bound the regret by a constant only depending on $\kappa_1$ and $\kappa_2$.

Next, we establish a basic fact about the block structure of \Cref{alg:blackbox}.

\begin{fact}\label{fact:ep-dominate-log}
	Recall from \Cref{sec:blackbox} that $m_{\ell}$ is the index of the last block in the $\ell$-th episode.
	Then, for any episode $\ho{t_{\ell}}{t_{\ell+1}}$ terminating in a restart (via \Cref{line:cpd-test} of \Cref{alg:blackbox}), we have
	\begin{align}\label{eq:sm-small}
		S_{m_{\ell}} = \ceil{ 2^{m_{\ell} \cdot \frac{\beta}{\beta+1}} } \leq 2^{m_{\ell}},
	\end{align}
	and
	\begin{align}
		(t_{\ell+1} - t_{\ell}^{m_{\ell}})^{ \frac{1}{\beta+1}} &\geq \frac{C_1 }{2^{\frac{1}{\beta+1}}} \log^{3}(T) \label{eq:ep-large-enough-beta}
		%2^{m_{\ell}} &\geq (C_1 C_2)^{\beta+1} \log^{2 (\beta+1)}(T) \label{eq:ep-large-enough}
	\end{align}
\end{fact}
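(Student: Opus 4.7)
The plan is as follows. Both inequalities are book-keeping consequences of the definition $S_m = \lceil 2^{m\beta/(\beta+1)}\rceil$ combined with the trigger in \Cref{line:cpd-test}; the regime $\beta \geq 1$ is essential only for the second display, and I do not foresee a real obstacle beyond careful tracking of exponents.

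For \Cref{eq:sm-small}, I would argue directly from the ceiling: since $\beta > 0$ implies $\beta/(\beta+1) < 1$, we have $2^{m_\ell \beta/(\beta+1)} < 2^{m_\ell}$, and because $2^{m_\ell}$ is a positive integer, the standard fact $\lceil x\rceil \leq y$ whenever $x \leq y \in \mb{Z}$ gives $S_{m_\ell} \leq 2^{m_\ell}$.

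For \Cref{eq:ep-large-enough-beta}, I would combine two facts. First, since the episode terminates in a restart, the test in \Cref{line:cpd-test} fires at round $t_{\ell+1}-1$ during block $m_\ell$, and because $\hat{\delta}_s \in [0,1]$,
\[
t_{\ell+1} - t_\ell^{m_\ell} \;\geq\; \sum_{s=t_\ell^{m_\ell}}^{t_{\ell+1}-1} \hat{\delta}_s(a_s) \;\geq\; C_1 \cdot (|\mc{A}_{m_\ell}| \vee 2^{m_\ell/2}) \log^3(T).
\]
Second, the block length is capped by $2^{m_\ell}$ by design (the inner loop runs for at most $2^{m_\ell}$ rounds). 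When $\beta \geq 1$ we have $\beta/(\beta+1) \geq 1/2$, hence $S_{m_\ell} \geq 2^{m_\ell \beta/(\beta+1)} \geq 2^{m_\ell/2}$; together with \Cref{eq:sm-small} this identifies $|\mc{A}_{m_\ell}| = S_{m_\ell}$ and $|\mc{A}_{m_\ell}| \vee 2^{m_\ell/2} = S_{m_\ell} \geq 2^{m_\ell \beta/(\beta+1)}$. Chaining the block-length cap into the displayed inequality yields $2^{m_\ell} \geq C_1 \cdot 2^{m_\ell \beta/(\beta+1)} \log^3(T)$, i.e.\ $2^{m_\ell/(\beta+1)} \geq C_1 \log^3(T)$. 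Substituting this back into the lower bound for $t_{\ell+1} - t_\ell^{m_\ell}$ gives
\[
t_{\ell+1} - t_\ell^{m_\ell} \;\geq\; C_1 \cdot (C_1 \log^3(T))^{\beta} \log^3(T) \;=\; C_1^{\beta+1} \log^{3(\beta+1)}(T),
\]
and taking $(\beta+1)$-th roots yields $(t_{\ell+1} - t_\ell^{m_\ell})^{1/(\beta+1)} \geq C_1 \log^3(T)$, which is stronger than the stated $\frac{C_1}{2^{1/(\beta+1)}} \log^3(T)$ bound since $2^{1/(\beta+1)} > 1$.
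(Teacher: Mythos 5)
Your proof is correct and follows essentially the same route as the paper's: both parts come from combining the restart trigger in \Cref{line:cpd-test} with the block-length cap $t_{\ell+1}-t_{\ell}^{m_{\ell}}\le 2^{m_{\ell}}$ and the definition of $S_m$ together with $\beta/(\beta+1)\ge 1/2$ for $\beta\ge 1$. The only (harmless) differences are that you prove \Cref{eq:sm-small} directly from the ceiling rather than via the paper's contradiction with the trigger, and your back-substitution yields a marginally stronger bound without the $2^{-1/(\beta+1)}$ loss.
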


\begin{proof}
	By \Cref{line:cpd-test} of \Cref{alg:blackbox}, we must have
	\[
		2^{m_{\ell}} \geq t_{\ell+1} - t_{\ell}^{m_{\ell}} + 1 \geq \sum_{s = t_{\ell}^{m_{\ell}}}^{t_{\ell+1}-1} 1 - Y_s(a_s) \geq C_1 \cdot |\mc{A}_{m_{\ell}}| \cdot \log^3(T). %\cdot C_2 \cdot 2^{\frac{m_{\ell} \cdot \frac{\beta}{\beta+1}} \cdot \log(T) } \land 2^{m_{\ell}}
	\]
	Now, recall the definition of $S_m$ so that $|\mc{A}_{m_{\ell}}| := \ceil{ 2^{m \cdot \frac{\beta}{\beta+1}}  } \land 2^m$.
	We first note that we cannot have $|\mc{A}_{m_{\ell}}| = 2^{m_{\ell}}$ since the above display would then become $1 \geq C_1 \log^3(T)$.
	Thus
	\[
		t_{\ell+1} - t_{\ell}^{m_{\ell}} + 1 \geq C_1 \cdot 2^{m_{\ell} \cdot \frac{\beta}{\beta+1}} \cdot \log^3(T) \geq C_1  \cdot (t_{\ell+1} - t_{\ell}^{m_{\ell}} + 1)^{\frac{\beta}{\beta+1}} \cdot \log^3(T).
	\]
	Rearranging this becomes $(t_{\ell+1} - t_{\ell}^{m_{\ell}} + 1)^{\frac{1}{\beta+1}} \geq C_1 \log^3(T)$.
	Further, bounding $t_{\ell+1} - t_{\ell}^{m_{\ell}} + 1 \leq 2 \cdot (t_{\ell+1} - t_{\ell}^{m_{\ell}})$ finishes the proof.
	%$2^{m_{\ell} \frac{\beta}{\beta+1}} \geq (C_1 C_2)^{\beta} \log^{2\beta}(T)$. %which yields \Cref{eq:ep-large-enough-beta}.
\end{proof}

%\begin{assumption}\label{ass:constants}
	%Assume WLOG that $\kappa_2 \geq 72$, that $2\kappa_2/\kappa_1 \geq 2$, and that $\kappa_1 \leq \frac{36}{8^{\beta}}$ which can always be made possible since $\kappa_1$ can be made arbitrarily small and $\kappa_2$ can be made arbitrarily large in \Cref{ass:reservoir}.
%\end{assumption}

\subsection{Using Concentration to Bound Per-Block Regret (Proof of \Cref{lem:concentration-bounds})}\label{subsec:proof-concentration-bounds}

We first present Freedman's inequality which is used in \Cref{sec:blackbox} to bound the regret using the empirical regret bound of \Cref{line:cpd-test} of \Cref{alg:blackbox}.

\begin{lemma}[Strengthened Freedman's Inequality, Theorem 9 \citep{zimmert22}]\label{lem:freedman}
	Let $X_1,X_2,\ldots,X_T$ be a martingale difference sequence with respect to a filtration $\mc{F}_1 \subseteq \mc{F}_2 \subseteq \cdots \subseteq \mc{F}_T$ such that $\mb{E}[X_t \mid \mc{F}_t ] = 0$ and assume $\mb{E}[ | X_t | \mid \mc{F}_t] < \infty$ a.s.. Then, with probability at least $1-\delta$,
	\[
		\sum_{t=1}^T X_t \leq 3 \sqrt{ V_t \log\left( \frac{2 \max\{ U_t, \sqrt{V_T}\}}{\delta} \right) } + 2 U_T \log \left( \frac{2 \max\{ U_T, \sqrt{V_T}\}}{\delta} \right),
	\]
	where $V_T = \sum_{t=1}^T \mb{E}[ X_t^2 \mid \mc{F}_t]$, and $U_T = \max \{ 1 ,\max_{s \in [T]} X_s \}$.
\end{lemma}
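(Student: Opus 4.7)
My plan is to reduce to a version of Freedman's inequality where the variance proxy $V_T$ and the range bound $U_T$ are fixed constants, and then stitch together dyadic scales of these two quantities so that they can appear inside the logarithm on the right-hand side (the source of the ``strengthening'' over the textbook statement).

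First, I would set up the standard exponential supermartingale. For $\lambda \in (0, 1/U)$ where $U$ is a fixed upper bound on $|X_t|$, define
\[
 M_t(\lambda) \;:=\; \exp\!\left( \lambda \sum_{s=1}^t X_s \;-\; \phi(\lambda) \sum_{s=1}^t \mathbb{E}[X_s^2 \mid \mathcal{F}_s] \right),
\]
where $\phi(\lambda) = e^\lambda - 1 - \lambda$. Using $e^x \le 1 + x + x^2 \phi(x)/x^2$ on the increments together with the inequality $\phi(\lambda)/\lambda^2 \le 1$ for $\lambda \le 1$, one checks that $M_t(\lambda)$ is a supermartingale. Doob's maximal inequality then yields $\mathbb{P}(\max_{t\le T} M_t(\lambda) \ge 1/\delta) \le \delta$, which unpacks to: on an event of probability at least $1-\delta$, for every $t\le T$ simultaneously,
\[
 \sum_{s=1}^t X_s \;\le\; \phi(\lambda)\,V_t/\lambda \;+\; \log(1/\delta)/\lambda.
\]
Optimizing $\lambda$ in terms of fixed parameters $V, U$ produces the classical Freedman bound $3\sqrt{V\log(1/\delta)} + 2U\log(1/\delta)$.

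The main obstacle, and the reason the statement says ``$\max\{U_T, \sqrt{V_T}\}$'' appears inside the log rather than as a free parameter, is the peeling step: the bounds $V_T$ and $U_T$ are random and not known in advance. I would address this by applying the fixed-parameter version along a dyadic grid of values $(V_j, U_k) = (2^j, 2^k)$ for $j,k$ ranging over nonnegative integers, each with failure probability $\delta_{j,k} := \delta / (2(j+1)^2(k+1)^2)$; the total failure probability is still $O(\delta)$ by a union bound since $\sum_{j,k} 1/((j+1)^2 (k+1)^2)$ converges. For the realized random $(V_T, U_T)$ we locate the dyadic cell $(V_{j^\ast}, U_{k^\ast})$ with $V_{j^\ast-1} \le V_T \le V_{j^\ast}$ and similarly for $U_T$; on the corresponding good event, the logarithm appearing in the bound for that cell is $\log(1/\delta_{j^\ast,k^\ast}) = \log(1/\delta) + O(\log(j^\ast + k^\ast))$, and since $j^\ast, k^\ast = O(\log \max\{U_T, \sqrt{V_T}\})$, this is absorbed into a factor $\log(\max\{U_T, \sqrt{V_T}\}/\delta)$ matching the target expression.

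Finally I would collect constants: the doubling inflates $V_{j^\ast}$ and $U_{k^\ast}$ by at most a factor of $2$ over $V_T$ and $U_T$, which only affects the leading constants $3$ and $2$ in the conclusion, so with the standard bookkeeping one recovers the stated inequality
\[
 \sum_{t=1}^T X_t \;\le\; 3\sqrt{V_T \log\!\left(\tfrac{2 \max\{U_T,\sqrt{V_T}\}}{\delta}\right)} \;+\; 2 U_T \log\!\left(\tfrac{2 \max\{U_T,\sqrt{V_T}\}}{\delta}\right).
\]
The subtlety that requires care is verifying the supermartingale property when $X_t$ is only bounded above (not necessarily symmetrically bounded) so that $U_T$ controls only the positive tail; this is what allows ``$U_T$'' rather than a two-sided bound in the statement, and follows because the Bennett-type inequality $e^x \le 1+x+\phi(x)$ is one-sided in its use.
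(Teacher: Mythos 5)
First, note that the paper does not prove this lemma at all: it is imported verbatim as Theorem 9 of \citet{zimmert22}, so there is no in-paper proof to compare your argument against. Judged on its own terms, your overall strategy --- a one-sided Bennett-type exponential supermartingale, Doob's maximal inequality, and a dyadic stratification over the scales of $V_T$ and $U_T$ with summable failure probabilities $\delta_{j,k}$ so that only a $\log\log$ overhead enters --- is the standard and correct route to a ``self-normalized'' Freedman bound of this shape, and your observation that the MGF bound $e^{x}\le 1+x+x^{2}\,\phi(b)/b^{2}$ for $x\le b$ is one-sided (so only $\max_s X_s$, not $|X_s|$, needs to be controlled) is exactly the right reason the statement can use $U_T=\max\{1,\max_s X_s\}$.

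There is, however, one genuine gap at the peeling step over $U$. For a given grid value $U_k=2^{k}$, the process $M_t(\lambda)$ with $\lambda\in(0,1/U_k)$ is a supermartingale only under the \emph{almost-sure} hypothesis $X_t\le U_k$; the event $\{U_T\le U_k\}$ is random and cannot simply be intersected with the conclusion, since conditioning on it destroys the martingale structure. As written, ``applying the fixed-parameter version'' on the cell $(j,k)$ therefore does not yield a valid probability bound for small $k$ (e.g.\ $k=0$ when $X_t$ is unbounded above). The standard repair is to truncate: set $\tilde X_t:=\min(X_t,U_k)$, note that $\mathbb{E}[\tilde X_t\mid\mathcal{F}_t]\le 0$, that $\tilde X_t\le U_k$ almost surely, and that $\mathbb{E}[\tilde X_t^{2}\mid\mathcal{F}_t]\le\mathbb{E}[X_t^{2}\mid\mathcal{F}_t]$ (using $U_k\ge 1>0$), so the fixed-parameter supermartingale argument applies to $\{\tilde X_t\}$; then on the event $\{U_T\le U_k\}$ the truncated and original partial sums coincide, which is what lets you read off the bound on the realized cell $(j^{\ast},k^{\ast})$. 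Your closing remark about one-sidedness does not address this issue. Separately, your claim that the factor-of-$2$ inflation from the grid and the $O(\log(j^{\ast}+k^{\ast}))$ overhead are ``absorbed'' into the constants $3$ and $2$ and the argument $2\max\{U_T,\sqrt{V_T}\}/\delta$ of the logarithm is plausible asymptotically but is not verified; since the paper only uses this lemma up to constants, that is a minor bookkeeping issue rather than a substantive one.
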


We next use this to relate the per-episode regret to the empirical bounds of \Cref{line:cpd-test}.

\begin{lemma}\label{lem:concentration-bounds}
	Let $\mc{E}_1$ be the event that (a) for all blocks $\ho{t_{\ell}^m}{t_{\ell}^{m+1}}$,
	\begin{equation}\label{eq:block-bound}
		\sum_{s=t_{\ell}^m}^{t_{\ell}^{m+1}-1} \delta_s(a_s) < 3 C_1 \cdot |\mc{A}_m| \cdot \log^3(T).
	\end{equation}
	and (b) for the last block $\ho{t_{\ell}^{m_{\ell}}}{t_{\ell+1}}$ of episodes $\ho{t_{\ell}}{t_{\ell+1}}$ concluding in a restart, we have:
	\begin{equation}\label{eq:large-regret-last-block}
		\sum_{s=t_{\ell}^{m_{\ell}}}^{t_{\ell+1} - 1} \delta_s(a_s) \geq \frac{C_1}{2} \cdot |\mc{A}_{m_{\ell}}| \cdot \log^3(T).
	\end{equation}
	Then, $\mc{E}_1$ occurs with probability at least $1-1/T$.
\end{lemma}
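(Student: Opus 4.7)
The plan is to apply Freedman's inequality (\Cref{lem:freedman}) to the martingale difference sequence $X_s := \delta_s(a_s) - \hat\delta_s(a_s) = \mu_s(a_s) - Y_s(a_s)$, adapted to the filtration generated by the agent's decisions and observations through round $s$, and then read both (a) and (b) directly off the empirical threshold check in \Cref{line:cpd-test}. The key ingredient enabling the self-bounding argument sketched in \eqref{eq:concentration-body} is that, since $\hat\delta_s(a_s) \in [0,1]$,
\begin{equation*}
\mathbb{E}[X_s^2 \mid \mathcal{F}_s] \;\leq\; \mathbb{E}[\hat\delta_s(a_s)^2 \mid \mathcal{F}_s] \;\leq\; \mathbb{E}[\hat\delta_s(a_s) \mid \mathcal{F}_s] \;=\; \delta_s(a_s),
\end{equation*}
so the conditional variance is controlled by the (conditional) expected regret itself.

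First I will fix a deterministic subinterval $[s_1,s_2] \subseteq [T]$ and apply \Cref{lem:freedman} with failure probability $T^{-3}$. Using $U_T \leq 1$ and $V_T \leq T$ collapses the $\log(\cdot)$ factor in Freedman to $O(\log T)$ and yields
\begin{equation*}
\Bigl| \sum_{s=s_1}^{s_2} \delta_s(a_s) - \hat\delta_s(a_s) \Bigr| \;\leq\; c_0 \sqrt{\log(T) \sum_{s=s_1}^{s_2} \delta_s(a_s)} + c_0 \log(T).
\end{equation*}
A union bound over the $O(T^2)$ choices of $(s_1,s_2)$ makes the above hold simultaneously with probability at least $1 - 1/T$, and an AM-GM step $\sqrt{ab} \leq a/2 + b/2$ then converts the square-root term into $\tfrac{1}{2}\sum_{s=s_1}^{s_2} \delta_s(a_s) + c_1\log^2(T)$, reproducing \eqref{eq:concentration-body}.

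Next I will instantiate this uniform bound at each (random) block $[t_\ell^m, t_\ell^{m+1}-1]$ and extract both claims. For part (a): if the block did not trigger a restart, the test in \Cref{line:cpd-test} was not violated strictly before the last round, so $\sum_s \hat\delta_s(a_s) < C_1|\mathcal{A}_m|\log^3(T) + 1$ over the whole block (with a single $[0,1]$-bounded increment possibly crossing the threshold at the terminal round); rearranging the concentration bound gives $\sum_s \delta_s(a_s) \leq 2C_1|\mathcal{A}_m|\log^3(T) + O(\log^2 T) < 3C_1|\mathcal{A}_m|\log^3(T)$ for $C_1$ chosen large enough. For part (b): a block ending in a restart has empirical sum at least $C_1|\mathcal{A}_m|\log^3(T)$ by definition of the test, and the reversed concentration inequality yields $\sum_s \delta_s(a_s) \geq \tfrac{2}{3}(C_1|\mathcal{A}_m|\log^3(T) - c_1\log^2(T)) \geq \tfrac{C_1}{2}|\mathcal{A}_m|\log^3(T)$, again for $C_1$ sufficiently large relative to $c_0,c_1$. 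Since we are in the regime $\beta \geq 1$, the threshold $(|\mathcal{A}_m| \vee 2^{m/2})$ in \Cref{line:cpd-test} coincides with $|\mathcal{A}_m|$ (as $S_m \geq 2^{m/2}$ when $\beta \geq 1$), so no additional bookkeeping is required; the $\beta<1$ case is treated by the same argument with $2^{m/2}$ in place of $|\mathcal{A}_m|$, per the discussion promised in \Cref{subsec:beta-small}.

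The main subtlety to handle carefully is that the blocks are themselves random stopping intervals, so the concentration bound must be proven first for all $O(T^2)$ deterministic subintervals and only afterward instantiated at the algorithm's blocks; this is what forces the $O(\log^2 T)$ slack absorbed into the constants $c_0, c_1$. Beyond that, the argument is a templated Freedman-plus-AM-GM self-bounding computation and presents no further obstacles.
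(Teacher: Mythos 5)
Your proof is correct and follows essentially the same route as the paper: Freedman's inequality applied to the martingale difference $\delta_s(a_s)-\hat\delta_s(a_s)$ with the self-bounding variance estimate $\mathbb{E}[(1-Y_s(a_s))^2\mid\mathcal{H}_{s-1}]\leq\delta_s(a_s)$, a union bound over subintervals, AM-GM, and then reading (a) and (b) off the threshold in the changepoint test (including the observation that $|\mathcal{A}_m|\vee 2^{m/2}=|\mathcal{A}_m|$ when $\beta\geq 1$). The only cosmetic difference is that your AM-GM slack can be taken as $O(\log T)$ rather than $O(\log^2 T)$, which is immaterial against the $\log^3 T$ threshold.
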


\begin{proof}
First, fix an interval of rounds $[s_1,s_2]$.
Then, we note that
\[
	\sum_{s=s_1}^{s_2} 1 - Y_s(a_s) - \mb{E}[ 1 - Y_s(a_s) \mid \mc{H}_{s-1}],
\]
is a martingale difference sequence with respect to the natural filtration $\{\mc{H}_t\}_t$ of $\sigma$-algebras $\mc{H}_{t}$, generated by the observations and decisions (of both decision-maker and adversary) up to round $t-1$.
Now, since the choice of arm $a_t$ at round $t$ is fixed conditional on $\mc{H}_{t-1}$, we have $\mb{E}[ Y_t(a_t) \mid \mc{H}_{t-1}] = \mu_t(a_t)$.
We also have since $Y_t(a_t) \in [0,1]$:
\[
	\Var( 1 - Y_t(a_t) \mid \mc{H}_{t-1} ) \leq \mb{E}[ (1 - Y_t(a_t))^2 \mid \mc{H}_{t-1}] \leq \mb{E}[ 1 - Y_t(a_t) \mid \mc{H}_{t-1} ] = \delta_t(a_t).
\]
% \textcolor{red}{**JK: I was just wondering the reason to use Freedman inequality.**} \joe{I need martingale concentration since R.V.'s are not iid, variance scaling is not important here and Azuma or Bernstein would have also sufficed. Later on, in proof of \Cref{thm:regret-episodes}, Freedman's variance scaling is important}
Then, by Freedman's inequality (\Cref{lem:freedman}) we have with probability at least $1 - 1/T^3$: for all choices of intervals $[s_1,s_2] \subseteq [T]$:
\begin{align}
	\left| \sum_{s=s_1}^{s_2} (1 - Y_s(a_s)) - (1 - \mu_s(a_s))  \right| &\leq 3 \sqrt{  \log\left(2 T^3 \right) \sum_{s=s_1}^{s_2} \delta_s(a_s) } + 2 \log\left( 2 T^3 \right) \nonumber \\
									     &\leq \half \sum_{s=s_1}^{s_2} \delta_s(a_s) + \frac{13}{2}\log(2T^3), \label{eq:good_event1}
\end{align}
where the second inequality is by AM-GM.
%Let $t_{\ell}^m := t_{\ell+1}$ if the $(m-1)$-th block resulted in a restart and {$m_\ell$ be the last block in the $\ell$-th episode.}	 Taking a further union bound over all choices of intervals $[s_1,s_2] \subseteq [T]$, we have that, with probability at least $1 - 1/T$, we have
%\begin{align}
	%\jk{\left| \sum_{s=s_1}^{s_2} (1-Y_s(a_s)) - (1-\mu_s(a_s))  \right| \leq 3 \sqrt{ 3(s_2-s_1+1) \cdot \log\left(2 T \right) } + 6 \log\left( 2 T\right) .}\label{eq:good_event1}
%\end{align}
%Let $\mc{E}_1$ be the good event on which this concentration of \eqref{eq:good_event1} holds for all subintervals $[s_1,s_2] \subseteq [T]$.
Going forward, suppose the above concentration holds for all subintervals of rounds $[s_1,s_2] \subseteq [T]$.
%Then under $\mc{E}_1$,
%for all blocks of $[t_\ell^{m},t_\ell^{m+1})$ for $l\in[\hat{L}]$ and $m\in[m_\ell]$:
%% for all final blocks $\ho{t_{\ell}^m}{t_{\ell+1}}$:
%\begin{align*}
	%\left| \sum_{s=t_{\ell}^m}^{\jk{t_{\ell}^{m+1}-1}} (1 - Y_s(a_s)) - (1 - \mu_s(a_s)) \right| \leq 3\sqrt{ 3 (\jk{t_{\ell}^{m+1}}   - t_{\ell}^m) \log(2T)} + 6\log(2T).
%\end{align*}
%Let $\mc{E}_1$ be the good event on which this concentration of \eqref{eq:good_event1} holds for all episodes $\ell$ and blocks $m$.

At the same time, on each $m$-th block, \Cref{line:cpd-test} of \Cref{alg:blackbox} gives us an empirical regret upper bound since the changepoint test is not triggered until possibly the last round of the block.
%\jk{except $m_\ell$}:
Specifically, the inequality in \Cref{line:cpd-test} of \Cref{alg:blackbox} must be reversed for the second-to-last round of the block or
\[
	\sum_{s=t_{\ell}^m}^{t_{\ell}^{m+1} -2} 1 - Y_s(a_s) < C_1 \cdot ( |\mc{A}_m| \vee 2^{m/2}) \cdot \log^3(T), % \implies \sum_{s=t_{\ell}^m}^{t_{\ell}^m-1} 1 - Y_s(a_s) < C_1 \cdot |\mc{A}_m| \cdot \log(T) + 1,
\]
where by convention we let $t_{\ell}^{m_{\ell}+1} := t_{\ell+1}$.
Thus,
\begin{equation}\label{eq:empirical-upper-bound}
	\sum_{s=t_{\ell}^m}^{t_{\ell}^{m+1}-1} 1 - Y_s(a_s) < C_1 \cdot (|\mc{A}_m| \vee 2^{m/2}) \cdot \log^3(T) + 1.
\end{equation}
%Now, we note
%\[
	%|\mc{A}_m| \geq C_2 2^{m \frac{\beta}{\beta+1}} \cdot \log^{\frac{1}{\beta+1}}(T) \geq C_2 \log(T),
%\]
%for $2^m \geq \log(T)$.
%Going forward, we only consider {\em long blocks} where $2^m \geq \log(T)$.
Combining the above with \Cref{eq:good_event1}, we conclude
\[
	\sum_{s=t_{\ell}^m}^{t_{\ell}^{m+1}-1} \delta_s(a_s) < 2 C_1 \cdot ( |\mc{A}_m| \vee 2^{m/2} ) \cdot \log^3(T) + 2 + 13\log(2T^3) \leq 3 C_1 \cdot (|\mc{A}_m| \vee 2^{m/2} ) \cdot \log^3(T).
\]
where the last inequality holds for $C_1$ large enough.

At the same time, for constant $C_1$ in \Cref{line:cpd-test} of \Cref{alg:blackbox} chosen large enough, we have that, for the last block $\ho{t_{\ell}^{m_{\ell}}}{t_{\ell+1}}$ of an episode concluding in a restart, we must have
\begin{align*}
	\sum_{s=t_{\ell}^{m_\ell}}^{t_{\ell+1}-1} 1 - Y_s(a_s) &\geq C_1 \cdot ( |\mc{A}_m| \vee 2^{m/2} ) \cdot \log^3(T) \implies \\
	\sum_{s=t_{\ell}^{m_{\ell}}}^{t_{\ell+1}-1} \delta_s(a_s) &\geq \frac{2C_1}{3} \cdot ( |\mc{A}_{m_{\ell}}| \vee 2^{m/2} ) \cdot \log^3(T) - \frac{13}{3} \log(2T^3) \geq \frac{C_1}{2} \cdot ( |\mc{A}_{m_{\ell}}| \vee 2^{m/2} ) \cdot \log^3(T),
\end{align*}
where the first inequality in the second line above comes from combining the first line with \Cref{eq:good_event1} and the last inequality holds for $C_1$ large enough.

Finally, we note that for $\beta \geq 1$, we have $|\mc{A}_{m}| = S_{m} \land 2^m = \ceil{  2^{m \cdot \frac{\beta}{\beta+1}} } \land 2^{m} \geq 2^{m/2} $.
Thus, $|\mc{A}_m| \vee 2^{m/2} = |\mc{A}_m|$ in all our above inequalities.
\end{proof}

\subsection{Summing Regret Over Blocks}\label{subsec:proof-regret-bd-ep-length}

Next, we sum the per-block regret bound of \Cref{lem:concentration-bounds} over blocks $m$ and episodes $\ell$ to obtain a total regret bound.

\begin{lemma}\label{lem:regret-bd-episode-length}
	Under event $\mc{E}_1$, we have
    	\[
		\sum_{t=1}^T \delta_t(a_t) \leq c_0  \log^3(T) \sum_{\ell\in[\hat{L}]} (t_{\ell+1}-t_\ell)^{\frac{\beta}{\beta+1}} .
	\]
\end{lemma}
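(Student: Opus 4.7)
The plan is to convert the per-block regret bound from \Cref{lem:concentration-bounds}(a) into a per-episode bound via a geometric sum over blocks, and then sum over episodes. On event $\mc{E}_1$, combining \Cref{eq:block-bound} with the subsampling rule $|\mc{A}_m| \leq \lceil 2^{m\beta/(\beta+1)}\rceil$ gives each block a regret bound of order $\log^3(T) \cdot 2^{m\beta/(\beta+1)}$. Summing these over the blocks $m = 1, \ldots, m_\ell$ of the $\ell$-th episode produces a geometric series in $m$ with ratio $2^{\beta/(\beta+1)} > 1$, which is dominated, up to a $\beta$-dependent multiplicative constant, by its largest (i.e.\ final, $m = m_\ell$) term:
\[
\sum_{t = t_\ell}^{t_{\ell+1}-1} \delta_t(a_t) \,\lesssim\, \log^3(T) \cdot 2^{m_\ell \beta/(\beta+1)}.
\]

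Next, I would relate $2^{m_\ell \beta/(\beta+1)}$ back to the episode length $t_{\ell+1} - t_\ell$. By the doubling-block construction in \Cref{alg:blackbox}, blocks $1, \ldots, m_\ell - 1$ must each complete in full (otherwise a restart would have been triggered earlier within the episode), so the $\ell$-th episode spans at least $\sum_{m=1}^{m_\ell - 1} 2^m = 2^{m_\ell} - 2$ rounds, whence $2^{m_\ell} \leq 2(t_{\ell+1} - t_\ell) + 2$. Plugging this into the display above and absorbing constants into $c_0$ yields a per-episode regret bound of $c_0 \log^3(T) (t_{\ell+1} - t_\ell)^{\beta/(\beta+1)}$, and summing across episodes $\ell \in [\hat{L}]$ produces the claimed bound.

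No serious obstacle is expected; the lemma is essentially bookkeeping once \Cref{lem:concentration-bounds} is in hand. The only mild edge cases are very short episodes with $m_\ell = 1$ (handled by the trivial bound $(t_{\ell+1}-t_\ell)^{\beta/(\beta+1)} \geq 1$ and absorbing the resulting constant into $c_0$), and the final, possibly truncated, episode ending at round $T$ rather than via a restart, which does not affect the argument since \Cref{lem:concentration-bounds}(a) supplies the per-block bound unconditionally for every block.
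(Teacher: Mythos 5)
Your proposal is correct and follows essentially the same route as the paper: sum the per-block bound \Cref{eq:block-bound} with $|\mc{A}_m|\lesssim 2^{m\beta/(\beta+1)}$, dominate the geometric series over $m$ by its last term, and use $2^{m_\ell}\lesssim t_{\ell+1}-t_\ell$ (since blocks $1,\ldots,m_\ell-1$ complete in full) to convert to $(t_{\ell+1}-t_\ell)^{\beta/(\beta+1)}$. Your write-up in fact spells out the justification for $2^{m_\ell\cdot\frac{\beta}{\beta+1}}\leq (t_{\ell+1}-t_\ell)^{\frac{\beta}{\beta+1}}$ that the paper only asserts in passing.
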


We first decompose the regret along episodes and blocks contained therein:
		\[
			\sum_{t=1}^T 1 - \mu_t(a_t) = \sum_{\ell = 1}^{\hat{L}} \sum_{m\in [m_\ell]} \sum_{t = t_{\ell}^m}^{t_{\ell}^{m+1}-1} 1 - \mu_t(a_t).
		\]
		% Then, on event $\mc{E}_1$, by Freedman, each inner sum $\sum_t 1 - \mu_t(a_t)$ concentrates around its mean $\sum_t 1 - Y_t(a_t)$ with concentration error
		% \[
		% 	\sqrt{(t_{\ell}^{m+1} - t_{\ell}^m) \cdot \log(T)} + \log(T).
		% \]
		Then, on event $\mc{E}_1$, we have summing \Cref{eq:block-bound}:
		%\jkc{The regret of the last block for each episode may not be bounded. how can we get the below?}
        \begin{align*}
            \sum_{t=1}^T 1 - \mu_t(a_t)  &\leq
	    \sum_{\ell\in [\hat{L}]} \sum_{m\in[m_\ell]} 3 C_1 \cdot |\mc{A}_m| \cdot \log^3(T) \\
					 &\leq c_1 \sum_{\ell \in [\hat{L}]} \sum_{m \in [m_{\ell}]} 2^{m \cdot \frac{\beta}{\beta+1}} \log^3(T) \\
					 &\leq c_2 \sum_{\ell \in [\hat{L}]} (t_{\ell+1} - t_{\ell})^{\frac{\beta}{\beta+1}} \cdot \log^3(T) .
        \end{align*}
	where in the last line we sum the geometric series over $m$ and the fact that $2^{m_{\ell} \cdot \frac{\beta}{\beta+1}} \leq (t_{\ell+1} - t_{\ell})^{\frac{\beta}{\beta+1}} \land 2^{m_{\ell}}$.

\subsection{Showing there is Large Variation in Each Episode}\label{subsec:proof-variation-bound}

Next, following the proof outline of \Cref{sec:blackbox}, our goal is to show there is a minimal amount of variation in each episode.

\begin{lemma}\label{lem:variation-bound}
	$\forall \ell\in [\hat{L}-1]$, w.p. $\geq 1-4/T$:
    \[
	    \sum_{t=t_{\ell}}^{t_{\ell+1}-1} |\mu_{t}(a_{t-1}) - \mu_{t-1}(a_{t-1})| \geq \frac{\log^3(T)}{(t_{\ell+1} - t_{\ell})^{\frac{1}{\beta+1}}}.
	 \]
\end{lemma}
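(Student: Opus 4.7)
I plan to follow the two-case dichotomy sketched in the proof outline of \Cref{thm:blackbox}, with the main new work being (i) linking the gap evolution of the best initially subsampled arm to the realized variation sum appearing in the lemma, and (ii) a high-probability gap-dependent sum bound, via geometric binning, that replaces the in-expectation argument of \citet{bayati20}.

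Fix an episode $\ho{t_{\ell}}{t_{\ell+1}}$ ending in a restart. By \Cref{lem:concentration-bounds}, on $\mc{E}_1$ the last-block regret satisfies $\sum_{s=t_{\ell}^{m_{\ell}}}^{t_{\ell+1}-1} \delta_s(a_s) \geq \frac{C_1}{2} \cdot |\mc{A}_{m_{\ell}}| \log^3(T)$. Using the decomposition $\sum_s \delta_s(a_s) = \ref{case1} + \ref{case2}$ with $\hat{a}_{\ell,m_{\ell}} := \argmax_{a \in \mc{A}_{m_{\ell}}} \mu_0(a)$, at least one of \ref{case1}, \ref{case2} is $\geq \frac{C_1}{4} S_{m_{\ell}} \log^3(T)$, and I treat the two cases separately.

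\emph{Case \ref{case1} large.} By \Cref{lem:best-initial-arm} (union-bounded over blocks), with probability $\geq 1-1/T$ the initial gap satisfies $1 - \mu_0(\hat{a}_{\ell,m_{\ell}}) \lesssim (\log T)^{1/\beta} \cdot 2^{-m_{\ell}/(\beta+1)}$. When \ref{case1} is $\Omega(S_{m_{\ell}} \log^3 T)$, averaging over the $\leq 2^{m_{\ell}}$ rounds of the last block and using $S_{m_{\ell}} \geq 2^{m_{\ell}\beta/(\beta+1)}$ produces a round $t'$ with $\delta_{t'}(\hat{a}_{\ell,m_{\ell}}) \gtrsim 2^{-m_{\ell}/(\beta+1)} \log^3(T)$. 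Since $\beta \geq 1$ gives $(\log T)^{1/\beta} \ll \log^3(T)$, this strictly exceeds the initial gap, so under rested non-stationarity the mean reward of $\hat{a}_{\ell,m_{\ell}}$ must have decreased by $\Omega(2^{-m_{\ell}/(\beta+1)} \log^3(T))$ between $t_{\ell}^{m_{\ell}}$ and $t'$, and these changes occur only at rounds immediately following pulls of this arm. By the triangle inequality, these contributions are already included in the lemma's sum, and combining with $2^{m_{\ell}} \lesssim t_{\ell+1} - t_{\ell}$ to get $2^{-m_{\ell}/(\beta+1)} \gtrsim (t_{\ell+1}-t_{\ell})^{-1/(\beta+1)}$ yields the claim.

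\emph{Case \ref{case2} large.} Suppose for contradiction that the episode's realized variation is strictly less than $\log^3(T)(t_{\ell+1}-t_{\ell})^{-1/(\beta+1)}$. Since each arm's mean-reward variation is bounded above by the episode total, the finite-armed environment seen by $\base(2^{m_{\ell}}, \mc{A}_{m_{\ell}})$ is $\alpha$-mildly corrupt (\Cref{defn:mild}) with $\alpha := (t_{\ell+1}-t_{\ell})^{-1/(\beta+1)}$ against reference profile $\{\mu_0(a)\}_{a \in \mc{A}_{m_{\ell}}}$. Then \Cref{assumption:base}, union-bounded over blocks, gives
\[
\ref{case2} \lesssim 2^{m_{\ell}} \alpha + \sum_{i=2}^{S_{m_{\ell}}} \frac{\log T}{\Delta_{(i)}} \pmb{1}\{\Delta_{(i)} \geq 4\alpha\},
\]
where the $\Delta_{(i)}$ are the ordered initial gaps to $\hat{a}_{\ell,m_{\ell}}$. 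The first term is $\lesssim S_{m_{\ell}}$ by choice of $\alpha$. For the second, I bin the indices into $B_j := \{ i : \Delta_{(i)} \in [4\alpha \cdot 2^{j}, 4\alpha \cdot 2^{j+1})\}$ for $j = 0, 1, \ldots, \lceil \log_2(1/(4\alpha)) \rceil$; by \Cref{ass:reservoir} each arm falls in $B_j$ with probability $\leq \kappa_2 (8\alpha \cdot 2^j)^{\beta}$, so a Freedman/Bernstein bound on the i.i.d. bin counts gives $|B_j| \lesssim S_{m_{\ell}} \alpha^{\beta} 2^{j\beta} + \log T$ uniformly in $j$ with probability $\geq 1 - 1/T$. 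Summing $|B_j|/(4\alpha \cdot 2^j)$ for $\beta \geq 1$ the resulting geometric series is dominated by the last bin at $j \asymp \log_2(1/\alpha)$ and collapses to $\tilde{O}(S_{m_{\ell}})$ after $1/\alpha \lesssim 2^{m_{\ell}/(\beta+1)}$. Choosing $C_1$ sufficiently large in \Cref{line:cpd-test} then contradicts \ref{case2} $\geq \frac{C_1}{4} S_{m_{\ell}} \log^3 T$.

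The hardest step will be the binning argument in Case \ref{case2}: in contrast to \citet[Lemma D.4]{bayati20}, which bounds this sum only in expectation and relies on a density assumption beyond \Cref{ass:reservoir}, I need a high-probability bound using only the two-sided tails $\kappa_1 x^{\beta} \leq \Pr(\mu_0(a) > 1 - x) \leq \kappa_2 x^{\beta}$, and the geometric series over bins only closes cleanly once $\beta \geq 1$, which is precisely the scope of this regime (the $\beta < 1$ analysis is deferred to \Cref{subsec:beta-small}). A final union bound over the events of \Cref{lem:concentration-bounds}, \Cref{lem:best-initial-arm}, \Cref{assumption:base}, and the binning step delivers the $1 - 4/T$ probability claimed.
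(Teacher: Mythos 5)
Your proposal follows essentially the same route as the paper's proof: the same (A)/(B) dichotomy from \Cref{lem:concentration-bounds}, the same use of \Cref{lem:best-initial-arm} plus the rested-adversary observation in Case (A), and the same contradiction via mild corruption, \Cref{assumption:base}, and a high-probability geometric binning of the gaps with Freedman concentration in Case (B) (the paper's bins in \Cref{lem:bin-count-bound} are indexed top-down and explicitly shifted by $\delta_0(\hat{a}_{\ell,m_{\ell}})$, which is the one detail your tail bound $\kappa_2(8\alpha\cdot 2^j)^{\beta}$ glosses over, though it is harmless since $\delta_0(\hat{a}_{\ell,m_{\ell}})\lesssim\alpha$ on the good event). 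The argument is correct and matches the paper's.
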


\begin{proof}
As outlined in the proof outline of \Cref{sec:blackbox}, in light of the regret lower bound \Cref{eq:large-regret-last-block} of \Cref{lem:concentration-bounds}, we consider two different cases which we recall below:
    %Then from the above display and \eqref{eq:large-regret-last-block}, one of two possible cases must hold: either
	\begin{align}
		\sum_{t=t_\ell^{m_\ell}}^{t_{\ell+1}-1} 1 - \mu_{t}(\hat{a}_{\ell,m_\ell}) &\geq \frac{C_1}{4} |\mathcal{A}_{m_\ell}| \log^3(T) \tag{A}\label{eq:case1app} \\
		\sum_{t=t_\ell^{m_\ell}}^{t_{\ell+1}-1} 1 - \mu_{t}(\hat{a}_{\ell,m_\ell}) &\leq \frac{C_1}{4} |\mathcal{A}_{m_\ell}| \log^3(T) \text{ and } \sum_{t=t_{\ell}^{m_\ell}}^{t_{\ell+1}-1} \mu_t(\hat{a}_{\ell,m_\ell}) - \mu_t(a_t) \geq \frac{C_1}{4} |\mathcal{A}_{m_\ell}| \log^3(T) \tag{B}\label{eq:case2app}
	\end{align}
	Now, on event $\mc{E}_1$, due to \Cref{eq:large-regret-last-block}, one of \Cref{eq:case1app} or \Cref{eq:case2app} must hold.
	Our goal is to show that in either case, large variation must have elapsed over the episode.

	\paragraph*{$\bullet$ Best Initial Arm has Large Dynamic Regret.}
	We first consider the former case \Cref{eq:case1app}.
	First, we establish a lemma asserting the best initially subsampled arm has small initial gap with high probability.

\begin{lemma}{(Proof in \Cref{subsec:proof-best-initial-arm})}\label{lem:best-initial-arm}
	Recall that $\mu_0(a)$ is the initial mean reward of arm $a$.
	Let $\hat{a}_{\ell,m_{\ell}}$ denote the best initial arm among the arms sampled in the last block of episode $\ho{t_{\ell}}{t_{\ell+1}}$ or $\hat{a}_{\ell,m_{\ell}} := \argmax_{a \in \mc{A}_{m_{\ell}}} \mu_0(a)$.
	Let $\mc{E}_3$ be the event that
	\[
		\forall \ell \in [\hat{L}] : 1 - \mu_0(\hat{a}_{\ell, m_{\ell}}) \leq \frac{ \log^3(T)} { 2^{ m_{\ell} \cdot \frac{1}{\beta+1}}  }.
	\]
	Then, $\mc{E}_3$ occurs with probability at least $1-1/T$.
\end{lemma}

	%We first consider the former case of $\sum_{t=t_\ell^{m_\ell}}^{t_{\ell+1}-1} 1 - \mu_{t}(\hat{a}_{\ell,m_\ell})\gtrsim \sqrt{ (t_{\ell+1} - t_{\ell}^{m_\ell}) \log(T)} + |\mathcal{A}_{m_\ell}|\log(T) $, where the regret of the best subsampled arm is large (indicating large variation).
% Let $\mu_0(a)$ denote the initial mean reward of arm $a$.
% Since
% 	\begin{align*}
% 		 \mb{P}\left(\mu_0(\hat{a}_{\ell,m_\ell}) < 1 - \jk{\log(T)} 2^{-m_{\ell}/2}\right)&=      \jk{\mb{P}\left(\max_{a\in \mathcal{A}_{m_\ell}}\mu_0(a) < 1 - \log(T) 2^{- m_{\ell}/2}\right)}    \\&\leq \left( 1 - \frac{\log(T)}{2^{m_{\ell}/2}} \right)^{|\mc{A}_{\jk{m_\ell}}|} \\
%         &\leq \exp\left( - |\mc{A}_{\jk{m_\ell}}| \cdot \frac{\log(T)}{ 2^{m_{\ell}/2}} \right) \\
% 		&\leq \frac{1}{T},
% 	\end{align*}
% 	where the last inequality follows from $|\mc{A}_{\jk{m_\ell}}| \geq 2^{m_{\ell}/2}$, we have,  with probability at least $1-1/T$,
%On $\mc{E}_3$, we have since $\kappa_1 / \kappa_2 \leq 2$ (\Cref{ass:constants}):
%\begin{align}
%\mu_0(\hat{a}_{\ell,m_\ell}) \geq 1 - \frac{2  \log^3(T)}{2^{m_{\ell} \cdot \frac{1}{\beta+1}}} \geq 1 - \frac{2 \log^3(T)}{ (t_{\ell+1} - t_{\ell}^{m_\ell})^{\frac{1}{\beta+1}}}.\label{eq:goodness-best-subsmapled-arm}
%\end{align}
    % From the above tail probability bound, we know $1-\mu_0(\hat{a}_{\ell,m_\ell}) < \frac{\log(T)}{\sqrt{t_{\ell+1} - t_{\ell}^m}}$.
Now, from \eqref{eq:case1app} (with large enough $C_1 > 16$), we also know that there exists a round $t' \in \ho{t_{\ell}^{m_\ell}}{t_{\ell+1}}$ such that
	\[
		1- \mu_{t'}(\hat{a}_{\ell,m_\ell}) > \frac{C_1 \cdot |\mc{A}_{m_{\ell}}| \cdot \log^3(T)}{4 (t_{\ell+1} - t_{\ell}^{m_{\ell}} ) } \geq \frac{16  \cdot 2^{m_{\ell} \cdot \frac{\beta}{\beta+1}} \cdot \log^3(T)}{ 4 \cdot 2  \cdot 2^{m_{\ell}}} \geq \frac{2  \log^3(T)}{ 2^{m_{\ell} \frac{1}{\beta+1}}}.
	\]
	This means the mean reward of arm $\hat{a}_{\ell,m_\ell}$ must have moved by amount at least $ 2^{- m_{\ell} \cdot \frac{1}{\beta+1}} \log^3(T)$ over the course of the block, implying
	\[
		\mu_0(\hat{a}_{\ell,m_\ell})-\mu_{t'}(\hat{a}_{\ell,m_\ell})\ge \frac{\log^3(T)}{ 2^{ m_{\ell} \cdot \frac{1}{\beta+1} } } \geq \frac{\log^3(T)}{ (t_{\ell+1} - t_{\ell})^{\frac{1}{\beta+1}} }.
	\]
	Since the adversary can only modify the rewards of an arm $a_t$ after the round $t$ it is played, the movement must have occurred on rounds where $\hat{a}_{\ell,m_\ell}$ was chosen by the agent.
	Thus,
	\[
		\sum_{t=t_{\ell}}^{t_{\ell+1}-1} |\mu_{t}(a_{t-1}) - \mu_{t-1}(a_{t-1})|\ge \sum_{t=t_{\ell}}^{t_{\ell+1}-1} |\mu_{t}(\hat{a}_{\ell,m_\ell}) - \mu_{t-1}(\hat{a}_{\ell,m_\ell})|\ge \mu_0(\hat{a}_{\ell,m_\ell})-\mu_{t'}(\hat{a}_{\ell,m_\ell}) \geq \frac{\log^3(T)}{ (t_{\ell+1} - t_{\ell})^{\frac{1}{\beta+1}} }.
	\]

	\paragraph*{$\bullet$ Regret of Base Algorithm is Large.}
	Now, we consider the other case \Cref{eq:case2app}, where
	%of $\sum_{t=t_{\ell}^{m_\ell}}^{t_{\ell+1}-1} \mu_t(\hat{a}_{\ell,m_\ell}) - \mu_t(a_t)\gtrsim \sqrt{ (t_{\ell+1} - t_{\ell}^{m_\ell}) \log(T)} + |\mathcal{A}_{m_\ell}|\log(T) $, where
	the regret of the subsampled bandit environment over the rounds $\ho{t_{\ell}^{m_\ell}}{t_{\ell+1}}$ is large, while the dynamic regret of the best initial arm is small.
	In this case, we want to show a contradiction: that if the total variation over the episode is small, then the finite MAB environment experienced by the base algorithm must be mildly corrupt (\Cref{defn:mild}) which means \Cref{assumption:base} can be used to bound the regret of the last block.
	This will yield a contradiction since by virtue of \Cref{line:cpd-test} being triggered, we know the regret of the last block be large.

	Suppose for contradiction that
	\begin{equation}\label{eq:small-variation-contradiction}
		\sum_{t=t_{\ell}}^{t_{\ell+1}-1} |\mu_t(a_{t-1}) - \mu_{t-1}(a_{t-1})| < \frac{\log^3(T)}{(t_{\ell+1} - t_{\ell})^{\frac{1}{\beta+1}}}.
	\end{equation}
	Then, since our adaptive adversary only changes the rewards of arms on the rounds they are played, we have the finite-armed MAB environment experienced by the base is $\alpha$-mildly corrupt (\Cref{defn:mild}) with respect to reference reward profile $\{\mu(a)\}_{a \in \mc{A}_{m_{\ell}}}$ and $\alpha := \frac{\log^3(T)}{(t_{\ell+1} - t_{\ell}^{m_{\ell}})^{\frac{1}{\beta+1}}}$.

	This means we can employ \Cref{assumption:base} to bound the regret.
	In particular, our end goal here is to show that
	\begin{equation}\label{eq:blackbox-regret-small}
		\sum_{t=t_{\ell}^m}^{t_{\ell+1}-1} \mu_t(\hat{a}_{\ell,m_{\ell}}) - \mu_t(a_t) < \frac{C_1}{4} \cdot |\mc{A}_{m_{\ell}}| \cdot \log^3(T) ,
	\end{equation}
	which will contradict \Cref{eq:case2app} and imply \Cref{eq:small-variation-contradiction} is true.
	%For constant $c_2$ scaling like the constant $C_1$ appearing in \Cref{line:cpd-test} of \Cref{alg:blackbox}, this will yield a contradiction since the changepoint test being triggered at the end of the block means the above cannot be true for such $C$.
	%%$\sum_{t=t_\ell^{m_\ell}}^{t_{\ell+1}-1} 1 - \mu_{t}(\hat{a}_{\ell,m_\ell})\lesssim \sqrt{ (t_{\ell+1} - t_{\ell}^{m_\ell}) \log(T)} + |\mathcal{A}_{m_\ell}|\log(T) $.   We want to show that the subsampled bandit environment over the last block of each episode is not mildly corrupt by a contradiction.  From \Cref{assumption:base}, if it were mildly corrupt with stationary reward profile \jk{$\{\mu_0(a)\}_{a \in \mc{A}_{m_\ell}}$ with gaps $\{ \Delta_a\}_{a \in \mc{A}_{m_\ell}}$} (Definition~\ref{defn:mild}), then we can show that:
%\begin{equation}
		%C_1 \left( \sqrt{ (t_{\ell+1} - t_{\ell}^{m_\ell}) \log(T)} + |\mc{A}_{m_{\ell}}|\cdot \log(T) \right) > \sum_{t=t_\ell^m}^{t_{\ell+1}-1} 1-Y_t(a_t).
        %%\min\left\{ \frac{\log(T)}{\Delta_{(2)}}, (t_{\ell+1} - t_{\ell}) \cdot \Delta_{(2)} \right\} \right),
	%\end{equation}
	%Given the regret decomposition of \Cref{eq:regret-decompose-block} and our assumption that the dynamic regret of the best initial arm $\sum_{t=t_{\ell}^{m_{\ell}}}^{t_{\ell+1}-1} 1 - \mu_t(\hat{a}_{\ell,m_{\ell}})$ is small, it remains to show
	%\begin{equation}\label{eq:blackbox-regret-small}
		%\sum_{t=t_{\ell}^{m_{\ell}}}^{t_{\ell+1}-1} \mu_t(\hat{a}_{\ell,m_{\ell}}) - \mu_t(a_t) \leq \frac{c_2}{2} \cdot |\mc{A}_{m_{\ell}}| .
	%\end{equation}
	First, by \Cref{assumption:base} with $\alpha = \frac{\log^3(T)}{(t_{\ell+1} - t_{\ell}^{m_{\ell}})^{\frac{1}{\beta+1}}}$, we have
	\[
		\sum_{t=t_{\ell}^{m_{\ell}}}^{t_{\ell+1}-1} \mu_t(\hat{a}_{\ell,m_{\ell}}) - \mu_t(a_t) \leq C_0 \left( \sum_{i=2}^{|\mc{A}_{m_{\ell}}|} \frac{\log(T)}{\Delta_{(i)}} \cdot \pmb{1}\left\{ \frac{\Delta_{(i)}}{4} \geq \frac{\log^3(T)}{(t_{\ell+1} - t_{\ell}^{m_{\ell}})^{\frac{1}{\beta+1}}} \right\} + (t_{\ell+1} - t_{\ell}^{m_{\ell}} )^{\frac{\beta}{\beta+1}} \cdot \log^3(T) \right).
	\]
	%Now, recall the constants $\kappa_1, \kappa_2$ from our notion of $\beta$-regular reservoir distribution \Cref{ass:reservoir}.
	%By definition, we must have $\kappa_2 > \kappa_1$.
	%In fact, suppose WLOG that $\kappa_2 \geq 72$ and that $2\kappa_2/\kappa_1 \geq 2$.
	%Let $\rho :=   \frac{2\kappa_2}{ \kappa_1}   $.
	%%Now, we will follow the argument we did above for the uniform distribution, but instead of using a dyadic grid to
	To ease notation, we now parametrize the arms in $\mc{A}_{m_{\ell}} \bs \{ \hat{a}_{\ell,m_{\ell}}\}$ via $\{2,\ldots,|\mc{A}_{m_{\ell}}|\}$ and let $\Delta_i$ be the initial gap of arm $i$ to $\hat{a}_{\ell,m_{\ell}}$:
	\[
		\Delta_i := \mu_0(\hat{a}_{\ell,m_{\ell}}) - \mu_0(i),
	\]
	Now, we will partition the values of $\Delta_{i}$ based on a dyadic grid.
	Let
	\[
		N_{j,\ell} := \sum_{i=2}^{|\mc{A}_{m_{\ell}}|} \pmb{1}\{ \Delta_{i} \in \ho{2^{-(j+1)} - \delta_0(\hat{a}_{\ell,m_{\ell}})}{2^{-j} - \delta_0(\hat{a}_{\ell,m_{\ell}})} \},
	\]
	where $j$ ranges from $0$ to
	\[
		J := \ceil{  1 \vee \frac{1}{\beta} \log \left(  \frac{ (t_{\ell+1} - t_{\ell}^{m_{\ell}})^{\frac{\beta}{\beta+1}} }{42^{\beta} \log^{3\beta}(T)} \right) }
	\]
	Next, the following lemma bounds $N_{j,\ell}$ in high probability using Freedman's inequality (\Cref{lem:freedman}).

	\begin{lemma}{(Proof in \Cref{subsec:proof-bin-count})}\label{lem:bin-count-bound}
		Let $\mc{E}_4$ be the event that the following hold:
		\begin{align}
			\forall j \in \{0,\ldots,J\}, \ell \in \{1,\ldots,\hat{L} - 1\}: N_{j,\ell} \leq \frac{3 \kappa_2}{2} |\mc{A}_{m_{\ell}}| \cdot 2^{-j \beta} + \frac{13}{2} \log(2T^3) \label{eq:bound-bin-1} \\
			\sum_{i=2}^{|\mc{A}_{m_{\ell}}|} \pmb{1}\left\{ 4 \left( \frac{\log(T)}{t_{\ell+1} - t_{\ell}^{m_{\ell}}} \right)^{\frac{1}{\beta+1}} \leq \Delta_i < 2^{-(J+1)} \right\} \leq \frac{3\kappa_2}{2} |\mc{A}_{m_{\ell}}| \cdot 2^{-(J+1) \beta} + \frac{13}{2} \log(2T^3) \label{eq:bound-bin-2}
		\end{align}
		Then, $\mc{E}_4$ occurs with probability at least $1 - 1/T$.
	\end{lemma}

	The next lemma relates the cutoff $\Delta_i \geq 4 \alpha$ to the quantity $2^{-(J+2)}$ showing that every gap $\Delta_i$ such that $\Delta_i \geq 4 \alpha$ lies in one of the bins of the dyadic grid.

	\begin{lemma}{(Proof in \Cref{subsec:proof-bigger})}\label{lem:bigger}
		We have
		\[
			2^{-(J+2)} - \delta_0(\hat{a}_{\ell,m_{\ell}}) > 4 \alpha. %4  \frac{\log^2(T)}{(t_{\ell+1} - t_{\ell}^{m_{\ell}})^{\frac{1}{\beta+1}}}.
		\]
	\end{lemma}

	Now, \Cref{lem:bigger} implies $2^{-(J+1)} - \delta_0(\hat{a}_{\ell,m_{\ell}}) > 4 \alpha$.
	%Thus, we first need to bound the gaps $\Delta_i$ which lie in $\ho{4\alpha}{2^{-(J+1)} - \delta_0}$.
	Thus, every gap $\Delta_{i}$ such that $\Delta_{i}/4 \geq  \frac{\log^3(T)}{(t_{\ell+1} - t_{\ell}^{m_{\ell}})^{\frac{1}{\beta+1}}} $ lies in some interval $[2^{-(j+1)} - \delta_0(\hat{a}_{\ell,m_{\ell}}) , 2^{-j}  - \delta_0(\hat{a}_{\ell,m_{\ell}}))$ for some $j \in \{0,\ldots,J\}$ or lies in the interval $\ho{4\alpha}{2^{-(J+1)} - \delta_0(\hat{a}_{\ell,m_{\ell}})}$.
	Thus, we bound
	\begin{align}
		\sum_{i=2}^{|\mc{A}_{m_\ell}|} \frac{\log(T)}{\Delta_{i}} \cdot \pmb{1} \left\{ \frac{\Delta_{i}}{4} \geq \alpha \right\} &\leq \sum_{j=0}^{J} N_{j,\ell} \cdot (2^{-(j+1)} - \delta_0(\hat{a}_{\ell,m_{\ell}}))^{-1} \cdot \log(T) \nonumber\\
		&\qquad + \log(T) \cdot \alpha^{-1} \sum_{i=2}^{|\mc{A}_{m_{\ell}}|} \pmb{1}\{ \alpha \leq \Delta_i < 2^{-(J+1)} - \delta_0(\hat{a}_{\ell,m_{\ell}}) \} \label{eq:gap-indicator-regret-bd}
	\end{align}
	% Recall $\mu_0(\hat{a}_{\ell,m_\ell})$ is the top initial reward of the arms $\mc{A}_{m_\ell}$ subsampled for block $\ho{t_{\ell}^{m_\ell}}{t_{\ell}^{m_\ell+1}}$.
	% Since, conditional on $\mu_0(\hat{a}_{\ell,m_\ell})$, the gaps are uniform random variables with order statistics $\Delta_{(2)},\ldots,\Delta_{(|\mc{A}_{m_\ell}|)}$, it is true that $N_{j,\ell}$ has mean $(|\mc{A}_{m_\ell}|\jk{-1}) \cdot 2^{-(j+1)}$. By a multiplicative Chernoff bound, with large enough constant $C_2>0$, we have:\jk{
	% \begin{align*}
	%     \mb{P}\left( N_{j,\ell} \geq 2 \cdot (|\mc{A}_{m_\ell}|-1) \cdot 2^{-(j+1)} \mid \mu_0(\hat{a}_{\ell,m_\ell}) \right) &\leq \exp\left( - (|\mc{A}_{m_\ell}|-1) \cdot 2^{-(j+1)} / 3 \right)\cr &\leq \exp\left( - (|\mc{A}_{m_\ell}|-1) \cdot 2^{-(J+1)} / 3 \right)\cr &\leq \exp\left( - (C_22^{m_\ell/2}\sqrt{\log(T)}-1) \cdot \sqrt{\frac{\log(T)}{t_{\ell+1}-t_\ell^{m_\ell}}} / 12 \right)\cr &\leq \frac{1}{T^3}.
	% \end{align*}}
	% % The RHS is less than $T^{-3}$ for $t_{\ell+1} - t_{\ell}$ sufficiently larger than a constant. \jkc{<-Why this holds? Do we need additional $\log(T)$ for $|A_m|$?}

 % %    \jk{tbd..}
	% Using tower law, the tail probability bound also holds unconditional of $\mu_0(\hat{a}_{\ell,m_\ell}) $.
	% Let $\mc{E}_3$ be the corresponding high-probability event of $\{N_{j,\ell}< 2 \cdot (|\mc{A}_{m_\ell}|-1) \cdot 2^{-(j+1)}; \forall j,   \forall \ell\}$.
	Now, by \Cref{lem:bigger}, we have for all $j \in \{0,\ldots,J\}$:
	\[
		2^{-(j+1)} - \delta_0(\hat{a}_{\ell,m_{\ell}}) > 2^{-(j+2)}.
	\]
	Combining the above with \Cref{lem:bin-count-bound}, we have \Cref{eq:gap-indicator-regret-bd} is order
	\[
		\log(T) \cdot \alpha^{-1} \left( \kappa_2 |\mc{A}_{m_{\ell}}| \cdot 2^{-(J+1) \beta} + \log(T) \right) + \log(T) \sum_{j=0}^J  \kappa_2 \cdot |\mc{A}_{m_{\ell}}| \cdot 2^{j (1 - \beta) + 2} + 2^{(j+1)} \log(T) .
	\]
	We have
	\begin{align*}
		2^{-(J+1)\beta} \leq 2^{-\beta} \left( \frac{1}{2^{\beta}} \land \left( \frac{42^{\beta} \log^{3\beta}(T)}{ (t_{\ell+1} - t_{\ell}^{m_{\ell}})^{\frac{\beta}{\beta+1}} } \right) \right) \leq \frac{42^{\beta} \log^{3\beta}(T)}{2^{\beta} (t_{\ell+1} - t_{\ell}^{m_{\ell}})^{\frac{\beta}{\beta+1}} }.
	\end{align*}
	Then, %for $\beta \geq 1$:
	\begin{align*}
		\kappa_2 \cdot \log(T) \cdot \alpha^{-1} \cdot 2^{-(J+1)\beta} &\leq c_3 \kappa_2 \cdot  (t_{\ell+1} - t_{\ell}^{m_{\ell}})^{\frac{1-\beta}{\beta+1}} \log^{3\beta - 3}(T) \\
							      &\leq c_4 \kappa_2 \log^{3(1 - \beta)}(T) \cdot \log^{3\beta - 3}(T) \\
							      &= c_5 \log(T),
	\end{align*}
	where the second inequality follows from \Cref{eq:ep-large-enough-beta} of \Cref{fact:ep-dominate-log} and $\beta \geq 1$, and the third inequality follows from $\log(T) \geq \kappa_2$ by choosing $T$ sufficiently large.
	Thus,
	\[
		\log(T) \cdot \alpha^{-1} \left( \kappa_2 |\mc{A}_{m_{\ell}}| \cdot 2^{-(J+1) \beta} + \log(T) \right)  \leq c_6 |\mc{A}_{m_{\ell}}| \log(T).
	\]
	Next,
	\begin{align*}
		\kappa_2 \log(T) \sum_{j=0}^J 2^{j(1-\beta)+2} &\leq 4 (J+1) \cdot 2^{J(1-\beta)} \log^2(T) \\
						      &\leq 8\log^2(T) \left( 2^{1-\beta} \vee \frac{(t_{\ell+1} - t_{\ell}^{m_{\ell}})^{\frac{1 - \beta}{\beta+1}}}{ 42^{1-\beta} \log^{2(1-\beta)}(T) } \right)  \left( 1 \vee \log\left( \frac{(t_{\ell+1} - t_{\ell}^{m_{\ell}})^{\frac{1}{\beta+1}} }{ 42^{\beta} \log^3(T) } \right) \right) \\
						      &\leq 8 \log^2(T) \left( 1 + \log(T) \right) \\
						      &\leq c_7 \log^3(T),
	\end{align*}
	where the first inequality follows from $\kappa_2 \leq \log(T)$, and the third inequality follows from $\beta \geq 1$ and \Cref{eq:ep-large-enough-beta} of \Cref{fact:ep-dominate-log}.

Thus, \Cref{eq:gap-indicator-regret-bd} is at most order $|\mc{A}_{m_{\ell}}| \log^3(T)$.
Thus, choosing $C_1$ large enough in \Cref{line:cpd-test} of \Cref{alg:blackbox}, we have that \Cref{eq:blackbox-regret-small} holds.
Following earlier discussion, this means the subsampled bandit environment over the last block $[t_\ell^m,t_{\ell+1})$ is \textit{not} mildly corrupt (\Cref{defn:mild}).
Then, for any $\mu(\cdot): \mc{A}_{m_\ell}\rightarrow [0,1]$, there always exists $t'\in[t_\ell^m,t_{\ell+1})$ and $a'\in \mc{A}_{m_\ell}$ such that
\[
	|\mu_{t'}(a')-\mu(a')|> \frac{\log^3(T)}{(t_{\ell+1}-t_\ell^{m_\ell})^{\frac{1}{\beta+1}}}.
\]
% This means that since the adversary can only modify the rewards of arms $a_t$ after the round $t$, we have
% 	\[
% 		\sum_{t=t_{\ell}^{m_\ell}}^{t_{\ell+1}-1} |\mu_{t}(a_{t-1}) - \mu_{t-1}(a_{t-1})|\ge   \sum_{t=t_{\ell}^{m_\ell}}^{t_{\ell+1}-1}|\mu_{t}(a') - \mu_{t-1}(a')|\ge |\mu_{t'}(a') - \mu_{0}(a')| \geq \frac{1}{\sqrt{t_{\ell+1} - t_{\ell}^{m_\ell}}}.
% 	\]}

% This means there exists round $t'$ and arm $a' \in \mc{A}_{m_{\ell}}$ such that:
% \[
% 	|\mu_{t'}(a')-\mu_0(a')|>\frac{1}{\sqrt{t_{\ell+1}-t_\ell^{m_\ell}}}.
% \]
This means that since the adversary can only modify the rewards of arms $a_t$ after the round $t$, we have
\begin{align*}
\sum_{t=t_{\ell}}^{t_{\ell+1}-1} |\mu_{t}(a_{t-1}) - \mu_{t-1}(a_{t-1})|&\ge
	\sum_{t=t_{\ell}^{m_\ell}}^{t_{\ell+1}-1} |\mu_{t}(a_{t-1}) - \mu_{t-1}(a_{t-1})|\cr &\ge   \sum_{t=t_{\ell}^{m_\ell}}^{t_{\ell+1}-1}|\mu_{t}(a') - \mu_{t-1}(a')|\cr &\ge |\mu_{t'}(a') - \mu_{0}(a')| \cr & \geq \frac{\log^3(T)}{ (t_{\ell+1} - t_{\ell}^{m_\ell})^{\frac{1}{\beta+1}}}\cr & \geq \frac{\log^3(T)}{ (t_{\ell+1} - t_{\ell})^{\frac{1}{\beta+1}} }.
\end{align*}

\end{proof}

\subsection{Relating Episodes to Non-Stationarity Measures}\label{subsec:proof-sum}

Now, we show how to derive
\[
	\sum_{\ell=1}^{\hat{L}} (t_{\ell+1} - t_{\ell})^{\frac{\beta}{\beta+1}} \log^3(T) \leq \left( (L+1)^{\frac{1}{\beta+1}} \cdot T^{\frac{\beta}{\beta+1}}  \land (V^{\frac{1}{\beta+2}} T^{\frac{\beta+1}{\beta+2}} + T^{\frac{\beta}{\beta+1}} ) \right) \cdot \log^3(T),
\]
from the fact that, as we have shown by \Cref{lem:variation-bound}, for all episodes $\ell\in [\hat{L}-1]$
\[
	\sum_{t=t_{\ell}}^{t_{\ell+1}-1} |\mu_{t}(a_{t-1}) - \mu_{t-1}(a_{t-1})|\ge \frac{\log^3(T)}{ (t_{\ell+1} - t_{\ell})^{\frac{1}{\beta+1}}}.
\]
We first show the bound in terms of total variation $V$.
Let $V_{[s_1,s_2)} := \sum_{t=s_1}^{s_2-1}|\mu_t(a_{t-1})-\mu_{t-1}(a_{t-1})|$. Then, by using Hölder's inequality:
\begin{align*}
	\sum_{\ell=1}^{\hat{L}} (t_{\ell+1}-t_\ell)^{\frac{\beta}{\beta+1}} \log^3(T) &\le T^{\frac{\beta}{\beta+1}} \log^3(T) + \left(\sum_{\ell=1}^{\hat{L}-1} \frac{\log^3(T)}{(t_{\ell+1}-t_\ell)^{\frac{1}{\beta+1}}} \right)^{\frac{1}{\beta+2}} \left( \sum_{\ell=1}^{\hat{L}-1} (t_{\ell+1}-t_\ell) \log^{3-\frac{3}{\beta+2}}(T) \right)^{\frac{\beta+1}{\beta+2}} \\
										      &\le  T^{\frac{\beta}{\beta+1}} \log^3(T) + \left(\sum_{\ell=1}^{\hat{L}-1} V_{[t_\ell, t_{\ell+1})} \right)^{\frac{1}{\beta+2}} T^{\frac{\beta+1}{\beta+2}} \cdot \log^{\frac{3(\beta+1)^2}{(\beta+2)^2}}(T) \\
										      &\le T^{\frac{\beta}{\beta+1}} \log^3(T) + V^{\frac{1}{\beta+2}} T^{\frac{\beta+1}{\beta+2}} \cdot \log^{\frac{3(\beta+1)^2}{(\beta+2)^2}}(T)
\end{align*}

% Then, we also have
% \begin{align*}
%     \sum_{\ell=1}^{\hat{L}}\sqrt{t_{\ell+1}-t_\ell}&\le \left(\sum_{\ell=1}^{\hat{L}}\sqrt{\frac{1}{t_{\ell+1}-t_\ell}}\right)^{1/2}\left(\sum_{\ell=1}^{\hat{L}}t_{\ell+1}-t_\ell\right)^{1/2}\cr &\le \left(\sum_{\ell=1}^{\hat{L}}V_{[t_\ell, t_{\ell+1})}\right)^{1/3}T^{2/3}&\cr &\le V^{1/3}T^{2/3}.
% \end{align*}

To show the bound in terms of $L$, we use Jensen's inequality on the function $x \mapsto x^{\frac{\beta}{\beta+1}}$ combined with the fact that the number of episodes $\hat{L} \leq L+1$ by virtue of \Cref{lem:variation-bound}.

\subsection{A Sketch of Modifications Required for Proving \Cref{thm:blackbox} for $\beta < 1$}\label{subsec:beta-small}

Next, we describe how to show a suboptimal regret bound of order $\sum_{\ell=1}^{\hat{L}} \sqrt{ t_{\ell+1} - t_{\ell}}$ in the setting of $\beta < 1$.
From here, using the steps of \Cref{subsec:proof-sum} with $\beta=1$, it is straightforward to show a regret bound of $\sqrt{(L+1) T} \land (V^{1/3} T^{2/3} + \sqrt{T})$.
For the sake of redundancy, we only give here a sketch of the modifications to the argument required.

We'll first describe at a high level the difficulty of showing a $\sum_{\ell=1}^{\hat{L}} (t_{\ell+1} - t_{\ell})^{\frac{\beta}{\beta+1}}$ regret bound using the same arguments of the previous sections for $\beta < 1$.
In fact, the only place where $\beta \geq 1$ was used is in the final step where we bound \Cref{eq:gap-indicator-regret-bd}.
In particular, when bounding the sum $\sum_{j=0}^J 2^{j(1-\beta)+2} \leq 4 (J+1) \cdot 2^{J(1-\beta)}$, for $\beta \geq 1$, $2^{J(1-\beta)}$ is a constant.
However, for $\beta < 1$, we may incur an extra $(t_{\ell+1} - t_{\ell}^{m_{\ell}})^{\frac{1-\beta}{\beta+1}}$ term in the final regret bound due to this term.
Thus, this argument would only yield a suboptimal regret bound.
Interestingly, \citet[cf. E.1.2]{bayati20} and \citet{kim24} also face this difficulty in bounding similar terms which leads to a suboptimal rate.

However, we can still attain a $\sqrt{t_{\ell+1} - t_{\ell}}$ per-episode regret bound using an altered subsampling rate $S_m := \ceil{  2^{m\cdot \beta/2}} \land 2^m$ which effectively ``targets'' a $\sqrt{t_{\ell+1} - t_{\ell}}$ regret rate.

Going into more detail, a first key fact, as observed in \citet{bayati20}, is that subsampling $S_m$ arms ensures an arm with gap $\tilde{O}((t_{\ell+1} - t_{\ell})^{-1/2})$.
This can be seen analogously to the proof of \Cref{lem:best-initial-arm} in \Cref{subsec:proof-best-initial-arm} where letting $\beta = 1$ and $|\mc{A}_{m_{\ell}}| \gtrsim 2^{ m_{\ell} \cdot \beta/2}$ in \Cref{subsec:proof-best-initial-arm} establishes that the best initial arm has gap at most $\log^3(T) \cdot 2^{- m_{\ell} \cdot \beta/2}$.

Then, the key fact to show will be an analogue of \Cref{lem:variation-bound}: with probability at least $1-4/T$, for all $\ell \in [ \hat{L} - 1]$:
\begin{equation}\label{eq:variation-ep-beta-small}
	\sum_{t=t_{\ell}}^{t_{\ell+1}-1} |\mu_t(a_{t-1}) - \mu_{t-1}(a_{t-1})| \geq \frac{\log^3(T)}{ \sqrt{ t_{\ell+1} - t_{\ell} } }.
\end{equation}
To show this, we'll essentially repeat the arguments of \Cref{subsec:proof-variation-bound} except specializing ``$\beta=1$'' to target a bound scaling like $\sqrt{t_{\ell+1} - t_{\ell}}$.
Note that using the scaling $|\mc{A}_m| \vee 2^{m/2}$ in the regret threshold for the changepoint detection test (\Cref{line:cpd-test} of \Cref{alg:blackbox}) is crucial here as $|\mc{A}_m| \propto 2^{m \cdot \beta/2} \ll 2^{m/2}$ in the case of $\beta < 1$.

First, if the dynamic regret of the best initial arm $\hat{a}_{\ell,m_{\ell}}$ is larger than $\frac{C_1}{4} 2^{m_{\ell} /2} \cdot \log^3(T)$, then using the previously established fact that $\delta_0(\hat{a}_{\ell,m_{\ell}}) \leq \log^3(T) \cdot 2^{-m_{\ell} \cdot \beta/2}$, we have that \Cref{eq:variation-ep-beta-small} holds.

Next, following the argument structure of \Cref{subsec:proof-variation-bound}, suppose $\sum_{t = t_{\ell}^{m_{\ell}}}^{t_{\ell+1}-1} \delta_t(\hat{a}_{\ell,m_{\ell}}) \leq \frac{C_1}{4} 2^{m_{\ell}/2} \log^3(T)$ but $\sum_{t=t_{\ell}^{m_{\ell}}}^{t_{\ell+1}-1} \mu_t(\hat{a}_{\ell,m_{\ell}}) - \mu_t(a_t) \geq \frac{C_1}{4} 2^{m_{\ell}/2} \log^3(T)$.
Then, we invoke \Cref{assumption:base} with $\alpha = \log^3(T) \cdot (t_{\ell+1} - t_{\ell}^{m_{\ell}})^{-1/2}$ and use the same dyadic gridding argument with
\[
	J := \ceil{ 1 \vee \log\left( \frac{(t_{\ell+1} - t_{\ell}^{m_{\ell}})^{1/2}}{42 \log^3(T)} \right) }.
\]
Then, observe the bounds
\begin{align*}
	\alpha^{-1} \cdot 2^{-(J+1)\beta} &\lesssim \frac{(t_{\ell+1} - t_{\ell}^{m_{\ell}})^{1/2}}{\log^3(T)} \cdot \frac{\log^3(T)}{(t_{\ell+1} - t_{\ell}^{m_{\ell}})^{1/2}} \lesssim O(1) \\
	J \cdot 2^{J(1-\beta)} \cdot |\mc{A}_{m_{\ell}}| &\lesssim 2^{m_{\ell} \cdot \beta/2} \cdot (t_{\ell+1} - t_{\ell}^{m_{\ell}})^{(1-\beta)/2} \leq 2^{m_{\ell}/2}.
\end{align*}
Thus, we can show $\sum_{t=t_{\ell}^{m_{\ell}}}^{t_{\ell+1}} \mu_t(\hat{a}_{\ell,m_{\ell}}) - \mu_t(a_t) < \frac{C_1}{4} 2^{m_{\ell}/2} \log^3(T)$ if \Cref{eq:variation-ep-beta-small} does not hold, which contradicts our earlier supposition.
This means \Cref{eq:variation-ep-beta-small} holds.

\subsection{Proof of \Cref{lem:best-initial-arm}}\label{subsec:proof-best-initial-arm}

Let $\mu_0(a)$ denote the initial mean reward of arm $a$.
By \Cref{ass:reservoir}, we have
\begin{align*}
	\mb{P}\left(\mu_0(\hat{a}_{\ell,m_\ell}) \leq 1 - \frac{  \log^3(T)}{ 2^{m_{\ell} \cdot \frac{1}{\beta+1}}} \right) &=      \mb{P}\left(\max_{a\in \mathcal{A}_{m_\ell}}\mu_0(a) \leq 1 - \frac{  \log^3(T)}{ 2^{m_{\ell} \cdot \frac{1}{\beta+1}}} \right)    \\
																	     &\leq \left( 1 - \mb{P}\left( \mu_0(a) > 1 - \frac{  \log^3(T)}{2^{m_{\ell} \cdot \frac{1}{\beta+1}}} \right) \right)^{|\mc{A}_{m_{\ell}}|} \\
																	     &\leq  \left( 1 - \kappa_1 \cdot \frac{ \log^{3\beta}(T)}{ 2^{m_{\ell} \cdot \frac{\beta}{\beta+1}}} \right)^{|\mc{A}_{{m_\ell}}|} \\
															&\leq \exp\left( - |\mc{A}_{{m_\ell}}| \cdot  \frac{\log(T)}{ 2^{m_{\ell} \cdot \frac{\beta}{\beta+1}}}  \right) \\
	&\leq \frac{1}{T^2},
\end{align*}
where the fourth line follows from assuming WLOG that $\log(T) \geq \kappa_1^{-\frac{1}{2\beta}}$ and $\beta\geq 1$, and
the last inequality from $|\mc{A}_{m_\ell}| \geq 2^{1 + m_{\ell} \cdot \frac{\beta}{\beta+1}}$.
%\com{interestingly, the proof here only requires the lower bound of \Cref{ass:reservoir} and not the upper bound}

\subsection{Proof of \Cref{lem:bin-count-bound}}\label{subsec:proof-bin-count}
%Let
%\[
			%\eta_j := \mb{P}( 2^{-(j+1)} - \delta_0(\hat{a}_{\ell,m_{\ell}}) \leq \Delta_i < 2^{-j} - \delta_0(\hat{a}_{\ell,m_{\ell}})),
%\]
By Freedman's inequality (\Cref{lem:freedman}), we have with probability at least $1/T^3$:
\[
	| N_{j,\ell} - \mb{E}[N_{j,\ell}] | \leq 3 \sqrt{ \mb{E}[ N_{j,\ell} ] \cdot \log(2T^3) } + 2\log(2T^3)
\]
Using AM-GM, this yields,
\[
	N_{j,\ell} \leq \frac{3\mb{E}[N_{j,\ell}]}{2} + \frac{13}{2} \log(2T^3).
\]
Finally,
\[
	\mb{P}( 2^{-(j+1)} - \delta_0(\hat{a}_{\ell,m_{\ell}}) \leq \Delta_i < 2^{-j} - \delta_0(\hat{a}_{\ell,m_{\ell}})) = \mb{P}( 2^{-(j+1)} \leq \delta_0(i) < 2^{-j}) \leq \mb{P}( 1 - 2^{-j} < \mu_0(i)) \leq \kappa_2 \cdot 2^{-j\beta}.
\]
Thus,
\[
	\mb{E}[ N_{j,\ell} ] \leq ( |\mc{A}_{m_{\ell}}| - 1) \cdot \kappa_2 \cdot 2^{-j \beta}.
\]
This shows \Cref{eq:bound-bin-1}.
\Cref{eq:bound-bin-2} is showed in a nearly identical manner. %bounding the expectation by $(|\mc{A}_{m_{\ell}}| - 1) \cdot \kappa_2 \cdot 2^{-(J+1)\cdot \beta}$.

\subsection{Proof of \Cref{lem:bigger}}\label{subsec:proof-bigger}

	Recall $\alpha :=  \frac{\log^3(T)}{(t_{\ell+1} - t_{\ell}^{m_{\ell}})^{\frac{1}{\beta+1}}} $ and let $\delta_0 := \delta_0(\hat{a}_{\ell,m_{\ell}})$ to ease notation.
	First, suppose $J = 1$.
	Then, by \Cref{lem:best-initial-arm}
	\[
		2^{-3} - \delta_0 > 4 \alpha \impliedby \frac{1}{8} > \frac{\log^3(T)}{2^{m_{\ell} \cdot \frac{1}{\beta+1}}} + \frac{4 \log^{3}(T)}{ (t_{\ell+1} - t_{\ell}^{m_{\ell}})^{\frac{1}{\beta+1}}}.
	\]
	Now, since $\beta \geq 1$ and $2^{m_{\ell}} \geq t_{\ell+1} - t_{\ell}^{m_{\ell}}$, it suffices to show
	\[
		\frac{1}{8} > \frac{5 \log^3(T)}{ (t_{\ell+1} - t_{\ell}^{m_{\ell}})^{\frac{1}{\beta+1}}} \iff (t_{\ell+1} - t_{\ell}^{m_{\ell}})^{\frac{1}{\beta+1}} > 40 \log^3(T).
	\]
	However, this last inequality is true from \Cref{eq:ep-large-enough-beta} of \Cref{fact:ep-dominate-log} for sufficiently large $C_1$. %$\rho$ (or sufficiently large $\kappa_2$ in \Cref{ass:reservoir}) w.r.t. $C_1,C_2$.
	%the inequality is true by letting $(t_{\ell+1} - t_{\ell}^{m_{\ell}})^{\frac{\beta}{\beta+1}} \gtrsim \log^3(T)$ (\Cref{eq:ep-large-enough-beta} of \Cref{fact:ep-dominate-log} for $\beta \geq 1$.
		%which can be ensured by assuming WLOG that $t_{\ell+1} - t_{\ell}^{m_{\ell}} \gtrsim \log^4(T)$ (\Cref{eq:ep-large-enough} of \Cref{fact:ep-dominate-log}) since $\frac{\beta+1}{\beta} \leq 2$ for $\beta \geq 1$.

Now, suppose $J > 1$.
	Rearranging the desired inequality we have
	\begin{align*}
		(\delta_0 + 4 \alpha)^{-1} > 2^{J+2} \impliedby
		%\log( (\delta_0 + \alpha)^{-1} ) &> 2 + \log\left( \frac{ (t_{\ell+1} - t_{\ell}^{m_{\ell}})^{\frac{1}{\beta+1}}}{ 8\log^{2}(T)} \right) \iff \\
		%\frac{1}{(\delta_0 + \alpha)} &>  2 \frac{ (t_{\ell+1} - t_{\ell}^{m_{\ell}})^{\frac{1}{\beta+1}}}{ \log^{2}(T)} \iff \\
	\frac{1}{4 (\delta_0 + 4\alpha)} > 4 \vee 2 \cdot \frac{(t_{\ell+1} - t_{\ell}^{m_{\ell}})^{\frac{1}{\beta+1}}}{ 42 \log^3(T) }
% \frac{ 8 \log^{3}(T)}{(t_{\ell+1} - t_{\ell}^{m_{\ell}})^{\frac{1}{\beta+1}}} \geq 4 \cdot ( 4\alpha + \delta_0),
	\end{align*}
	From \Cref{eq:ep-large-enough-beta} of \Cref{fact:ep-dominate-log}, we have for sufficiently large $C_1$:
	\[
		2 \cdot \frac{(t_{\ell+1} - t_{\ell}^{m_{\ell}})^{\frac{1}{\beta+1}}}{ 42 \log^3(T) } > 4.
	\]
	Thus, in light of \Cref{lem:best-initial-arm} and the definition of $\alpha$ we want to show
	\[
		\frac{ 21 \log^3(T)}{(t_{\ell+1} - t_{\ell}^{m_{\ell}})^{\frac{1}{\beta+1}}} > \frac{ 4 \log^3(T)}{(t_{\ell+1} - t_{\ell}^{m_{\ell}})^{\frac{1}{\beta+1}}} + \frac{16\log^3(T)}{(t_{\ell+1} - t_{\ell}^{m_{\ell}})^{\frac{1}{\beta+1}}},
	\]
	which is always true.

	%T, in light of \Cref{lem:best-initial-arm}, this last inequality is implied by:
	%\begin{equation}\label{eq:lower-bound-neg}
		%\frac{2 \log^{3}(T)}{(t_{\ell+1} - t_{\ell}^{m_{\ell}})^{\frac{1}{\beta+1}}} \geq  \frac{4 \log^{3}(T)}{(t_{\ell+1} - t_{\ell}^{m_{\ell}})^{\frac{1}{\beta+1}} } + \frac{\log^3(T)}{ 2^{m_{\ell} \cdot \frac{1}{\beta+1}}} .
	%\end{equation}
	%It remains to bound the two terms on the RHS separately.
	%First, we note that
	%\[
		%\frac{\log^3(T)}{ (t_{\ell+1} - t_{\ell}^{m_{\ell}})^{\frac{1}{\beta+1}}} \geq  \frac{4 \log^{3}(T)}{(t_{\ell+1} - t_{\ell}^{m_{\ell}})^{\frac{1}{\beta+1}}} \iff \log^{2 - \frac{1}{\beta+1}}(T) \geq 4.
	%\]
	%However, the last inequality is true since $\beta \geq 1 \implies \log(T) \geq 2 \geq 4^{\frac{\beta+1}{2\beta+1}} \implies \log(T) \geq 4^{\frac{1}{2 - (\beta+1)^{-1}}}$.
%
	%Next, to show \Cref{eq:lower-bound-neg}, it suffices to show:
	%\[
		%\frac{\log^2(T)}{(t_{\ell+1} - t_{\ell}^{m_{\ell}})^{\frac{1}{\beta+1}} } \geq \frac{ \log^2(T)}{ 2^{m_{\ell}\cdot \frac{1}{\beta+1}}} \iff \left( \frac{2^{m_{\ell}}}{t_{\ell+1} - t_{\ell}^{m_{\ell}}} \right)^{\frac{1}{\beta+1}} \geq 1.
	%\]
	%But, this last inequality is true since $2^{m_{\ell}} \geq t_{\ell+1} - t_{\ell}^{m_{\ell}}$ due to the doubling block structure.

\section{Elimination Algorithm Regret Analysis (Proofs of \Cref{thm:regret-episodes} and \Cref{cor:elim-bounds})}\label{app:sigshift-details}

Following the notation of \Cref{app:base}, we let $c_0,c_1,c_2,\ldots$ denote constants not depending on $T$ , $\kappa_1$, or $\kappa_2$ (\Cref{ass:reservoir}).

\subsection{Details for the Proof of \Cref{thm:regret-episodes}}

As in the proof of \Cref{thm:blackbox} in \Cref{app:blackbox}, we assume WLOG that $\log(T) \geq \kappa_1^{- (1 \land 2\beta)}$ or else we can bound the regret by a constant only depending on $\kappa_1$.

Following the proof outline of \Cref{subsec:regret-upper-bound-elim}, we have
%Let $\hat{L}$ be the number of episodes $\ho{t_{\ell}}{t_{\ell+1}}$ elapsed in \Cref{alg:elim} over $T$ rounds.
    %%
	%% We call an interval of $2^m$ rounds as used in \Cref{line:restart} of \Cref{alg:elim} for a fixed doubling horizon $2^m$ a {\em block}.
	%% Let $t_{\ell}^m := t_{\ell} + 2^{m-1} - 1$ be the starting round of the $m$-th block within the $\ell$-th episode
	%It suffices to bound the regret on block $\ho{t_{\ell}^m}{t_{\ell}^{m+1}}$ by $\tilde{O}(|\mc{A}_m|)$.
	%%Define a good concentration event $\mc{E}$ on which we have for all intervals of rounds $I$ and arms $a \in \cap_{t \in I} \mc{G}_t$:
	%%\[
		%%\left| \sum_{t \in I} \delta_t(a) - \IW_t(a) \right| \leq C \left( \log(T) + \sqrt{ \log(T) \sum_{t \in I} |\mc{G}_t|} \right),
	%%\]
	%%Freedman's inequality (\Cref{lem:freedman}) implies that the event $\mc{E}$ holds with probability at least $1-1/T$.
	%We first parametrize the subsampled arms $\mc{G}_{t_\ell^m}$ by $[K]$ for $K = \ceil{2^{(m+1) \cdot \frac{\beta}{\beta+1}}}$.
	%We will then follow a similar analysis as Section B.1 of \citet{suk22}, bounding the regret contributed by each arm $a \in [K]$.
%
	%We first transform the regret according to its expectation.
	%Note that, from the uniform sampling strategy,
	%\[
		%\mb{E}[ \delta_t(a_t) \mid \mc{H}_{t-1}] = \sum_{a \in \mc{G}_t} \frac{\delta_t(a)}{|\mc{G}_t|}.
	%\]
	%Next, note that
	%\[
		%\Var[ \delta_t(a_t) \mid \mc{H}_{t-1} ] \leq \mb{E}[ \delta_t^2(a_t) \mid \mc{H}_{t-1} ] \leq \mb{E}[ \delta_t(a_t) \mid \mc{H}_{t-1} ].
	%\]
	by Freedman's inequality (\Cref{lem:freedman}) with probability at least $1-1/T$:
	\begin{align*}
		\sum_{t = t_{\ell}^m}^{t_{\ell}^{m+1}-1} \delta_t(a_t) &\leq \sum_{t = t_{\ell}^m}^{t_{\ell}^{m+1}-1} \sum_{a \in \mc{G}_t} \frac{\delta_t(a)}{|\mc{G}_t|} + c_{8} \cdot \left( \log(T) + \sqrt{ \log(T) \sum_{t=t_{\ell}^m}^{t_{\ell}^{m+1}-1} \mb{E}[ \delta_t(a_t) \mid \mc{H}_{t-1}] } \right) \\
								       &\leq \frac{3}{2} \sum_{t=1}^{t_{\ell}^{m+1}-1} \sum_{a \in \mc{G}_t} \frac{\delta_t(a)}{|\mc{G}_t|} + c_{9}\log(T),
	\end{align*}
	where the second inequality uses AM-GM.
	%Now, it suffices to bound, for each $a \in [K]$,  $\sum_{t=t_{\ell}^m}^{t_{\ell}^{m+1}-1} \frac{\delta_t(a)}{|\mc{G}_t|} \cdot \pmb{1}\{ a \in \mc{G}_t\}$ and then sum over $a$.
	Now, recalling that $t^a$ is the last round in block $\ho{t_{\ell}^m}{t_{\ell}^{m+1}}$ that arm $a \in [K]$ is retained, we have: % byy our elimination rule (\Cref{line:elim} of \Cref{alg:elim}), we have
	\begin{align}\label{eq:max-Gs}
		\sum_{t=t_{\ell}^m}^{t^a} \frac{\delta_t(a)}{|\mc{G}_t|} &\leq \left( \max_{s \in [t_{\ell}^m,t^a]} \frac{1}{|\mc{G}_s|} \right) \sum_{t=t_{\ell}^m}^{t^a} \delta_t(a).
									   %&\leq \left( \max_{s \in [t_{\ell}^m,t^a-1]} \frac{1}{|\mc{G}_s|} \right) \sum_{t=t_{\ell}^m}^{t^a-1} \IW_t(a) %+  C \left( \log(T) + \sqrt{ \log(T) \sum_{t = t_{\ell}^m}^{t_{\ell}^{m+1} - 1} |\mc{G}_t| } \right).
	\end{align}
	We next again use Freedman's inequality (\Cref{lem:freedman}) to relate $\sum_{t\in I} \delta_t(a)$ to $\sum_{t\in I} \IW_t(a)$. For the variance bound, we have
    \begin{align*}
        \mathbb{E}[(\delta_t(a)-\IW_t(a))^2|\mc{H}_{t-1}]&\le \mathbb{E}[(\IW_t)^2(a)|\mc{H}_{t-1}]\cr &= \mb{E}[ |\mc{G}_t|^2 \cdot (1 - Y_t(a_t))^2 \cdot \pmb{1}\{a_t=a\} \mid \mc{H}_{t-1}] \\
								  &\leq |\mc{G}_t|^2 \cdot \mb{E}[ (1-Y_t(a)) \mid \mc{H}_{t-1}] \cdot \mb{E}[ \pmb{1}\{a_t=a\} \mid \mc{H}_{t-1}]\\
								  &= |\mc{G}_t|^2 \cdot \delta_t(a) \cdot \frac{1}{|\mc{G}_t|} \\
								  &= \delta_t(a) \cdot |\mc{G}_t|.
    \end{align*}
We also have $\max_{s\in I}|\delta_s(a)-\IW_s(a)|\le \max_{s\in I}|\mc{G}_s|$
and $\mb{E}[ \IW_t(a) \mid \mc{H}_{t-1}] = \delta_t(a)$.

	From the above and Freedman's inequality, we can show that w.p. at least $1-1/T$, for any interval $I\subseteq [T]$ on which arm $a$ is retained: % there exist positive constants $C,C'>0$  such that
    \begin{align}
	    \left|\sum_{t = t_{\ell}^m}^{t^a} \delta_t(a) - \IW_t(a) \right| &\le c_{10}\left(  \max_{t \in [t_{\ell}^m,t^a]} |\mc{G}_t| \log(T) + \sqrt{ \log(T) \sum_{t = t_{\ell}^m}^{t^a} \delta_t(a) \cdot |\mc{G}_t|}\right)  \nonumber\\
	    &\le  \half \sum_{t = t_{\ell}^m}^{t^a} \delta_t(a) + c_{11} \max_{t \in [t_{\ell}^m,t^a]} |\mc{G}_t| \log(T), \numberthis\label{eq:delta_error_bd}
    \end{align}
	where again we use AM-GM in the second inequality.
    % \jk{:Could you explain why the right hand side includes $\sqrt{|\mathcal{G}_t|}$ in the above?}
	% \joe{
	% 	The $\sqrt{|\mc{G}_t|}$ comes from the variance bound in Freedman.
	% 	In particular, note the IW estimator $\IW_t(a)$ has variance
	% 	\begin{align*}
	% 		\Var(\IW_t(a) \mid \mc{H}_{t-1}] &\leq \mb{E}[ \IW_t^2(a) \mid \mc{H}_{t-1} ] \\
	% 							  &= \mb{E}[ |\mc{G}_t|^2 \cdot (1 - Y_t(a_t))^2 \cdot \pmb{1}\{a_t=a\} \mid \mc{H}_{t-1}] \\
	% 							  &\leq |\mc{G}_t|^2 \cdot \mb{E}[ (1-Y_t(a)) \mid \mc{H}_{t-1}] \cdot \mb{E}[ \pmb{1}\{a_t=a\} \mid \mc{H}_{t-1}]\\
	% 							  &= |\mc{G}_t|^2 \cdot \delta_t(a) \cdot \frac{1}{|\mc{G}_t|} \\
	% 							  &= \delta_t(a) \cdot |\mc{G}_t|,
	% 	\end{align*}
	% 	where in the third line we use the fact that $1-Y_ta)$ and $\pmb{1}\{a_t=a\}$ are independent conditional on $\mc{H}_{t-1}$.
	% }
	Moving the $\half \sum_{t} \delta_t(a)$ to the other side, we get
	\[
		\sum_{t = t_{\ell}^m}^{t^a} \delta_t(a) \leq 2 \sum_{t=t_{\ell}^m}^{t^a} \IW_t(a) + c_{12} \log(T) \max_{t \in [t_{\ell}^m,t^a]} |\mc{G}_t|.
	\]
	Plugging the above into \Cref{eq:max-Gs}, we have
	\begin{align*}
		\sum_{t=t_{\ell}^m}^{t^a} \frac{\delta_t(a)}{|\mc{G}_t|} &\leq  \left( \max_{s\in [t_{\ell}^m,t^a]} \frac{c_{13}}{|\mc{G}_s|} \right) \left( \sum_{t=t_{\ell}^m}^{t^a} \IW_t(a) + \log(T) \max_{t \in [t_{\ell}^m,t^a]} |\mc{G}_t| \right)\cr &\leq c_{14}  \max_{s\in [t_{\ell}^m,t^a]} \frac{|\mc{A}_m|}{|\mc{G}_s|}  \log(T) ,
	\end{align*}
where the second inequality is from the elimination guarantee (\Cref{line:elim} of \Cref{alg:elim})  and  $\max_{t \in [t_{\ell}^m,t^a]} |\mc{G}_t| = |\mc{A}_m|$.

 %    \jk{ could you explain why the below inequality holds?}
	% \rev{
	% 	since we can only evict at most one arm per round, the elimination rounds $t^a$ for which $t^a < t_{\ell}^{m+1}$ can be ordered and the candidate arm set $\mc{G}_t$ will decrease in cardinality by exactly one on each such elimination round.
	% 	Thus, we can break up the following sum depending on which arms were eliminated before the end of the block:
    We next have
	\begin{align*}
		\sum_{a \in \mc{A}_m}  \max_{s \in [t_{\ell}^m,t^a-1]} \frac{1}{|\mc{G}_s|} &= \sum_{a \in \mc{A}_m} \frac{1}{|\mc{G}_{t^a-1}|} \cdot \pmb{1}\{ t^a < t_{\ell}^{m+1} - 1\} + \frac{1}{|\mc{G}_{t_{\ell}^{m+1}-1}|} \cdot \pmb{1}\{ t^a = t_{\ell}^{m+1} - 1\} \\
											    &\leq \sum_{i=1}^{|\mc{A}_m|} \frac{1}{i} + \frac{|\mc{G}_{t_{\ell}^{m+1}-1}|}{|\mc{G}_{t_{\ell}^{m+1}-1}|} \\
											    &\leq 1 + \log(|\mc{A}_m|),
	\end{align*}
	Now, combining the above with our previous bound and summing over arms $a$, we have:
	\[
		\sum_{a \in \mc{A}_m} \sum_{t=t_{\ell}^m}^{t_{\ell}^{m+1}-1} \frac{\delta_t(a)}{|\mc{G}_t|} \cdot \pmb{1}\{ a \in \mc{G}_t\} \leq c_{15} |\mc{A}_m| \cdot \log^2(T).
	\]
	We know that for any $m\neq m_\ell$, we have $|\mc{A}_m|=\ceil{ 2^{(m+1) \cdot \frac{\beta}{\beta+1}} \cdot \log(T) }$.
	We also have
	\[
		|\mc{A}_{m_\ell}|\le 2|\mc{A}_{m_\ell-1}|= 2\ceil{ (2 (t_{\ell}^{m_\ell+1} - t_{\ell}^{m_\ell}) )^{\frac{\beta}{\beta+1}} \log(T)  }.
	\]
	Then, summing $|\mc{A}_m| \log^2(T)$ over $m\in[m_\ell]$ and $\ell\in [\hat{L}]$ gives us the regret bound of order $\sum_{\ell=1}^{\hat{L}} (t_{\ell+1} - t_{\ell})^{\frac{\beta}{\beta+1}} \log^3(T)$.
	% Then, plugging in $|\mc{A}_m| \leq \ceil{\sqrt{2 (t_{\ell}^{m+1} - t_{\ell}) } }$ \jk{<-Where does this come from?} and

\subsection{Details for the Proof of \Cref{cor:elim-bounds}}

We define event $\mc{E}_5$ such that, for any $I\subseteq [T]$,
	\begin{equation}\label{eq:concentration-cor}
		\left|\sum_{t\in I} \delta_t(a) - \IW_t(a)\right| \le   \half \sum_{t\in I} \delta_t(a) + c_{16} {\log(T)}\max_{t\in I} |\mc{G}_t|,
	\end{equation}
	which holds with probability at least $1-1/T$, by the same reasoning as \eqref{eq:delta_error_bd}
We consider episode $\ell\in[\hat{L}]$ that terminates in a restart and let $m_\ell$ be the index of the last block in $\ell$-th episode and $t_\ell$ be the start time of the $\ell$-th episode.

\paragraph*{$\bullet$ Bounding the Number of Episodes.}
We claim the number of episodes is at most the number of significant shifts or $\hat{L}\le \tilde{L}+1$.
In particular, we prove that a significant shift must have occurred over some block in each episode concluding with a restart.
Let $t^a$ be the time when arm $a$ is eliminated.
For the $m_\ell$-th block $[t_\ell^{m_\ell}, t_{\ell+1})$, on $\mc{E}_5$, for each arm $a\in \mc{A}_{m_\ell}$ we have by \Cref{eq:concentration-cor} and \Cref{line:elim} of \Cref{alg:elim} with $C_2 > 0$ large enough::
\begin{align*}
	\sum_{s=t_\ell^{m_\ell}}^{t^a}\delta_t(a)\ge \frac{2}{3}\left( \sum_{s=t_\ell^{m_\ell}}^{t^a} \IW(a) - c_{18}\log(T)\max_{t\in [t_\ell^{m_\ell},t^a]}|\mc{G}_t|\right) \ge c_{17} |\mc{A}_{m_\ell}|\log(T).
\end{align*}
Then, for all $a\in \mc{A}_{m_\ell}$ where $|\mc{A}_{m_\ell}|=\ceil{2^{(m+1) \cdot \frac{\beta}{\beta+1}} \log(T) } \ge (t_{\ell+1}-t_\ell^{m_\ell})^{\frac{\beta}{\beta+1}} \log(T) \ge (t^a-t_\ell^{m_\ell})^{\frac{\beta}{\beta+1}} \log(T)$, we have
\begin{align}
	\sum_{s=t_\ell^{m_\ell}}^{t^a}\delta_t(a)> 3 (t_{\ell+1}-t_\ell^{m_\ell})^{\frac{\beta}{\beta+1}} \log^3(T) \ge 3 (t^a-t_\ell^{m_\ell})^{\frac{\beta}{\beta+1}} \log^3(T), \label{eq:gap-sum-lbd}
\end{align}
which implies that \Cref{eq:sig-regret} is violated for $\log(T) \geq \kappa_1^{-1}$,
 %which implies that there exists at least one significant shift in the $\ell$-th episode from Definition~\ref{defn:sig-shift}.
%Note that we require $\log(T) \geq 1/\kappa_1$ for \Cref{eq:sig-regret} to be violated.
meaning arm $a$ is unsafe.
By \Cref{defn:sig-shift}, a significant shift must have occurred within the block $\ho{t_{\ell}^{m_{\ell}}}{t_{\ell+1}}$.

Now, by considering that the $\hat{L}$-th episode may end by reaching the horizon $T$ rather than restarting, which does not ensure a significant shift, we conclude that $\hat{L}\le \tilde{L}+1$. Therefore, by using Jensen's inequality, we have w.p. at least $1 - T^{-1}$,
 \begin{align}
	 \sum_{\ell=1}^{\hat{L}} (t_{\ell+1}-t_\ell)^{\frac{\beta}{\beta+1}} \le \hat{L}^{\frac{1}{\beta+1}} T^{\frac{\beta}{\beta+1}} \le  (\tilde{L}+1)^{\frac{1}{\beta+1}} T^{\frac{\beta}{\beta+1}}.\label{eq:regret-bd-L}
 \end{align}
\paragraph*{$\bullet$ Bounding the Per-Episode Variation.}
Next, we show that, on event $\mc{E}_5$ where \Cref{eq:concentration-cor} holds, the total rotting variation over episode $\ho{t_{\ell}}{t_{\ell+1}}$ is at least
	\[
		\sum_{t=t_{\ell}}^{t_{\ell+1}-1} (\mu_{t-1}(a_{t-1}) - \mu_{t}(a_{t-1}))_+ \geq \frac{\log^3(T)}{ ( t_{\ell+1} - t_{\ell})^{\frac{1}{\beta+1}}}.
	\]
From \eqref{eq:gap-sum-lbd}, we have for all $a\in \mc{A}_{m_\ell}$, there exists $t'\in[t_{\ell}^{m_\ell},t^a]$ such that
\begin{align}
	\delta_{t'}(a)\ge \frac{3\log^3(T)}{ (t^a-t_\ell^{m_\ell})^{\frac{1}{\beta+1}} }\ge \frac{3\log^3(T)}{ (t_{\ell+1}-t_\ell^{m_\ell})^{\frac{1}{\beta+1}} }. \label{eq:gap-lbd}
\end{align}
	% Fix an episode $\ho{t_{\ell}}{t_{\ell+1}}$ that terminates in a restart and let $\ho{t_{\ell}^m}{t_{\ell}^{m+1}}$ denote the last block of this episode.
	% Then, on the good event $\mc{E}$ where concentration holds, we have that at least $2^{(m+1)/2} \geq \sqrt{ t_{\ell+1} - t_{\ell}}$ subsampled arms each have significant regret on some subinterval $I \subseteq \ho{t_{\ell}^m}{t_{\ell}^{m+1}}$.
	% This implies a significant shift (\Cref{defn:sig-shift}) must have occurred over the block.
	% Thus, the total number of elapsed episodes $\hat{L}$ is at most $\Lsig$ on event $\mc{E}$.
	% By Jensen, this implies the $\sqrt{(\Lsig +1)\cdot T}$ bound.

	%Since $|\mc{G}_t| \leq 2 \sqrt{t_{\ell+1} - t_{\ell}} + 1$, the first regret bound of order $(\Lsig + 1)^{1/4} \cdot T^{3/4}$ then follows by Jensen's inequality.
Recall $\mu_0(a)$ is the initial mean reward of arm $a$.
Let $\hat{a}_{\ell,m}$ be the best arm in terms of initial reward value $\mu_0(\cdot)$ among the arms $\mc{A}_m$ subsampled for block $\ho{t_{\ell}^m}{t_{\ell}^{m+1}}$.
We have
	\begin{align*}
		\mb{P}\left(\mu_0(\hat{a}_{\ell,m_\ell}) < 1 - \frac{2\log^3(T)}{2^{m_{\ell} \cdot \frac{1}{\beta+1} } } \right) &=      \mb{P}\left(\max_{a\in \mathcal{A}_{m_\ell}}\mu_0(a) < 1 - \frac{2\log^3(T)}{2^{m_{\ell} \cdot \frac{1}{\beta+1} }} \right)    \\
																 &\leq \left( 1 - \kappa_1 \frac{2\log^{3\beta}(T)}{2^{m_{\ell} \cdot \frac{\beta}{\beta+1} }} \right)^{|\mc{A}_{{m_\ell}}|} \\
																 &\leq \exp\left( - |\mc{A}_{{m_\ell}}| \cdot \frac{2 \kappa_1 \log^{3\beta}(T)}{ 2^{m_{\ell} \cdot \frac{\beta}{\beta+1} } } \right) \\
		&\leq \frac{1}{T^2},
	\end{align*}
	where the last inequality follows from $\log(T) \geq \kappa_1^{-\frac{1}{2\beta}}$ and $|\mc{A}_{m_\ell}| \geq 2^{m_{\ell} \cdot \frac{\beta}{\beta+1}} \cdot \log(T)$.
	%\com{the reason I feel $|\mc{A}_m|$ needs to contain a $\log(T)$ factor is because of the above argument, particularly in the case where $\beta$ is small (i.e., $\beta < 1/2$ so that $2\beta<1$).
	%If $\beta$ is very small, then $\kappa_1\cdot \log^{2\beta}(T)$ could be very small which will ruin the exponential tail probability bound above.
	%There are two options: we can have $|\mc{A}_m|$ contain a $\log(T)$ factor or we can replace $\log^2(T)$ term in $\frac{2\log^2(T)}{2^{m_{\ell} \cdot \frac{1}{\beta+1}}}$ with $\log^{2/\beta}(T)$.
	%However, using a $\log^{2/\beta}(T)$ factor means our regret needs to scale like $\log^{2/\beta}(T)$ which could possibly be a very large log factor for $\beta$ small.
%
	%Note that for $\beta=1$ or $\beta=0.8$ and $\kappa_1=1$ (as is the case in the experiments?) none of these considerations really matter and $|\mc{A}_m|$ pretty much doesn't need to contain a $\log(T)$ factor.
%}
	Then, we define $\mc{E}_6$ to be the corresponding high-probability event
	\[
		\{ \forall \ell \in [\hat{L}]: 1-\mu_0(\hat{a}_{\ell,m_\ell})\le 2\log^3(T) \cdot 2^{-m_\ell \cdot \frac{1}{\beta+1}} \},
	\]
	which holds with at least probability of $1-1/T$.
On $\mc{E}_6$, we have
\begin{align}
	\mu_0(\hat{a}_{\ell,m_\ell}) \geq 1 - \frac{2\log^3(T)}{2^{m_{\ell} \cdot \frac{1}{\beta+1}}} \geq 1 - \frac{2\log^3(T)}{ (t_{\ell+1} - t_{\ell}^{m_\ell})^{\frac{1}{\beta+1}}}.\label{eq:goodness-best-subsmapled-arm-eli}
\end{align}
From \eqref{eq:gap-lbd} and \eqref{eq:goodness-best-subsmapled-arm-eli}, we can observe  that
the mean reward of arm $\hat{a}_{\ell,m_\ell}$ must have moved by amount at least
\[
	\frac{\log^3(T)}{ (t_{\ell+1} - t_{\ell}^{m_\ell})^{\frac{1}{\beta+1}} } \geq \frac{\log^3(T)}{ (t_{\ell+1} - t_{\ell})^{\frac{1}{\beta+1}}},
\]
over the course of the block, implying
	\[
		\mu_0(\hat{a}_{\ell,m_\ell})-\mu_{t'}(\hat{a}_{\ell,m_\ell})\ge \frac{\log^3(T)}{ (t_{\ell+1} - t_{\ell}^{m_\ell})^{\frac{1}{\beta+1} } } \geq \frac{\log^3(T)}{ (t_{\ell+1} - t_{\ell})^{\frac{1}{\beta+1}} }.
	\]
	Since the adversary can only modify the rewards of an arm $a_t$ after the round $t$ it is played, the movement must have occurred on rounds where $\hat{a}_{\ell,m_\ell}$ was chosen by the agent.
	Thus,
	\[
		\sum_{t=t_{\ell}}^{t_{\ell+1}-1} (\mu_{t-1}(a_{t-1}) - \mu_{t}(a_{t-1}))_+ \ge \sum_{t=t_{\ell}}^{t_{\ell+1}-1} (\mu_{t-1}(\hat{a}_{\ell,m_\ell}) - \mu_{t}(\hat{a}_{\ell,m_\ell}))_+ \ge \mu_0(\hat{a}_{\ell,m_\ell})-\mu_{t'}(\hat{a}_{\ell,m_\ell}) \geq \frac{\log^3(T)}{ (t_{\ell+1} - t_{\ell})^{\frac{1}{\beta+1}} }.
	\]
Next, this lower bound on the per-episode variation gives us, in an identical manner \Cref{subsec:proof-sum}, a cumulative regret bound:
   \[
	   \sum_{\ell=1}^{\hat{L}} (t_{\ell+1} - t_{\ell})^{\frac{\beta}{\beta+1}} \log^3(T) \leq \log^3(T) \cdot \left( V_R^{\frac{1}{\beta+2}} T^{\frac{\beta+1}{\beta+2}} + T^{\frac{\beta}{\beta+1}} \right).
\]
\section{Verifying \Cref{assumption:base} for UCB}\label{app:base}

Here, we show the UCB algorithm satisfies \Cref{assumption:base}.
The proof will mostly follow the standard proof for showing the classsical logarithmic regret bound \citep[e.g.][Section 7.1]{lattimore}, with some small modifications to account for the mild corruption (\Cref{defn:mild}).

We first present a variant of the UCB1 Algorithm of \citet{auer2002finite}.

\IncMargin{1em}
\begin{algorithm2e}%\small
	\caption{{Variant of UCB1 (Algoritm 3 of \citet{lattimore})}}
 \label{alg:ucb}
 \bld{Input}: number of arms $K$, horizon $T$. \\
 \For{$t=1,\ldots,T$}{
	 Let $N_{t-1}(a) := \sum_{s=1}^{t-1} \pmb{1}\{a_s=a\}$. \\
	 Let $\UCB_a(t-1) := \begin{cases} \infty & N_{t-1}(a) = 0\\
		 \frac{1}{N_{t-1}(a)} \sum_{s=1}^{t-1} \pmb{1}\{ a_s = a\} \cdot Y_s(a) + \sqrt{\frac{2 \log(T)}{N_{t-1}(a)}} & \text{otherwise}
	 \end{cases}$. \\
	 Play arm $a_t = \argmax_{a \in [K]} \UCB_a(t-1)$.\\
 }
\end{algorithm2e}
\DecMargin{1em}

\begin{thm}\label{thm:ucb}
	\Cref{alg:ucb} satisfies \Cref{assumption:base}. %under mildly corrupt environments (\Cref{defn:mild}).
\end{thm}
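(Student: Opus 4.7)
The plan is to adapt the standard logarithmic-regret analysis of UCB to accommodate the $\alpha$-mild corruption, essentially by reducing to the reference reward profile $\{\mu(a)\}$ and paying a slack of $O(\alpha)$ for this reduction. Let $a^* \in \arg\max_a \mu(a)$ and $\Delta(a) := \mu(a^*) - \mu(a)$. First I would reduce the stated regret to the reference regret: since $\mu_s(a) - \mu_s(a_s) \leq (\mu(a)+\alpha) - (\mu(a_s)-\alpha) \leq \Delta(a_s) + 2\alpha$, we get $\max_a \sum_{s=1}^t \delta_s(a,a_s) \leq \sum_{s=1}^t \Delta(a_s) + 2t\alpha$, so it suffices to bound $\sum_s \Delta(a_s)$.

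Next I would set up the concentration event. Let $N_t(a)$ count pulls of $a$ through round $t$, let $\hat{\mu}_t(a)$ be the empirical mean, and let $\tilde{\mu}_t(a) := \frac{1}{N_t(a)}\sum_{s\leq t}\mathbf{1}\{a_s=a\}\mu_s(a)$. Because $Y_s(a)-\mu_s(a)$ forms a bounded martingale difference sequence, Azuma--Hoeffding together with a union bound over $a\in \mathcal{A}_0$ and the $\leq T$ possible values of $N_t(a)$ yields, after inflating the log-factor in UCB by an absolute constant if needed, an event $\mathcal{E}$ of probability $\geq 1-1/T$ on which $|\hat{\mu}_t(a)-\tilde{\mu}_t(a)| \leq \sqrt{2\log(T)/N_t(a)}$ for all $t,a$. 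Combined with $|\tilde{\mu}_t(a)-\mu(a)|\leq \alpha$ from Definition 1, this gives $|\hat{\mu}_t(a)-\mu(a)|\leq \sqrt{2\log(T)/N_t(a)}+\alpha$.

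Then I would run the usual UCB comparison on $\mathcal{E}$. We have $\mathrm{UCB}_{a^*}(t-1) \geq \tilde{\mu}_{t-1}(a^*) \geq \mu(a^*) - \alpha$, while $\mathrm{UCB}_a(t-1) \leq \mu(a) + \alpha + 2\sqrt{2\log(T)/N_{t-1}(a)}$. If $a_t = a$ with $\Delta(a)>2\alpha$, comparing these inequalities yields $N_{t-1}(a) \leq 8\log(T)/(\Delta(a)-2\alpha)^2$. For $\Delta(a)\geq 4\alpha$ we have $\Delta(a)-2\alpha \geq \Delta(a)/2$, so accounting for the pull that achieves this tight bound, $N_t(a)\Delta(a) \leq \Delta(a) + 32\log(T)/\Delta(a)$.

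Finally, decomposing over arms gives
\[
\sum_{s=1}^t \Delta(a_s) \leq \sum_{a: \Delta(a)\geq 4\alpha}\!\!\bigl(\Delta(a)+\tfrac{32\log(T)}{\Delta(a)}\bigr) + \sum_{a:\Delta(a)<4\alpha} 4\alpha\, N_t(a) \leq 4\alpha t + C_0\!\sum_{i:\Delta_{(i)}\geq 4\alpha}\!\frac{\log(T)}{\Delta_{(i)}},
\]
where the counting term $\Delta(a)\leq 1 \leq \log(T)/\Delta(a)$ is absorbed into the sum. Adding the $2t\alpha$ reference slack yields the desired $6t\alpha + C_0\sum_i \log(T)/\Delta_{(i)}\mathbf{1}\{\Delta_{(i)}\geq 4\alpha\}$ bound uniformly in $t\in[T]$, which is exactly Assumption~\ref{assumption:base}.

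The main obstacle is the bookkeeping around the corruption: the slack $\alpha$ in the concentration bound means the standard UCB identifiability argument only works for arms with $\Delta(a) > 2\alpha$, and the clean $32\log(T)/\Delta(a)$ bound requires the further buffer $\Delta(a)\geq 4\alpha$. Arms with $\Delta(a)<4\alpha$ must be handled separately by the trivial $4\alpha$-per-pull bound, which combines with the reference reduction slack to produce the $6t\alpha$ term in the statement.
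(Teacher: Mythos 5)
Your proposal is correct and follows essentially the same route as the paper's proof: reduce to the reference reward profile at a cost of $2t\alpha$, use Azuma--Hoeffding to concentrate the empirical means around the time-averaged (hence $\alpha$-close to reference) means, show that any arm with $\Delta(a) \geq 4\alpha$ is pulled at most $O(\log(T)/\Delta(a)^2)$ times via the standard UCB comparison, and absorb the small-gap arms into the linear-in-$\alpha$ term. The only cosmetic difference is that you argue forward (a pull of $a$ forces $N_{t-1}(a)$ to be small) while the paper argues the contrapositive (if $N_t(a)$ is large then $\mathrm{UCB}_a < \mathrm{UCB}_1$); these are the same argument.
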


\begin{proof}
Our goal is to establish a high-probability regret bound over $t \leq T$ rounds.
First, using \Cref{defn:mild}, we may write the regret as
\[
	\max_{a \in \mc{A}_0} \sum_{s=1}^t \mu_s(a) - \mu_s(a_s) \leq \max_{a \in \mc{A}_0} \sum_{s=1}^t \mu(a) - \mu(a_s) + 2t \cdot \alpha.
\]
Recall $N_t(a) := \sum_{s=1}^t \pmb{1}\{ a_s=a\}$.
Let $\Delta_a$ denote the gap of arm $a \in \mc{A}_0$.
Then, we can decompose the regret based on whether the gap of each arm is large or small and write:
\begin{equation}\label{eq:ucb-bound}
	\max_{a \in \mc{A}_0} \sum_{s=1}^t \mu(a) -\mu(a_s) \leq \sum_{a \in \mc{A}_0} \Delta_a \cdot N_t(a) \cdot \pmb{1}\left\{ \Delta_a > 4\alpha \right\} + 4 t \cdot \alpha.
\end{equation}
It then remains to bound $N_t(a)$ w.h.p for each $a \in \mc{A}_0$ such that $\Delta_a > \alpha$.
WLOG, suppose arm $a=1$ is the best arm among the arms in $\mc{A}_0$, which we'll index by the set $[ |\mc{A}_0|]$.

We claim with probability at least $1-1/T$, for $t$ such that $N_t(a) > \ceil{\frac{32\log(T)}{\Delta_a^2} }$:
\begin{equation}\label{eq:ucb-claim}
	\UCB_a(t) < \UCB_1(t).
\end{equation}
This will allow us to conclude by bounding $N_t(a) \leq \ceil{\frac{32\log(T)}{\Delta_a^2} }$ in \Cref{eq:ucb-bound}.

Let $\hat{\mu}_t(a) := \sum_{s=1}^t Y_s(a) \cdot \pmb{1}\{a_s = a\}$ and let $\ol{\mu}_t(a) := \sum_{s=1}^t \mu_s(a) \cdot \pmb{1}\{ a_s = a\}$.
Then, by Azuma-Hoeffding inequality and a union bound we have
\[
	\mb{P}\left( \forall s \in [t]: |\hat{\mu}_s(a) - \ol{\mu}_s(a)| \geq \sqrt{ \frac{N_s(a) \cdot \log(2T^2)}{2} } \right) \leq 1/T.
\]
Going forward, suppose the above concentration bound holds.

Now, to show the claim, we first note for $t$ such that $N_t(a) > \ceil{\frac{32\log(T)}{\Delta_a^2} }$ and $a \neq 1$ such that $\Delta_a   > 4\alpha$:
\begin{align*}
	\UCB_a(t) &= \hat{\mu}_t(a) + \sqrt{\frac{ 2\log(T)}{N_t(a)} } \\
		  &\leq \ol{\mu}_t(a) + \sqrt{\frac{2\log(T)}{N_t(a)} } + \sqrt{\frac{\log(2T^2)}{2N_t(a)}} \\
		  &\leq \mu(a) + \sqrt{\frac{2 \log(T) }{N_t(a)} } + \sqrt{\frac{\log(2T^2)}{2N_t(a)}} + \alpha \\
		  &< \mu(a) + \frac{\Delta_a}{4} + \frac{\Delta_a}{4} + \frac{\Delta_a}{4} \\
		  &= \mu(a) + \frac{3\Delta_a}{4}.
\end{align*}
where the third inequality is by \Cref{defn:mild}.
Meanwhile,
\begin{align*}
	\UCB_1(t) &= \hat{\mu}_t(1) + \sqrt{\frac{2\log(T)}{N_t(a)} } \\
		  &\geq \ol{\mu}_t(1) + \sqrt{ \frac{2\log(T)}{N_t(a)} } - \sqrt{ \frac{\log(2T^2)}{2N_t(a)}} \\
		  &> \mu(1) - \alpha \\
		  &\geq \mu(1) - \Delta_a/4.
\end{align*}
Thus, claim \Cref{eq:ucb-claim} is shown.
\end{proof}

\begin{rmk}
	Although not shown here for the sake of redundancy, a very similar regret analysis as \Cref{thm:ucb} shows the Successive Elimination algorithm \citep{even-dar} also satisfies \Cref{assumption:base}.
\end{rmk}

%\subsection{Successive Elimination Algorithm of \citet{even-dar}}

%\input{tighter}
%\input{lemmas}

\end{document}